\documentclass{article}

\usepackage{microtype}
\usepackage{graphicx}
\usepackage{subcaption}
\usepackage{booktabs} % for professional tables
\usepackage{tabularx}
\usepackage{makecell}
\usepackage{array}
\usepackage{comment}

\usepackage{hyperref}

\usepackage[accepted]{icml2026}

\usepackage{amsmath}
\usepackage{amssymb}
\usepackage{mathtools}
\usepackage{amsthm}
\usepackage{enumitem}
\setlist[itemize]{topsep=0pt, itemsep=4pt, parsep=0pt, partopsep=0pt}
\usepackage[capitalize,noabbrev]{cleveref}

\theoremstyle{plain}
\newtheorem{theorem}{Theorem}[section]

\newtheorem{lemma}[theorem]{Lemma}

\theoremstyle{definition}

\newtheorem{assumption}[theorem]{Assumption}
\theoremstyle{remark}
\newtheorem{remark}[theorem]{Remark}

\newcommand{\IE}{\mathbb{E}}
\newcommand{\IP}{\mathbb{P}}
\newcommand{\R}{\mathbb{R}}

\newcommand{\Dcal}{\mathcal{D}}

\newcommand{\Mcal}{\mathcal{M}}
\newcommand{\Pcal}{\mathcal{P}}
\newcommand{\Scal}{\mathcal{S}}
\newcommand{\Zcal}{\mathcal{Z}}
\newcommand{\IS}{\mathbb{S}}

\newcommand{\AS}{A(\IS)}

\usepackage[textsize=tiny]{todonotes}

\icmltitlerunning{Stability beyond bounded differences: sharp generalization bounds under finite $L_p$ moments}

\begin{document}

\twocolumn[
  \icmltitle{Stability beyond Bounded Differences: Sharp Generalization Bounds under Finite $L_p$ Moments}

  % It is OKAY to include author information, even for blind submissions: the
  % style file will automatically remove it for you unless you've provided
  % the [accepted] option to the icml2026 package.

  % List of affiliations: The first argument should be a (short) identifier you
  % will use later to specify author affiliations Academic affiliations
  % should list Department, University, City, Region, Country Industry
  % affiliations should list Company, City, Region, Country

  % You can specify symbols, otherwise they are numbered in order. Ideally, you
  % should not use this facility. Affiliations will be numbered in order of
  % appearance and this is the preferred way.
  \icmlsetsymbol{equal}{*}

  \begin{icmlauthorlist}
    \icmlauthor{Qianqian Lei}{UChi}
    \icmlauthor{Soham Bonnerjee}{UChi}
    \icmlauthor{Yuefeng Han}{ND}
    \icmlauthor{Wei Biao Wu}{UChi}
    
    %\icmlauthor{}{sch}
    %\icmlauthor{}{sch}
  \end{icmlauthorlist}

  \icmlaffiliation{UChi}{Department of Statistics, The University of Chicago, Chicago, IL 60637, USA.}
  \icmlaffiliation{ND}{Department of Applied and Computational Mathematics and Statistics, University of Notre Dame, Notre Dame, IN 46556, USA}

  \icmlcorrespondingauthor{Yuefeng Han}{yuefeng.han@nd.edu}

  % You may provide any keywords that you find helpful for describing your
  % paper; these are used to populate the "keywords" metadata in the PDF but
  % will not be shown in the document
  \icmlkeywords{Machine Learning, ICML}

  \vskip 0.3in
]

% this must go after the closing bracket ] following \twocolumn[ ...

% This command actually creates the footnote in the first column listing the
% affiliations and the copyright notice. The command takes one argument, which
% is text to display at the start of the footnote. The \icmlEqualContribution
% command is standard text for equal contribution. Remove it (just {}) if you
% do not need this facility.

% Use ONE of the following lines. DO NOT remove the command.
% If you have no special notice, KEEP empty braces:
\printAffiliationsAndNotice{}  % no special notice (required even if empty)
% Or, if applicable, use the standard equal contribution text:
% \printAffiliationsAndNotice{\icmlEqualContribution}

\begin{abstract}
  While algorithmic stability is a central tool for understanding generalization of learning algorithms, existing high-probability guarantees typically rely on uniform boundedness or sub-Gaussian/sub-Weibull tail assumptions, which can be overly restrictive for modern settings with heavy-tailed or unbounded losses. We develop a stability-based framework that requires only a finite $L_p$ moment condition. Our first contribution is sharp concentration inequalities for functions of independent random variables under $L_p$ constraints, extending McDiarmid's bounded-differences techniques beyond the classical regime. Leveraging these results, we derive sharp high-probability generalization bounds across a range of learning paradigms, including empirical risk minimization, transductive regression, and meta-learning. These guarantees show that $L_p$ stability suffices for robust generalization even when boundedness fails, substantially weakening the standard assumptions in the stability literature.
\end{abstract}

\section{Introduction}

Algorithmic stability has emerged as a fundamental tool for the theoretical analysis of machine learning algorithms, providing a principled framework for establishing generalization bounds. Seminal work by \citet{Bousquet_2002} established that uniform stability is sufficient to bound the discrepancy between empirical and true risk for symmetric learning algorithms. Unlike complexity-based measures such as VC-dimension \citep{Vapnik_1971, Blumer_1989, Vapnik_1991, Vapnik_1999, Shwartz_2010, Shalev-Shwartz_Ben-David_2014} and Rademacher complexity \citep{Bartlett_Radenachar_2003, Bartlett_Rademachar_2005, Koltchinskii_2006}, stability analysis quantifies how perturbations in training data affect the output hypothesis, thereby providing insight into why machine learning algorithms trained on finite samples generalize to unseen data. Subsequent work has extended these ideas to broader settings, including stochastic optimization \citep{Hardt_2016, Kuzborskij_2018, Lei_2020}, federated learning \citep{Sun_2024, Liu_2025}, and reinforcement learning \citep{Tamar_2021, smith_2022}. A significant portion of this literature derives high-probability control over generalization error via uniform stability \citep{Bousquet_2002, Maurer_2005, Warnke_2016, Bousquest_2020, Zhou_2023} or its relaxations through distribution-dependent surrogates such as sub-Gaussian or sub-exponential conditions \citep{Kontorovich_2014, Maurer_2021, Li_2023, Li_2024, Fan_2024,han2023high,han2025estimation}.

%Though powerful, the above approach fails to capture the complexities of modern data science and deep learning. For example, recent empirical and theoretical studies \citep{Catoni_2011, Mendelson_2014, Simsekli_2019, Martin_2021} highlight the inadequacy of sub-gaussian based approaches for light and heavy-tailed data. Instead, these studies show a prevalence of a power law i.e. polynomial tail in the large deviation results, reminiscent of the behavior of a $t$-distribution. To capture such behavior, in this work we provide a further relaxation of the stability assumptions to $L_p$-\textit{stability}, which only controls the $p$-th moment instead of constant bounds or specifying the tail behavior such as in sub-Gaussian or sub-Weibull. Invoking the classical Nagaev-type inequalities \cite{nagaev1979} as a powerful alternative for establishing concentration bounds when classical exponential concentration is unattainable, we not only provide a sharp concentration for generalization error, but also apply it to a wide range of applications,  including empirical risk minimization, transductive regression, and meta-learning. Our contributions can be summarized as follows.

Though powerful, these approaches fail to capture the complexities of modern data science and deep learning. Recent empirical and theoretical studies \citep{Catoni_2012, Mendelson_2014, Simsekli_2019, Martin_2021} highlight the inadequacy of sub-Gaussian assumptions for data with heavy tails. Instead, these studies reveal a prevalence of power law behavior--that is, polynomial tail in the large deviation regimes--reminiscent of the $t$-distribution. To accommodate such behavior, in this work, we introduce a further relaxation to $(L_p,\beta )$-\textit{Lipschitz stability}, which requires only a finite $p$-th moment rather than uniform constant bounds or tail specifications such as sub-Gaussian or sub-Weibull conditions. Drawing on the classical Nagaev-type inequalities \citep{nagaev1979} as a powerful alternative when exponential concentration is unattainable, we derive novel sharp concentration bounds for generalization error and apply them to a range of settings, including empirical risk minimization, transductive learning, and meta-learning.

% Invoking the classical Nagaev-type inequalities \cite{nagaev1979} as a powerful alternative for establishing concentration bounds when classical exponential concentration is unattainable, we not only provide a sharp concentration for generalization error, but also recover a \textit{duality} in the consequent bounds. 
% Our result effectively splits the deviation probability into a "Gaussian core" for small fluctuations and a "heavy-tail" component that accounts for the influence of extreme values. This dual-regime analysis is a particular consequence of the weaker $L_p$ stability assumptions, as it is usually unobserved in other stricter notions of stability. Our result allows for the derivation of sharp, non-asymptotic bounds that remain valid even when the underlying distribution lacks higher-order exponential moments, making it suitable to myriad applications. We summarize our main contributions in the following.  
\subsection{Main Contributions}
Our contributions can be summarized as follows.
\begin{itemize}[leftmargin = *]
\item In Section~\ref{se:general-thm}, we establish new \emph{sharp} concentration inequalities for general functions of independent random variables under $L_p$ moment. The resulting bounds exhibit a clear two-regime structure: a sub-Gaussian tail governing moderate deviations and a polynomial correction capturing rare large fluctuations. Informally: %, our main result is
\begin{theorem}[Theorem \ref{theorem_highq}, informal]
Let $x_1,...,x_n$ be independent random variables, and let $x_i'$ be an independent copy of $x_i$. Suppose there exist functions $f$ and $\{H_i\}_{i=1}^n$ such that, almost surely, $|f(x_1,\ldots,x_n) -  f(x_1, \ldots, x_{i-1}, x_i', x_{i+1}, \ldots, x_n)|\leq H_i(x_i, x_i')$. If $H_i(x_i, x_i')$ has a finite $p$-th moment for $p\ge 2$, then for $y>0$,
    \begin{align*}
     & \ \ \IP(|f(x_1,\ldots, x_n)- \IE[f(x_1,\ldots, x_n)]|> y) \\
     &\lesssim \frac{\sum_{i=1}^n \IE[|H_i|^p]}{y^p}+ \exp\Big(- \frac{y^2}{\sum_{i=1}^n \IE[|H_i|^2]} \Big).
    \end{align*} 
\end{theorem}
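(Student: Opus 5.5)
We decompose $f-\IE f$ into a martingale and then apply a Fuk--Nagaev type inequality for martingales. Let $\mathcal{F}_i=\sigma(x_1,\ldots,x_i)$ and $D_i=\IE[f\mid\mathcal{F}_i]-\IE[f\mid\mathcal{F}_{i-1}]$, so that $f-\IE f=\sum_{i=1}^n D_i$.

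\textbf{Step 1: control the differences.} Writing $f^{(i)}$ for $f$ with $x_i$ replaced by $x_i'$, we have $D_i=\IE[f-f^{(i)}\mid\mathcal{F}_i]$, since $\IE[f^{(i)}\mid\mathcal{F}_i]=\IE[f\mid\mathcal{F}_{i-1}]$. Hence $|D_i|\le \IE[H_i(x_i,x_i')\mid x_i]=:h_i(x_i)$. The $h_i$ are functions of independent variables, and by Jensen $\IE h_i^p\le \IE|H_i|^p$ and $\IE h_i^2\le \IE|H_i|^2$. The same argument applied to the conditional variance gives
\[
\IE[D_i^2\mid\mathcal{F}_{i-1}]\le \IE[h_i(x_i)^2]\le \IE|H_i|^2=:\sigma_i^2 .
\]
Because $x_i$ is independent of $\mathcal{F}_{i-1}$, this conditional variance bound is deterministic. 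This is the key point: the predictable quadratic variation is bounded almost surely by $V=\sum_i\sigma_i^2$.

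\textbf{Step 2: Fuk--Nagaev for martingales.} We use the martingale version of the Fuk--Nagaev inequality (Fuk 1973, or Haeusler's inequality). For a martingale with $\sum_i\IE[D_i^2\mid\mathcal{F}_{i-1}]\le V$ almost surely and truncation level $v>0$,
\[
\IP\Big(\Big|\sum D_i\Big|>y\Big)\le \IP\big(\max_i|D_i|>v\big)+2\exp\Big(-\frac{y^2}{2(V+yv/3)}\Big)
\]
up to a Bernstein-type adjustment. Directly: we truncate $D_i$ at $v$, recenter (the recentering error is at most $\sum_i\IE[|D_i|\mathbf{1}_{|D_i|>v}]\le v^{1-p}\sum_i\IE h_i^p$), and apply Freedman's inequality to the truncated martingale.

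Set $v=y/(cp)$ and bound $\IP(\max_i h_i>v)\le v^{-p}\sum_i\IE|H_i|^p$. This gives the polynomial term $(cp)^p\sum_i\IE|H_i|^p/y^p$. The recentering error is at most $y/2$ unless $\sum_i\IE|H_i|^p/y^p$ is at least of order one, in which case the bound is trivial.

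\textbf{Step 3: the obstacle.} The hardest part is that Freedman with $v\asymp y/p$ only gives $\exp(-y^2/(2V+2yv/3))$. When $yv\gg V$ this degrades to $e^{-cp}$, which is not of the desired form. The standard fix (Fuk--Nagaev / Nagaev 1979) is to use the sharper bound
\[
\exp\Big(-\frac{y}{v}\log\big(1+\tfrac{yv}{V}\big)\Big)\cdot e^{y/v}
\]
(Bennett in place of Bernstein) together with the choice $v=y/(cp)$. In the regime $yv\gtrsim V$ one gets $(V/(yv))^{cp}\lesssim(V/y^2)^{p}$ up to constants. Since $V^{p/2}$ can be compared to $\sum_i\IE|H_i|^p$ only when $p\le2$, this must instead be absorbed by splitting into cases. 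If $y^2\le C_pV\log(\cdot)$, the Gaussian term dominates. Otherwise one uses the Bennett refinement with $v$ tied to $\sum_i\IE h_i^p$, following Nagaev's original proof.

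All constants depend on $p$; that dependence is hidden in $\lesssim$. A two-sided bound follows by applying the argument to $-f$.
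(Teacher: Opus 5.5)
Steps~1--2 follow the paper's skeleton: the Doob decomposition with $D_i=\mathbb{E}[f-f^{(i)}\mid\mathcal{F}_i]$, deterministic conditional moment bounds through $h_i(x_i)$, truncation at a level $\asymp y$, Markov for the large increments, and a recentering error of order $v^{1-p}M_p$ with $M_p=\sum_i\mathbb{E}|H_i|^p$. You also locate the difficulty correctly. But Step~3 is never carried out, and the mechanism you sketch cannot close it.

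Any exponential inequality that sees the truncated martingale only through its increment bound $v=y/k$ and predictable variance $V=\sum_i\mathbb{E}|H_i|^2$ leaves, for $y^2\gg kV$, a residual of order $(CkV/y^2)^k$. This holds for Freedman, Bennett, and Haeusler alike, and the order is attained, up to $k$-dependent constants, by Poisson-type increments with the same $v$ and $V$. A concrete case shows the failure:
\begin{itemize}
\item Take $\mathbb{E}|H_i|^2\asymp\mathbb{E}|H_i|^p\asymp1$ with $p>2$, so $V\asymp M_p\asymp n$, and set $y^2=Kn\log n$.
\item The target $M_p/y^p+e^{-cy^2/V}\asymp n^{1-p/2}(K\log n)^{-p/2}+n^{-cK}$ is polynomially small in $n$.
\item Your residual $(Ck/(K\log n))^k$ is only polylogarithmically small for fixed $k=cp$.
\item Letting $k$ grow instead inflates the truncation term to $k^pM_p/y^p$.
\end{itemize}
So no case split in $y$ rescues a variance-only bound; this is the same $V^{p/2}/y^p$ loss the paper's remark on Markov-type arguments describes. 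The truncation level must stay $\asymp y$; what has to involve $M_p$ is the exponential-moment bound itself.

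If your phrase ``Bennett refinement with $v$ tied to $\sum_i\mathbb{E}h_i^p$'' means feeding $M_p$ into the MGF, that is the right idea, but it is the whole content of the theorem and must be supplied. It is the Fuk--Nagaev estimate, the paper's Lemma~\ref{Lemma_mgf_bound_p>2}: if $U\le b$, then
\[
\log\mathbb{E}e^{tU}\le t\,\mathbb{E}U+\frac{e^pt^2}{2}\,\mathbb{E}U^2+\frac{e^{tb}-1-tb}{b^p}\,\mathbb{E}|U|^p\,\mathbf{1}\{t>p/b\}.
\]
\begin{itemize}
\item For $t\le p/b$, one has $e^{tu}-1-tu\le\frac{e^p}{2}t^2u^2$ for all $u\le b$. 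So on this range the increments have a genuinely sub-Gaussian MGF with proxy $e^p\,\mathbb{E}U^2$, with no Bernstein-type $yv$ penalty.
\item For $t>p/b$, the mass on $\{u>p/t\}$ is charged to $\mathbb{E}|U|^p$, because $(e^{tu}-1-tu)/u^p$ is increasing there.
\end{itemize}
Your Step~1 observation extends verbatim from order $2$ to order $p$: $\mathbb{E}[|D_i|^p\mid\mathcal{F}_{i-1}]\le\mathbb{E}h_i^p\le\mathbb{E}|H_i|^p$, up to $2^p$ after recentering. So the lemma applies conditionally with a deterministic right-hand side and iterates over $i$; here $b=2v$ bounds the recentered truncated increments.

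One then splits $ty/2=\alpha ty/2+\gamma ty/2$ in the Chernoff exponent. Compare the Gaussian minimizer $t_1=\alpha y/(2e^pV)$, the minimizer $t_2=b^{-1}\log(1+c\,\gamma yb^{p-1}/M_p)$ of the $p$-th-moment part, and the threshold $p/b$. In every configuration the bound is either $\exp(-\alpha^2y^2/(8e^pV))$ or $(CM_p/(y\,b^{p-1}))^{\gamma y/(2b)}$. Choosing $b\asymp y$ so that $\gamma y/(2b)=1$ turns the latter into $C'M_p/y^p$; the paper takes $\alpha=2/(p+2)$ and $\gamma=p/(p+2)$. This is exactly what your proposal defers to ``Nagaev's original proof'', and without it the argument does not produce the stated two-term bound.
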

We also develop the result for the complementary heavy-tailed regime $p\in(1,2)$ in Theorem~\ref{thm:nagaev-p<2}. This two-regime behavior arises directly from working under weaker $L_p$ moment assumptions rather than bounded or sub-Weibull conditions, yielding non-asymptotic guarantees that remain valid without exponential moments.

% Firstly, in Section \ref{se:general-thm}, we provide \textit{sharp} and  \textit{novel} concentration inequalities for general functions of independent random variables under $L_p$ stability. We recover a \textit{duality} in the consequent bounds. Our result effectively splits the deviation probability into a ``Gaussian core" for small fluctuations and a ``heavy-tail" component that accounts for the influence of extreme values. Informally, our main result can be summarized as.
% \begin{theorem}[Theorem \ref{theorem_highq}, informal]
%     If $X_i$'s are i.i.d. with finite $p$-th moment for $p\ge 2$, and if $f$ and $\{H_i\}$ are functions such that $|f(X_1,\ldots, X_i, \ldots, X_n) -  f(X_1, \ldots, X_{i-1}, X_i', X_{i+1}, \ldots, X_n)|\leq H_i(X_i, X_i')$ almost surely, $X_i, X_i'$ are i.i.d, then 
%     \begin{align*}
%      & \ \ \IP(|f(X_1,\ldots, X_n)- \IE[f(X_1,\ldots, X_n)]|> y)\nonumber\\&\lesssim \frac{\sum_{i=1}^n \IE[|H_i|^p]}{y^p}+ \exp\Big(- \frac{y^2}{\sum_{i=1}^n \IE[|H_i^2|]} \Big).
%     \end{align*} 
% \end{theorem}
% We also provide an accompanying result in Theorem \ref{thm:nagaev-p<2} for $p\in (1,2)$. The dual-regime analysis in these results is a particular consequence of the weaker $L_p$ stability assumptions, as it is usually unobserved in other stricter notions of stability. Our result allows for the derivation of sharp, non-asymptotic bounds that remain valid even when the underlying distribution lacks higher-order exponential moments, making it suitable to myriad applications.

\item Using these probabilistic tools, we derive high-probability generalization bounds for three learning paradigms: empirical risk minimization (Section~\ref{subsec:ERM}), transductive regression (Section~\ref{subsec:Trans}), and meta-learning (Section~\ref{subsec:meta}). For each setting, we formulate an appropriate $(L_p,\beta )$-Lipschitz stability notion and obtain corresponding two-regime bounds. The transductive setting requires concentration inequalities for sampling without replacement, developed in Theorem~\ref{cor:symmetric_func}. Compared to bounded uniform stability, the weaker $L_p$ framework relaxes tail assumptions but imposes stronger decay requirements on the relevant stability notion to achieve vanishing generalization bounds. %All the theoretical results in Sections \ref{se:general-thm} and \ref{se:appl} are proved in Appendices \ref{se:proof:B} and \ref{se:proof-C}.

%Apart from being of independent interest, our probabilistic tool has key applications in learning theory. In particular, we analyze three pertinent examples: Empirical risk minimization (Section \ref{subsec:ERM}), Transductive regression algorithms (Section \ref{subsec:Trans}), and Meta-learning (Section \ref{subsec:meta}). For each of these applications, our Theorem \ref{theorem_highq} enables us to provide sharp guarantees under weakened stability conditions. The corresponding theoretical results are obtained in Theorems \ref{thm:emp_loo}, \ref{thm:trans_risk_bound} and \ref{thm:lp_sample_level_meta} respectively. In particular, the problem of Transductive regression necessitates some preparatory results beyond Theorem \ref{theorem_highq}, which are provided in Theorem \ref{thm:sample_w/o_replacement_nageav} and Lemmas \ref{lem:diff_one_phi}-\ref{lem:expectation_phi}.  

\item Our numerical experiments confirm the sharpness of the theory and, in particular, highlight the necessity of the polynomial term in our generalization bounds. The results clearly exhibit the predicted transition between the sub-Gaussian and polynomial-tail regimes. %Additional experimental results along with more details can be found in Section \ref{se:simu-app}.

%To demonstrate the tightness of our bounds, and in particular the necessity of the polynomial term in our rates, we present numerical experiments in Section \ref{se:simu-ERM}. The results (Figure \ref{fig:two-images}) corroborate the duality underlying our theory. Overall, our analysis yields sharp generalization guarantees under weakened stability conditions, and the experiments support these findings.
\end{itemize}

\subsection{Related Work}
\textbf{Algorithmic Stability and Generalization.} Algorithmic stability offers an algorithm-dependent route to generalization bounds, initiated by \citet{Bousquet_2002} who combined uniform stability and bounded losses via McDiarmid's inequality \citep{McDiarmid_1989}; later works sharpened rates \citep{Feldman_2018, Feldman_2019, Bousquest_2020, klochkov_2021, Zhou_2023} and connected stability to optimization \citep{Hardt_2016, Kuzborskij_2018, Lei_2020}. Since expectation bounds can mask non-negligible probabilities of large deviations for a single learned model, high-probability guarantees are needed \citep{Maurer_2005, Bousquest_2020, Yuan_2023}, but typically rely on bounded differences. For unbounded stability, existing approaches either tolerate rare large deviations under high-probability boundedness \citep{Kutin_2002, Kutin_2002b, Warnke_2016}, or replace worst-case bounds by distributional surrogates such as sub-Gaussian/sub-exponential assumptions \citep{Kontorovich_2014, Maurer_2021, Li_2023, Li_2024, Fan_2024}, often via Efron-Stein-type inequalities \citep{Moustafa_2019}, or provide moment bound \citep{Yuan_2023b}. Another route avoids exponential moments by robustifying empirical risk (or gradients) using robust mean estimators: Catoni-type M-estimators for robust ERM \cite{Catoni_2012, Brownless_2015}, median-of-means extensions \cite{Hsu_2016,Lecue_2020}, and truncation/clipping methods such as robust gradient descent \cite{Holland_2019}, which can yield exponential-type deviation bounds under low-order moments but modify the learning rule. Our $L_p$ stability framework instead \textit{relaxes tail assumptions directly}, accommodating heavy-tailed regimes where exponential moments may not exist, without changing the learning rule.

\textbf{Transductive Regression.} In transductive learning, the learner observes unlabeled test inputs in advance and predicts only on this fixed set \citep{Vapnik_1982}. \citet{Cortes_2006} provided systematic VC dimension bounds for transductive regression, and \citet{Cortes_2008} derived algorithm-dependent generalization bounds via stability analysis. We extend this framework to the $L_p$ setting, requiring new concentration inequalities for sampling without replacement.

\textbf{Meta-Learning.} Meta-learning studies how training on past tasks improves performance on new tasks \citep{bengio1990learning, thrun_pratt1998_learning_to_learn, Maurer_2005, Hoskins2008LearningTL}. Theoretical work focuses on convergence \citep{zhou2019efficient, ji2020convergence, mishchenko2023convergence} and generalization \citep{baxter2000model, ben2003exploiting, du2021fewshot}; see \citet{Wang_2024} for a review. Stability-based analyses were initiated by \citet{Maurer_2005} and recently advanced by \citet{chen2020closer, fallah2021generalization, al2021data, guan2022fine} under various meta-stability notions. \citet{Wang_2024} proposed uniform meta-stability coupling task-level and within-task perturbations with high-probability bounds. Our $L_p$ framework generalizes these results under weaker moment assumptions.

\section{Main Results} \label{se:general-thm}

In this section, as a key technical contribution, we present sharp concentration inequalities that substantially broaden the scope of algorithmic stability. Classical high-probability stability bounds rely on uniform stability \citep{Bousquet_2002}, which requires uniform boundedness of the loss, or sub-Gaussian/sub-Weibull tail assumptions. We replace these restrictions with a single, weaker requirement: a finite $L_p$ moment for the one-sample perturbation. This relaxation is essential in modern applications, such as deep learning with heavy-tailed gradients or regression over unbounded domains, where the effect of replacing one training sample is not uniformly bounded but can be controlled in $L_p$ norm. Our framework thus provides a powerful theoretical tool for analyzing generalization in learning algorithms.

\subsection{Preliminaries}

We first introduce the mathematical framework used throughout the paper. Let $x_1,\dots,x_n$ be independent random variables in a measurable space $\mathcal X$, and let $f:\mathcal X^n\to\mathbb R$ be a measurable function. We study the concentration of $f(x_1,\dots,x_n)$ around its expectation under an $L_p$-type moment condition. Fix $i\in[n]$ and define $g=f(x_1,\dots,x_i,\dots,x_n)$ and $g_i=f(x_1,\dots,x_i',\dots,x_n)$, where $x_i'$ is an independent copy of $x_i$. Assume that for some nonnegative measurable functions $H_i:\mathcal X^2\to\mathbb R_+$,
\vspace{-1ex}
\begin{equation}
|g-g_i|\le H_i(x_i,x_i'). \label{eq:bounded-diff}
\end{equation}
Each $H_i$ is \emph{deterministic}; randomness arises only through the pair $(x_i,x_i')$. We interpret $H_i$ as measuring the sensitivity to resampling the $i$-th coordinate: the one-point perturbation bound \eqref{eq:bounded-diff} is a Lipschitz-type condition with respect to this coordinate-wise distance, consistent with viewing stability assumptions as Lipschitz continuity in an appropriate metric structure \cite{Li_2024}.

When $H_i$ is a metric (or is dominated by one), it induces the $\ell_1$ product metric on $\mathcal X^n$, $\rho^{(n)}(x,x') := \sum_{i=1}^n H_i(x_i,x_i'),$
yielding a geometric interpretation: modifying a single coordinate incurs cost $H_i(x_i,x_i')$, while multi-coordinate perturbations accumulate additively, as for Lipschitz functions on product spaces \cite{Kontorovich_2014}. In unbounded settings, $(\mathcal X,H_i)$ may have infinite diameter, so we instead quantify the \emph{distribution-dependent scale} via an independent pair $(x_i,x_i')$:\vspace{-1ex}
\begin{equation*}
    \|g-g_i\|_p< \|H_i(x_i,x_i')\|_p  <\infty.
\end{equation*}
This $L_p$-diameter is an analogue of Orlicz/sub-Weibull diameters, except we require only a finite $p$-th moment rather than a $\psi_\alpha$-norm bound \cite{Kontorovich_2014,Li_2024}; see also Remark \ref{rem:weibull}. We refer to this as \emph{$L_p$-Lipschitz stability}, which replaces bounded differences with an integrable, distribution-dependent surrogate.
\begin{remark}\label{rem:weibull}
The $\psi_\alpha$ Orlicz norm is defined as 
%\vspace{-1ex}
\begin{align}\label{norm:orlicz}
\|X\|_{\psi_\alpha} = \inf \{ \theta > 0 : E[e^{(|X|/\theta)^\alpha}]\leq 2 \}.
\end{align} 
This class includes sub-exponential $(\alpha=1)$ and sub-Gaussian $(\alpha=2)$ variables, but still requires an exponential moment, implying that all $L_p$ moments exist and scale as $O(p^{1/\alpha})$. In contrast, we assume only a finite $L_p$ moment for a fixed $p$, which is strictly weaker. For instance, heavy-tailed variables such as Pareto distributions with shape parameter $p$ can have finite $L_p$ moments yet fail every sub-Weibull condition due to the nonexistence of the moment-generating function. Therefore, our framework substantially weakens the tail requirements on the stability variable $H_i$, yielding high-probability guarantees in regimes where sub-Weibull-based stability is inapplicable. 
\end{remark}

\subsection{Concentration Inequalities}

With $L_p$-Lipschitz stability in place, we now state our main concentration bounds. We begin with the case $p\ge 2$, where the tail behavior is captured by a hybrid inequality combining a polynomial (moment) term with a sub-Gaussian term. This form is well suited to settings where the algorithm is typically stable but may exhibit occasional large deviations due to heavy-tailed data, contamination, or irregular loss landscapes.

\begin{theorem}\label{theorem_highq}  
Let $x_1,\dots,x_n$ be independent random variables taking values in a measurable space $\mathcal{X}$, and let $f:\mathcal{X}^n\to\mathbb{R}$ be a measurable function such that for some nonnegative measurable functions $H_i:\mathcal{X}^2\to\mathbb{R}_+$, \eqref{eq:bounded-diff} holds. 
If $\|H_i(x_i,x_i')\|_p<\infty$ for some $p\ge 2$ and all $i$, then for all $z>0$,
\vspace{-4ex}

{\small
\begin{align} \label{eq:theorem_highq}
&\mathbb{P}(|f(x_1,x_2,\dots,x_n)-\mathbb{E}[f(x_1,x_2,\dots,x_n)]|>z) \notag\\
\leq& c_1\frac{\sum_{i=1}^n\mathbb{E}|H_i(x_i,x_i')|^p}{z^p} +2\exp\left\{-\frac{c_2 z^2}{ \sum_{i=1}^n\mathbb{E}|H_i(x_i,x_i')|^2}\right\} ,
\end{align}
}
where $c_1=4(4\frac{p+2}{p})^p$ and $c_2=(2(p+2)^2 e^p)^{-1}$ depend only on $p$.  
\end{theorem}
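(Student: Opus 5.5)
We plan to reduce the statement to a Fuk--Nagaev inequality for martingales through the Doob martingale decomposition. Let $\mathcal F_k=\sigma(x_1,\dots,x_k)$ and $D_k=\IE[f\mid\mathcal F_k]-\IE[f\mid\mathcal F_{k-1}]$, so that $f-\IE f=\sum_{k=1}^n D_k$. Since $x_k'$ is an independent copy of $x_k$, we can write $D_k=\IE[g-g_k\mid \mathcal F_k]$, where the expectation integrates over $x_k'$ and $x_{k+1},\dots,x_n$. Assumption \eqref{eq:bounded-diff} then gives the pointwise domination
\[
|D_k|\le \IE\bigl[H_k(x_k,x_k')\,\big|\,x_k\bigr]=:h_k(x_k).
\]
This bound depends only on $x_k$, so it is independent of $\mathcal F_{k-1}$. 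By Jensen's inequality, $\IE h_k^p\le \IE|H_k|^p$ and $\IE h_k^2\le\IE|H_k|^2$. Also $\IE[D_k^2\mid\mathcal F_{k-1}]\le \IE h_k(x_k)^2\le \IE|H_k|^2$, and this conditional variance bound is deterministic. This determinism is the key structural gain: the predictable quadratic variation of the martingale is bounded almost surely by $\sigma^2:=\sum_k\IE|H_k|^2$.

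Next we prove a martingale Fuk--Nagaev bound by truncation. Fix $z>0$ and a level $y=z/r$ with $r$ of order $(p+2)/p$; the constant $4(p+2)/p$ in $c_1$ suggests this choice. Write $D_k=D_k\mathbf 1\{h_k(x_k)\le y\}+D_k\mathbf 1\{h_k(x_k)>y\}$. The large part is controlled crudely. On the event $\bigcup_k\{h_k>y\}$, whose probability is at most $\sum_k\IE h_k^p/y^p$ by Markov's inequality, we simply discard it. The complement still contains the terms $\IE[D_k\mathbf 1\{h_k>y\}\mid\mathcal F_{k-1}]$ needed to recenter the truncated part. Each of these is bounded by $\IE[h_k\mathbf 1\{h_k>y\}]\le \IE h_k^p/y^{p-1}$, so their sum contributes at most $\sum\IE h_k^p/y^{p-1}$. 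When this sum is at most $z/2$ it is absorbed; otherwise $\sum_k\IE h_k^p/z^p\gtrsim y^{-p}z^{-p+p}$, and the polynomial term on the right-hand side already exceeds $1$, so the bound holds trivially.

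The recentred truncated martingale has increments bounded by $2y$ and conditional variance bounded by $\sigma^2$. Freedman's (Bennett-type) inequality therefore gives
\[
\IP\Bigl(\sum_k\tilde D_k>z/2\Bigr)\le \exp\Bigl(-\frac{z^2}{8\sigma^2+\tfrac{8}{3}yz}\Bigr).
\]
The main obstacle is that this Bernstein denominator carries the term $yz\asymp z^2/r$, which makes the exponent only a constant rather than $z^2/\sigma^2$. To fix this we use Bennett's form $\exp\{-\frac{z}{2y}\log(1+\frac{zy}{2\sigma^2})\}$ and then split into two regimes. If $zy\le e^{p}\sigma^2$ (up to constants), we linearize the logarithm, which yields $\exp(-cz^2/\sigma^2)$ with $c\asymp ((p+2)^2e^p)^{-1}$. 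This explains $c_2$. Otherwise $\log(1+zy/\sigma^2)\ge p$, and Bennett's bound reduces to $(\sigma^2/(zy))^{z/(2y)}$. Choosing $z/(2y)\ge p/2+1$ (this is where $r\approx (p+2)$ enters), we need this to be dominated by $\sum\IE h_k^p/z^p$. I would try to obtain this by the standard Fuk--Nagaev trick of comparing $\sigma^2$ with the $p$-th moments through the truncated second moments, $\IE[h_k^2\mathbf 1\{h_k\le y\}]$. Failing that, I would bound the truncated martingale's $r$-th exponential moment iteratively in $k$, using $e^{\lambda x}\le 1+\lambda x+\frac{e^{\lambda y}-1-\lambda y}{y^2}x^2$ together with the deterministic conditional variance. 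Finally, we apply the same argument to $-f$ and sum the two sides, which accounts for the factors $2$ and $4$ in $c_1$ and in the exponential term.
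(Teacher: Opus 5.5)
Your skeleton matches the paper's: Doob martingale, the domination $|D_k|\le h_k(x_k)$, truncation at a level $y\asymp z$, recentering with $|\mu_k|\le \mathbb{E}h_k^p/y^{p-1}$, Markov for the discarded part, and dismissing the case $\sum_k|\mu_k|>z/2$ because the polynomial term then exceeds $1$.

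The gap is the step that must produce the polynomial term from the truncated martingale. Freedman/Bennett uses only the increment bound $2y$ and the conditional variance $\sigma^2$. With $y=z/r$, its output in your second regime is of order $(1+z^2/(2r\sigma^2))^{-r/2}$. For $p>2$ this cannot be dominated by the right-hand side of \eqref{eq:theorem_highq}. Take identically distributed $H_k$ and $z^2=K\sigma^2$. The Bennett term is then a fixed negative power of $K$, independent of $n$. By contrast, $c_1M_p/z^p\asymp n^{1-p/2}K^{-p/2}\to 0$ as $n\to\infty$, and $2e^{-c_2K}$ is exponentially small in $K$.

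Neither fallback repairs this. Comparing truncated second moments with $p$-th moments goes the wrong way for $p>2$: on $\{h\le y\}$ one has $h^2\ge y^{2-p}h^p$. The reverse inequality $h^2\mathbf 1\{h\le y\}\le y^{2-p}h^p$ holds only for $p\le 2$, which is exactly what the paper exploits in Theorem~\ref{thm:nagaev-p<2}. Iterating $e^{\lambda x}\le 1+\lambda x+\frac{e^{\lambda y}-1-\lambda y}{y^2}x^2$ is just Bennett's MGF bound again, in which the growing factor $e^{\lambda y}$ still multiplies $\sigma^2$.

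The missing ingredient is the Fuk--Nagaev MGF estimate of Lemma~\ref{Lemma_mgf_bound_p>2}. For $U\le b$ with zero conditional mean, split $\mathbb{E}[e^{tU}-1-tU]$ at $U=p/t$ instead of bounding the remainder by a multiple of $U^2$ everywhere. On $\{tU\le p\}$, use $e^x-1-x\le e^px^2/2$ for $x\le p$. On $\{p/t<U\le b\}$, use that $u\mapsto(e^{tu}-1-tu)/u^p$ is increasing for $tu>p$. This gives
\[
\log\mathbb{E}e^{tU}\le \frac{e^p t^2}{2}\,\mathbb{E}U^2+\frac{e^{tb}-1-tb}{b^p}\,\mathbb{E}|U|^p\,\mathbf 1\{t>p/b\},
\]
so the exponential growth in $t$ is charged to $M_p/y^p$ rather than to $\sigma^2/y^2$.

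With $b=2y$, the paper splits $z/2=\alpha z/2+\gamma z/2$. It then compares $t_1=\alpha z/(2e^pM_2)$ with $t_2=\frac{1}{2y}\log\bigl(1+\gamma zy^{p-1}/(4M_p)\bigr)$ and with $p/(2y)$. If $t_1\le\max\{t_2,p/(2y)\}$, evaluating the Chernoff exponent at $t_1$ gives $\exp\{-\alpha^2z^2/(8e^pM_2)\}$. When $t_1>p/(2y)$, the extra $M_p$-term paired with $-\gamma t_1z/2$ is nonpositive by convexity, since $t_1\le t_2$. Otherwise the paper evaluates at $t=\max\{t_2,p/(2y)\}<t_1$. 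There $\frac{e^pt^2}{2}M_2\le\frac{\alpha zt}{4}$, so the variance term is absorbed into the linear term. What remains is $\bigl(\gamma zy^{p-1}/(4M_p)\bigr)^{-\gamma z/(4y)}$, which involves $M_p$ only. Choosing $\gamma=p\alpha/2$ (so $\alpha=2/(p+2)$) and $y=\gamma z/4$ turns this into $M_p/y^p$. This yields exactly $c_1=4(4(p+2)/p)^p$ and $c_2=\alpha^2/(8e^p)$.
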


\begin{remark}
Theorem~\ref{theorem_highq} admits an equivalent high-probability form. For any $\delta\in(0,1)$, with probability at least
$1-\delta$,
%\vspace{-1ex}
{\small
\begin{align}
\label{eq:FukNagaev_hp}
&\big|f(x_1,\dots,x_n)-\mathbb{E}f(x_1,\dots,x_n)\big| \notag\\
\leq & c_2^{-1/2}\sqrt{\log\frac{4}{\delta}}\left(\sum_{i=1}^n \|H_i(x_i,x_i')\|_{2}^{2}\right)^{1/2}\notag\\
& + (2c_1)^{1/p}\,\delta^{-1/p}\left(\sum_{i=1}^n \|H_i(x_i,x_i')\|_{p}^{p}\right)^{1/p}.
\end{align}
}
Using $\big(\sum_{i=1}^n \|H_i\|_p^p\big)^{1/p}\le n^{1/p}\max_i\|H_i\|_p$,
the second term in \eqref{eq:FukNagaev_hp} simplifies to a max-type bound:
%\vspace{-1ex}
{\small
\begin{align}
&\big|f(x_1,\dots,x_n)-\mathbb{E}f(x_1,\dots,x_n)\big|\notag\\
\leq& c_2^{-1/2}\sqrt{\log\frac{4}{\delta}}\left(\sum_{i=1}^n \|H_i(x_i,x_i')\|_{2}^{2}\right)^{1/2}\notag\\
&+ (2c_1)^{1/p}\,n^{1/p}\delta^{-1/p}\max_{1\leq i\leq n}\|H_i(x_i,x_i')\|_{p}.\notag
\end{align}
}
The deviation thus decomposes into a sub-Gaussian term governed by $\sum_i \|H_i(x_i,x_i')\|_2^2$ and a heavy-tail correction controlled by the $p$-th moments.
\end{remark}

\begin{remark}
    A closely related result is Corollary 29 of \citet{Li_2024_JMLR}, whose statement shares the same Gaussian-plus-polynomial form as Theorem~\ref{theorem_highq}. However, its proof relies on a direct application of Markov's inequality, which is not sharp enough to establish the conclusion uniformly over all deviation levels $z>0$ under a single fixed $p$-th moment condition.
    %The proof first uses Corollary~28 and Markov's inequality to obtain a $t^{-p}$ bound, and then, in the variance-dominated case, chooses a special threshold $t\asymp \sqrt p(\sum_i\sigma_i^2)^{1/2}$ at which this polynomial bound is $e^{-p}$. Such a one-threshold calibration cannot by itself be rewritten as a sub-Gaussian tail for all $t>0$. To do so one would need moment bounds over varying orders $p$, or an additional argument not supplied there. Thus Corollary~29 should not be viewed as having already established Theorem~2.2 under the same assumptions. 
    The issue can already be seen in the simplest example. Let $X_i\stackrel{\mathrm{i.i.d.}}{\sim}N(0,1)$ and $f(X_1,\ldots,X_n)=\sum_{i=1}^n X_i$. A direct Markov argument based on the fixed $p$-th moment gives $\mathbb P\left(\left|\sum_{i=1}^n X_i\right|>t\right)\leq\frac{\mathbb E|\sum_{i=1}^n X_i|^p}{t^p}=\frac{\mathbb E|N(0,1)|^p\, n^{p/2}}{t^p}$, which is worse than the polynomial term in Theorem~\ref{theorem_highq} by a factor \(n^{p/2-1}\) for \(p>2\) and fails to recover the sub-Gaussian component entirely. In contrast, our proof employs a truncation and martingale argument to derive the Nagaev-type bound directly and uniformly over all $z>0$.
\end{remark}

\begin{remark}[Sharpness]
    Throughout the paper, we use the term ``sharp'' in the tail-order sense, rather than in a minimax sense. For Theorem~\ref{theorem_highq}, sharpness refers to the precise large-deviation form under a single fixed finite-$L_p$ replace-one condition: the deviation probability consists of a sub-Gaussian moderate-deviation component and a polynomial large-deviation correction. The polynomial term is not an artifact of the proof; it reflects the unavoidable large-deviation behavior allowed by finite-moment increments.

    This interpretation is consistent with the classical Fuk--Nagaev phenomenon for sums of independent heavy-tailed random variables, where the correct non-asymptotic tail form contains both a Gaussian term and a one-large-jump polynomial term. Exact moderate and large deviation results for linear processes further show that this two-regime structure can hold at the level of asymptotic equality, not merely as an upper-bound phenomenon; see, for instance, \citet{Peligrad_2014}. Our theorem extends this tail-order structure from explicit sums to general functions satisfying a replace-one \(L_p\) envelope condition. The bounds in Section~\ref{se:appl} are sharp in the same sense: once the corresponding stability assumption reduces the generalization gap to Theorem~\ref{theorem_highq}, the resulting high-probability bounds inherit this Gaussian-plus-polynomial tail-order structure.
\end{remark}

We also provide an accompanying result when only lower moments exist.

\begin{theorem}\label{thm:nagaev-p<2}
Let $x_1,\dots,x_n$ be independent random variables taking values in a measurable space $\mathcal{X}$, and let $f:\mathcal{X}^n\to\mathbb{R}$ be a measurable function. For each $i\in[n]$, define $g=f(x_1,\dots,x_{i-1},x_i,x_{i+1},\dots,x_n)$ and $g_i=f(x_1,\dots,x_{i-1},x_i',x_{i+1},\dots,x_n)$, where $x_i'$ is an independent copy of $x_i$. Assume there exist nonnegative measurable functions $H_i:\mathcal{X}^2\to\mathbb{R}_+$, such that %\eqref{eq:bounded-diff} holds
\[ 
|g-g_i|\le H_i(x_i,x_i'),\quad i=1,\dots,n,
\] 
with $\|H_i(x_i,x_i')\|_p<\infty$ for some $1<p< 2$ and all $i$. Then, for all $Q\ge 1$ and $z>0$, 
\begin{align}
    &\mathbb{P}\left(|f(x_1,\dots,x_n)-\mathbb{E}f(x_1,\dots,x_n)|>z\right) \notag\\
\leq& \sum_{i=1}^n\mathbb{P}(|H_i(x_i,x_i')|>\frac{z}{4Q}) \notag\\
    &+2\left(\frac{e 4^pQ^{p-1}\sum_{i=1}^n\mathbb{E}|H_i(x_i,x_i')|^p}{z^p}\right)^Q \notag
\end{align}

\end{theorem}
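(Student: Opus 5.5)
We will use the Doob martingale decomposition and a truncation argument, following the Fuk--Nagaev template for $1<p<2$. Write $f-\mathbb{E}f=\sum_{i=1}^n D_i$ with $D_i=\mathbb{E}[f\mid\mathcal F_i]-\mathbb{E}[f\mid\mathcal F_{i-1}]$ and $\mathcal F_i=\sigma(x_1,\dots,x_i)$. Since $x_i'$ is an independent copy,
\[
D_i=\mathbb{E}\big[g-g_i\mid \mathcal F_i\big],
\]
where the expectation integrates out $x_{i+1},\dots,x_n$ and $x_i'$. Hence $|D_i|\le \mathbb{E}[H_i(x_i,x_i')\mid x_i]=:h_i(x_i)$, and by Jensen $\mathbb{E}h_i^p\le \mathbb{E}H_i^p$. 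This conditional bound is the only use we make of the replace-one hypothesis.

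Set $y=z/(4Q)$ and split each increment as $D_i=D_i\mathbf 1\{h_i\le y\}+D_i\mathbf 1\{h_i>y\}$. On the event $\bigcap_i\{h_i\le y\}$ we have $f-\mathbb{E}f=\sum_i D_i\mathbf 1\{h_i\le y\}$. The truncated increments are not martingale differences, so we recenter them:
\[
\tilde D_i=D_i\mathbf 1\{h_i\le y\}-\mathbb{E}\big[D_i\mathbf 1\{h_i\le y\}\mid\mathcal F_{i-1}\big].
\]
Because $\mathbb{E}[D_i\mid\mathcal F_{i-1}]=0$, the compensator is bounded in absolute value by $\mathbb{E}[h_i\mathbf 1\{h_i>y\}]\le \mathbb{E}h_i^p/y^{p-1}$. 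Summing over $i$, the total compensator is at most $\sum_i\mathbb{E}H_i^p\,y^{1-p}$.

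We then split into two cases according to this drift.
\begin{itemize}
\item If $\sum_i\mathbb{E}H_i^p\,y^{1-p}>z/2$, then $(4Q)^{p-1}\sum_i\mathbb{E}H_i^p/z^p>1/2$. The second term of the bound then has base at least $e\cdot 4\cdot\tfrac12>1$, so it exceeds $1$ and the inequality is trivial.
\item Otherwise the drift is at most $z/2$, and it suffices to bound $\mathbb{P}(|\sum_i\tilde D_i|>z/2)$, adding the exceptional event. Here we use $\{h_i>y\}$ rather than $\{H_i>y\}$; this is a technical point, which can be handled by replacing $h_i$ with the envelope $H_i$ itself after conditioning on $x_i'$, or by accepting a Markov step.
\end{itemize}

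The main step is a Bernstein/Freedman-type exponential bound for the martingale $\sum_i\tilde D_i$, whose increments are bounded by $2y$. Freedman's inequality gives a tail $\exp(-\lambda z/2+\sum_i \phi(2\lambda y)\,\mathbb{E}\tilde D_i^2)$ with $\phi(u)=(e^u-1-u)/u^2\cdot u^2$-type factor. For $p<2$ we bound the conditional variance using
\[
\mathbb{E}[\tilde D_i^2\mid\mathcal F_{i-1}]\le \mathbb{E}[h_i^2\mathbf 1\{h_i\le y\}]\le y^{2-p}\,\mathbb{E}H_i^p,
\]
which is deterministic, so Freedman applies cleanly. Choose $\lambda=Q/(2y)\cdot\frac1{?}$; more precisely take $\lambda$ with $2\lambda y=\log(\cdot)$ in the standard Fuk--Nagaev way. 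The resulting estimate is
\[
\mathbb{P}\Big(\sum_i\tilde D_i>z/2\Big)\le \exp\Big(\frac{z}{4y}-\frac{z}{4y}\log\frac{z\,y^{p-1}}{4\sum_i\mathbb{E}H_i^p}\Big)=\Big(\frac{e\,4\sum_i\mathbb{E}H_i^p}{z\,y^{p-1}}\Big)^{Q}.
\]
Substituting $y=z/(4Q)$ turns this into $\big(e\,4^pQ^{p-1}\sum_i\mathbb{E}H_i^p/z^p\big)^Q$, up to matching constants. Applying the same bound to the lower tail supplies the factor $2$.

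The union with the exceptional events yields $\sum_i\mathbb{P}(H_i>z/(4Q))$, which completes the bound. The main obstacle is this exceptional term. A naive truncation on $h_i=\mathbb{E}[H_i\mid x_i]$ gives $\mathbb{P}(h_i>y)$ rather than $\mathbb{P}(H_i>y)$, and these need not be comparable. I would first try truncating the difference $g-g_i$ itself before taking the conditional expectation, i.e. defining $D_i^{(y)}=\mathbb{E}[(g-g_i)\mathbf 1\{H_i\le y\}\mid\mathcal F_i]$. On the event $\bigcap_i\{H_i(x_i,x_i')\le y\}$, with $x_i'$ viewed as extra independent randomness in an enlarged filtration, the decomposition then matches the original one. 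The bounds on the drift and the variance go through unchanged with $H_i$ in place of $h_i$.
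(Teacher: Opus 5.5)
Your skeleton is correct and is essentially the paper's own argument. It consists of the Doob decomposition with $|D_i|\le h_i(x_i):=\mathbb E[H_i(x_i,x_i')\mid x_i]$, truncation at $y=z/(4Q)$, and recentering with total compensator at most $y^{1-p}\sum_i\mathbb E H_i^p$. If that compensator exceeds $z/2$, the base of the second term exceeds $2e$ and the bound is trivial. Otherwise you use a Bennett-type bound with $\mathbb E[\tilde D_i^2\mid\mathcal F_{i-1}]\le y^{2-p}\mathbb E H_i^p$ and the choice $2\lambda y=\log\bigl(1+zy^{p-1}/(4\sum_i\mathbb E H_i^p)\bigr)$, which should replace your unfinished placeholder for $\lambda$. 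The paper truncates on $\{|\mathcal P_i(g)|\le y\}$ rather than on $\{h_i\le y\}$; your choice is slightly cleaner, because the compensator and variance bounds become deterministic.

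\textbf{The gap.} It is the one you flag yourself. This argument yields the exceptional term $\sum_i\mathbb P\bigl(h_i(x_i)>z/(4Q)\bigr)$, not $\sum_i\mathbb P\bigl(H_i(x_i,x_i')>z/(4Q)\bigr)$, and your proposed repairs do not close it.
\begin{itemize}
\item The truncated increment $D_i^{(y)}=\mathbb E[(g-g_i)\mathbf 1\{H_i\le y\}\mid\mathcal F_i]$ integrates $x_i'$ out. So $D_i-D_i^{(y)}=\mathbb E[(g-g_i)\mathbf 1\{H_i>y\}\mid\mathcal F_i]$ is a function of $x_1,\dots,x_i$ alone, and it need not vanish on $\bigcap_i\{H_i(x_i,x_i')\le y\}$, an event about the realized $x_i'$.
\item Enlarging the filtration to $\mathcal G_i=\sigma(x_1,x_1',\dots,x_i,x_i')$ does not help. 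Then $\mathbb E[g_i\mid\mathcal G_i]$ is a function of $(x_1,\dots,x_{i-1},x_i')$ and no longer equals $\mathbb E[f\mid\mathcal G_{i-1}]$, so $D_i\neq\mathbb E[g-g_i\mid\mathcal G_i]$. Restoring the identity needs a further copy that is integrated out, which brings you back to $h_i$.
\item The Markov fallback gives $(4Q)^p\sum_i\mathbb E H_i^p/z^p$. This is a $z^{-p}$ term that cannot be absorbed into the $Q$-th power when $Q>1$, so it changes the statement.
\end{itemize}

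\textbf{The paper does not supply this step either.} Its proof follows the same template and disposes of the term by asserting $\mathbb P(|\mathcal P_i(g)|>y)\le\mathbb P(H_i(X_i,X_i')>y)$ ``by Lipschitz property''. That comparison is false in general. Take $n=1$, $f(x)=x$ with $x=M\xi$, $\xi\sim\mathrm{Bernoulli}(\epsilon)$, $\epsilon<1/2$, $H(x,x')=|x-x'|$ and $y<\epsilon M$. Then $|\mathcal P_1(g)|=|x-\epsilon M|\ge\epsilon M>y$ surely, while $\mathbb P(H>y)=2\epsilon(1-\epsilon)<1$. Only the intermediate inequality fails here; the theorem's conclusion still holds in this example. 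So what your route and the paper's actually establish is the bound whose first term is $\sum_i\mathbb P(h_i(x_i)>z/(4Q))$, or $\sum_i\mathbb P(|\mathcal P_i(g)|>z/(4Q))$ with the paper's truncation.

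Passing to $H_i$ needs a genuine tail comparison. One candidate is conditional symmetrization. Given $\mathcal F_{i-1}$, $D_i=\psi_i(x_i)-\mathbb E\psi_i(x_i')$ with $|\psi_i(x)-\psi_i(x')|\le H_i(x,x')$. Hence $\mathbb P(|D_i|>y\mid\mathcal F_{i-1})\le\mathbb P(H_i>y-y_0)$ whenever $\mathbb P(|D_i|\le y_0\mid\mathcal F_{i-1})\ge 1/2$. But this shifts the threshold and needs a side condition such as $\max_i\mathbb E H_i^p\le y_0^p/2$. When that condition fails, the case is not automatically absorbed into the second term for large $Q$, so this does not obviously give the theorem with the stated constants.
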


\begin{remark}
Theorem~\ref{thm:nagaev-p<2} is essentially sharp for polynomially-tailed variables. Consider $n=1$ with $H_1(x_1,x_1')\sim |t_2|$ (Student's $t$ with $\nu=2$ degrees of freedom). Then $\|H_1(x_1,x_1')\|_p<\infty$ for every $1<p<2$. 

The $t_2$ distribution has the closed-form tail
{\small
\begin{align*}
\mathbb{P}(|H_1(x_1,x_1')|>\frac{z}{4Q})=1-\frac{z/4Q}{\sqrt{z^2/16Q^2+2}}\sim \frac{1}{z^2}\; (z\to\infty),
\end{align*}
}
so the first term in the bound is of order $z^{-2}$. For any fixed $Q>2/p$, the second term satisfies $ O(z^{-pQ})=o(z^{-2})$. Hence the bound yields an $O(z^{-2})$ tail rate, matching the true exponent of the $t_2$ distribution. 

In particular, under only an $L_p$ increment condition with $p<2$, one cannot generally expect sub-Gaussian or sub-exponential decay. Compared with the naive Markov bound based solely on $\|H_1\|_p$, which gives order $z^{-p}$, Theorem~\ref{thm:nagaev-p<2} recovers the correct $z^{-2}$ rate in this canonical heavy-tailed example.
\end{remark}

\begin{remark}
    Table~\ref{tab:concentration_comparison} compares our results with representative concentration inequalities under assumptions ranging from bounded, sub-Gaussian, and sub-exponential to sub-Weibull coordinate diameters.
    
    Theorem~\ref{theorem_highq} yields a hybrid bound combining a sub-Gaussian component and a polynomial term. For moderate deviations (small $z$), the sub-Gaussian term in \eqref{eq:theorem_highq} dominates, matching McDiarmid-type tail behavior and agreeing with \citet{Kontorovich_2014,Maurer_2021,Li_2024} up to constants. For large deviations (large $z$), the polynomial term dominates, reflecting that our assumptions impose only finite $L_p$ moments; the tail therefore decays polynomially rather than sub-Weibull. Our simulations illustrate this transition between the sub-Gaussian and heavy-tail regimes in two application settings.

    %The lower moment case $1<p<2$, covered by Theorem~\ref{thm:nagaev-p<2}, is qualitatively different: since $\mathbb EH_i^2$ may be infinite, no variance scale is available from which to form a sub-Gaussian component. Accordingly, Theorem~\ref{thm:nagaev-p<2} decomposes the deviation probability into a large-jump term and a truncated-martingale term. The first term in the bound captures the probability that at least one coordinate perturbation is itself large, while the second controls the truncated martingale difference sequence. This form is appropriate in the heavy-tailed regime where even second moments may fail, and it explains why the $1<p<2$ row in Table~\ref{tab:concentration_comparison} does not share the Gaussian-plus-polynomial structure of Theorem~\ref{theorem_highq}.

    %Overall, Table~\ref{tab:concentration_comparison} highlights the role of our finite-moment framework: it extends the sub-Gaussian/sub-Weibull diameter literature to settings where only polynomial moments are available, while retaining the sharp tail-order structure predicted by classical heavy-tailed probability theory.
\end{remark}

\begin{table*}[t]
\centering
\scriptsize
\setlength{\tabcolsep}{4pt}
\renewcommand{\arraystretch}{1.2}
\caption{Comparison with prior concentration results. For simplicity, we write $H_i:=H_i(x_i,x_i')$ and focus on the relevant Orlicz norm.}
\label{tab:concentration_comparison}
\begin{tabularx}{\textwidth}{@{}l l >{\raggedright\arraybackslash}X@{}}
\toprule
Result & Assumption & Deviation bound \\
\midrule

\citet{Kontorovich_2014}
&
\makecell[l]{$f$ is $1$-Lipschitz;\\ $\|H_i\|_{\psi_2}<\infty$ for all $i$}
&
$\mathbb P(|f-\mathbb Ef|>z)\le 2\exp\!\left(-\dfrac{z^2}{2\sum_{i=1}^n \|H_i\|_{\psi_2}^2}\right)$
\\

\citet{Maurer_2021}
&
\makecell[l]{$f$ is $1$-Lipschitz;\\ $\|H_i\|_{\psi_1}<\infty$ for all $i$}
&
$\mathbb P(f-\mathbb Ef>z)\le \exp\!\left(-\dfrac{z^2}{4e^2\sum_{i=1}^n \|H_i\|_{\psi_1}^2)+2e\,(\max_i \|H_i\|_{\psi_1})\,z}\right)$
\\

\citet{Li_2024}
&
\makecell[l]{$|f-f_i|\le H_i$,\\ $\|H_i\|_{\psi_\alpha}<\infty$, \\
$0<\alpha\le 1$}
&
$\mathbb P(|f-\mathbb Ef|>z)\le\exp\!\left(-c_\alpha\dfrac{z^2}{\sum_{i=1}^n \|H_i\|_{\psi_\alpha}^2}\right)+\exp\!\left(-\dfrac{z^\alpha}{\max_i \|H_i\|_{\psi_\alpha}^\alpha}\right)$
\\

\citet{Li_2024}
&
\makecell[l]{$|f-f_i|\le H_i$,\\ $\|H_i\|_{\psi_\alpha}<\infty$,$\alpha>1$ \\ and $\alpha^{-1}+(\alpha^*)^{-1}=1$}
&
$\mathbb P(|f-\mathbb Ef|>z)\le\exp\!\left(-c_\alpha\dfrac{z^2}{\sum_{i=1}^n \|H_i\|_{\psi_\alpha}^2}\right)+\exp\!\left(-\dfrac{z^\alpha}{\left(\sum_{i=1}^n \|H_i\|_{\psi_\alpha}^{\alpha^*}\right)^{\alpha/\alpha^*}}\right)$
\\

\textbf{Ours (Thm.~\ref{theorem_highq})}
&
\makecell[l]{$|f-f_i|\le H_i$,\\ $\|H_i\|_p<\infty$ for some $p\ge 2$}
&
$\mathbb P(|f-\mathbb Ef|>z)\le c_{1,p}\dfrac{\sum_{i=1}^n \mathbb E|H_i|^p}{z^p}+2\exp\!\left(-c_{2,p}\dfrac{z^2}{\sum_{i=1}^n \mathbb E|H_i|^2}\right)$
\\

\textbf{Ours (Thm.~\ref{thm:nagaev-p<2})}
&
\makecell[l]{$|f-f_i|\le H_i$,\\ $\|H_i\|_p<\infty$ for some $1<p<2$}
&
$\mathbb P(|f-\mathbb Ef|>z)\le\sum_{i=1}^n\mathbb P(|H_i|>\frac{z}{4Q})+2\bigg(\frac{e4^pQ^{p-1}\sum_{i=1}^n\mathbb E|H_i|^p}{z^p}\bigg)^Q$
\\

\bottomrule
\end{tabularx}
\end{table*}

\section{Applications} \label{se:appl}

\subsection{Empirical Risk Minimization (ERM)} \label{subsec:ERM}

% In this subsection, we demonstrate the utility of our technical results by deriving high-probability generalization bounds for algorithms operating in the standard i.i.d. setting. Let $\mathcal{X}$ and $\mathcal{Y}$ denote the input and output spaces, respectively. We consider a training set \begin{align}
%     S=\{z_1=(x_1,y_1),\dots,z_m=(x_m,y_m)\}
% \end{align}
% of size $m$ in $\mathcal{Z} = \mathcal{X} \times \mathcal{Y}$ drawn i.i.d. from an unknown distribution $\mathcal{D}$.  A learning algorithm is defined as a mapping $A: \mathcal{Z}^m \to \mathcal{F}$ that transforms a training sequence $S$ into a predictive hypothesis $A_S$ belonging to the hypothesis class $\mathcal{F} \subseteq \mathcal{Y}^{\mathcal{X}}$. To ensure a focused theoretical treatment, we restrict our analysis to deterministic algorithms and assume that $A$ is symmetric with respect to $S$, meaning the output $A_S$ is invariant to any permutation of the training samples. This symmetry implies that the learner’s output depends exclusively on the composition of the dataset rather than the temporal order of observations. Furthermore, we adopt the standard measure-theoretic convention that all relevant functions are measurable and all underlying sets are countable. These assumptions facilitate the rigorous application of concentration inequalities without diminishing the generality or the practical relevance of the subsequent results. 

In this subsection, we illustrate how our concentration results yield high-probability generalization bounds in the standard i.i.d. setting. Let $\mathcal{X}$ and $\mathcal{Y}$ be the input and output spaces, and let $\mathcal{Z}=\mathcal{X}\times\mathcal{Y}$. Consider a training sample  
%\begin{align*} 
$S=\{z_1=(x_1,y_1),\dots,z_m=(x_m,y_m)\}\sim \mathcal{D}^m,$ 
%\end{align*}
drawn i.i.d. from an unknown distribution $\mathcal{D}$. A learning algorithm is a map $A:\mathcal{Z}^m\to\mathcal{F}$, producing a hypothesis $A_S\in\mathcal{F}\subseteq \mathcal Y$. For simplicity, we assume $A$ is deterministic and permutation-invariant in $S$. We also adopt standard measurability assumptions. 

% We use two standard perturbations of $S$: 
\begin{itemize}[leftmargin = *]
\item $S^{\setminus i}$, the leave-one-out sample of size $m-1$ obtained by removing the $i$-th observation:
\begin{align*}
S^{\setminus i} = S \setminus {z_i} = \{z_1, \dots, z_{i-1}, z_{i+1}, \dots, z_m\}.
\end{align*}

\item $S^i$, the replacement sample of size $m$ where $z_i$ is replaced by an independent draw $z_i' \sim \mathcal{D}$:
\begin{align*}
S^i = \{z_1, \dots, z_{i-1}, z_i', z_{i+1}, \dots, z_m\}.
\end{align*}
\end{itemize} 
All expectations and probabilities are taken with respect to the data distribution $\mathcal{D}$. We use subscripts to specify the variables of integration: $\mathbb{E}_S[\cdot]$ denotes the expectation over the training sample $S \sim \mathcal{D}^m$, while $\mathbb{E}_z[\cdot]$ denotes the expectation over a single test instance $z \sim \mathcal{D}$.

To measure the discrepancy between a prediction $f(x)$ and the ground truth $y$, we use a nonnegative cost function $c:\mathcal{Y}\times\mathcal{Y}\to\mathbb{R}_+$. For any hypothesis $f$ and sample $z=(x,y)$, define the loss
%\begin{align*}
$\ell(f, z) = c(f(x), y).$
%\end{align*}
Given a training sample $S$, the population risk of the learned hypothesis $A_S$ is $R(A, S) = \mathbb{E}_z [\ell(A_S, z)],$
where $z\sim\mathcal{D}$ is an independent test point. Since $\mathcal{D}$ is unknown, $R(A,S)$ cannot be computed directly. We therefore compare it to empirical surrogates: the empirical risk and leave-one-out risk, defined respectively as
\begin{align*}
R_{emp}(A, S) &= \frac{1}{m} \sum_{i=1}^m \ell(A_S, z_i), \\
R_{loo}(A, S) &= \frac{1}{m} \sum_{i=1}^m \ell(A_{S^{\setminus i}}, z_i).
\end{align*}
When the algorithm $A$ and sample $S$ are clear from context, we write simply $R, R_{emp},$ and $R_{loo}$.

%For simplicity, when the choice of algorithm $A$ and sample $S$ is unambiguous, we drop the explicit dependencies and write $R, R_{emp},$ and $R_{loo}$. In particular, we will use the following shorthand notations $R \equiv R(A, S)$, $R_{emp} \equiv R_{emp}(A, S)$, and $R_{loo} \equiv R_{loo}(A, S)$.

The classical route to high-probability generalization bounds typically relies on deterministic \emph{uniform stability}: the pointwise change in loss under a one-sample perturbation is assumed to be bounded, often together with an almost surely bounded loss. These conditions make McDiarmid-type concentration inequalities straightforward, but they are too restrictive for many modern problems, such as regression with unbounded responses or heavy-tailed noise. Leveraging Theorem~\ref{theorem_highq}, we instead adopt an $(L_p,\beta )$-Lipschitz stability framework.

\begin{assumption}[$(L_p,\beta )$-Lipschitz stability]\label{ass:gen_lp}  
An algorithm $A$ trained on set $S=\{z_1,\dots,z_m\}$ is $\beta$-Lipschitz stable if the function $f(z_1,\dots,z_m,z)=\ell(A_S,z)$ with respect to the loss $\ell$ is $\beta$-Lipschitz, and with respect to a measurable function $H:\mathcal{Z}\times \Zcal\mapsto\mathbb{R}_+$ such that for all $i\in\{1,\dots,m\}$,
\begin{align*} 
|\ell(A_S,z)-\ell(A_{S^{i}},z) |\le \beta H(z_i,z_i'),
\end{align*} 
where $\|H(z,z')\|_p<\infty$ for some $p\geq2$, and the norm $\|\cdot\|_p$ is computed with respect to $z \sim \Dcal$.
\end{assumption} 

Assumption~\ref{ass:gen_lp} significantly generalizes the Lipschitz stability notion of \citet{Kontorovich_2014, Li_2024}, and consequently also generalizes the uniform stability of \citet{Bousquet_2002}, which in turn implies hypothesis stability. It can also be construed as generalizing the notions of argument stability \cite{liu2017algorithmic, wang2022stability}, random uniform stability \cite{shen2020stability}.

\begin{remark}
Assumption~\ref{ass:gen_lp} is compatible with, and strictly extends, several standard stability regimes. If the replace-one stability increment is uniformly bounded, one may take $H\equiv 1$, recovering the classical deterministic uniform-stability setting. This includes standard uniformly stable regularized ERM procedures, such as Hilbert-space regularization and support vector machines under bounded-loss assumptions. Similarly, if the stability envelope $H$ is sub-Gaussian, sub-exponential, or sub-Weibull, then $H$ has finite moments of all orders, and Assumption~\ref{ass:gen_lp} holds for any fixed finite $p$. Our framework therefore does not replace these light-tailed assumptions where they apply; rather, it covers the complementary regime in which exponential-tail assumptions are unavailable but a finite $p$-th moment exists.

Random, sample-dependent stability/Lipschitz envelopes arise naturally in modern learning settings. \citet{Kontorovich_2014} studies stability in unbounded metric spaces and gives generalization bounds for unbounded losses, including regularized metric-regression algorithms. In private optimization, \citet{Das_2023} replace uniform Lipschitzness in DP-SGD by sample-dependent per-example Lipschitz constants with bounded moments, with applications to private softmax-layer training. The SGD stability literature \citep{Hardt_2016} similarly shows that stochastic gradient methods can be stable under smoothness and Lipschitz-type conditions. While these works do not verify Assumption~\ref{ass:gen_lp} for every modern optimizer, they show that finite-moment, sample-dependent perturbation scales are natural; once such an $L_p$ replace-one envelope is established, Theorem~\ref{thm:emp_loo} gives the corresponding high-probability generalization bound.

\end{remark}

\begin{assumption}\label{ass:gen_loss_bd} 
There exists a measurable function $G:\Zcal \times \Zcal \mapsto \R_+$, such that for all $S=\{z_1, \ldots, z_m\}\in \Dcal^m$, $z, z'\sim \Dcal$,
\begin{align*} 
|\ell(A_S,z)-\ell(A_{S},z') |\le \beta' G(z, z'),\ \beta'>0,
\end{align*} 
where $\|G(z, z')\|_p<\infty$ for some $p\geq 2$, and the norm $\|\cdot\|_p$ is computed with respect to $z, z' \sim \Dcal^2$.
\end{assumption}

Combining these assumptions with Theorem~\ref{theorem_highq} yields the following high-probability generalization bounds.

\begin{theorem}
\label{thm:emp_loo}
Suppose learning algorithm $A$ satisfies Assumption~\ref{ass:gen_lp} and the loss function $\ell(A_S,z)$ satisfies Assumption~\ref{ass:gen_loss_bd}. Then, for any set $S$ with $|S|=m\geq 1$ and any $y>0$, the following two bounds hold:
\begin{align*}
    &\mathbb{P}(|R-R_{emp}|>y+ \beta \mathbb{E}H(z, z')) \notag\\
&\leq c_1\frac{m\beta^p \mathbb{E}[|H(z, z')|^p]+(\beta')^p \IE[|G(z,z')|^p]/m^{p-1}}{y^p} \notag\\
    &+2\exp\bigg(-\frac{c_2y^2}{\beta^2m\mathbb{E}|H(z, z')|^2+\frac{(\beta')^2}{m} \IE[|G(z,z')|^2]}\bigg),\\% \text{and,}\\
    &\mathbb{P}(|R-R_{loo}-(R_m - R_{m-1})|>y ) \nonumber\\
&\leq  c_3\frac{m \beta^p\mathbb{E}[|H(z, z')|^p]+ (\beta')^p\IE[|G(z, z')|^p]/m^{p-1}}{y^p} \notag\\
    &+2\exp\bigg(-\frac{c_4y^2}{m\beta^2\mathbb{E}[|H(z, z')|^2]+\frac{(\beta')^2}{m}\IE[|G(z, z')|^2]}\bigg),\label{eq:gen_emp_prob}
\end{align*}
where $c_1, c_2, c_3, c_4$ depend only on $p$, and $R_m:= \IE_{S\sim \Dcal^m, z \sim \Dcal}[\ell(A_S, z)]=\IE_{S \sim \Dcal^m}[R(A, S)]$.
%$c_1=c_3=3(1+2/p)^p2^{p}$, $c_2=c_4=1/(2(p+2)^2e^p)$
\end{theorem}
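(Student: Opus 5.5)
We apply Theorem~\ref{theorem_highq} to $\Phi(S)=R(A,S)-R_{emp}(A,S)$, viewed as a function of the $m$ independent coordinates $z_1,\dots,z_m$. For the leave-one-out bound we use $\Psi(S)=R(A,S)-R_{loo}(A,S)$ in the same way. The work splits into two pieces: bounding the mean of each statistic, and bounding its replace-one increments by a deterministic envelope $H_i^\ast(z_i,z_i')$ with controlled $L_p$ and $L_2$ norms.

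\emph{Expectation step.} By symmetry of $A$ and exchangeability, $\IE_S[R_{emp}]=\IE_S[\ell(A_S,z_1)]$. Replacing $z_1$ by an independent $z$ gives $\IE[\ell(A_{S^1},z)]=R_m$. Assumption~\ref{ass:gen_lp} then yields $|\IE\Phi|\le \beta\IE H(z,z')$, which produces the shift $y+\beta\IE H$. For $\Psi$, $\IE R_{loo}=\IE\ell(A_{S^{\setminus 1}},z_1)=R_{m-1}$ exactly, since $z_1$ is independent of $S^{\setminus1}$. Hence $\IE\Psi=R_m-R_{m-1}$, which is why that term is subtracted. In both cases it then suffices to bound $\IP(|\Phi-\IE\Phi|>y)$ and $\IP(|\Psi-\IE\Psi|>y)$.

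\emph{Increment step.} Replace $z_i$ by $z_i'$. The population risk changes by at most $\beta\,\IE_z H(z_i,z_i')\le \beta H(z_i,z_i')$; here we read the envelope as not depending on the test point. In $R_{emp}$, each term $j\neq i$ changes by at most $\beta H(z_i,z_i')/m$ by Assumption~\ref{ass:gen_lp}. The $i$-th term changes by
\[
|\ell(A_S,z_i)-\ell(A_{S^i},z_i')|\le \beta H(z_i,z_i')+\beta' G(z_i,z_i'),
\]
using stability first and then Assumption~\ref{ass:gen_loss_bd}, and it carries the weight $1/m$. Altogether
\[
|\Phi(S)-\Phi(S^i)|\le 3\beta H(z_i,z_i')+\tfrac{\beta'}{m}G(z_i,z_i')=:H_i^\ast(z_i,z_i').
\]
For $R_{loo}$ the same argument applies with leave-one-out samples of size $m-1$. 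For $j\neq i$, $S^{\setminus j}$ and $(S^i)^{\setminus j}$ differ in one point. We need stability at sample size $m-1$, which we take as part of the assumption; alternatively, compare through $A_S$ at the cost of constants. The $i$-th term contributes $\beta' G/m$, because $S^{\setminus i}$ is unchanged.

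\emph{Conclusion and main obstacle.} By Minkowski, $\IE|H_i^\ast|^p\le 2^{p-1}\bigl((3\beta)^p\IE H^p+(\beta')^p\IE G^p/m^p\bigr)$. Summing over $i$ gives $m\beta^p\IE H^p+(\beta')^p\IE G^p/m^{p-1}$ up to $p$-dependent constants. The same computation for second moments gives $m\beta^2\IE H^2+(\beta')^2\IE G^2/m$. Plugging these into \eqref{eq:theorem_highq} and absorbing $2^{p-1}3^p$ and $2\cdot 9$ into $c_1,\dots,c_4$ gives both displays.

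The delicate point is the increment step. Assumption~\ref{ass:gen_lp} formally allows the envelope to depend on the test point. The bound must therefore be arranged so that the envelope depends only on $(z_i,z_i')$, which Theorem~\ref{theorem_highq} requires, and also so that the mixed term $\ell(A_S,z_i)-\ell(A_{S^i},z_i')$ is split cleanly through $G$. Getting the leave-one-out increments right at sample size $m-1$ is the secondary technical care.
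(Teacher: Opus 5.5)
Your proposal is correct and follows essentially the same route as the paper: Theorem~\ref{theorem_highq} is applied to $R-R_{emp}$ and $R-R_{loo}$. The replace-one argument gives $|\IE(R-R_{emp})|\le\beta\IE H$ and $\IE(R-R_{loo})=R_m-R_{m-1}$. The increments come from the triangle inequality through $A_{S^i}$ and $G$, replace-one stability at sample size $m-1$ handles the leave-one-out terms, and a convexity bound on the $p$-th and second moments finishes. Two minor remarks, neither of which breaks the proof. First, the $R_{emp}$ increment actually gives $2\beta H+\beta' G/m$ rather than your $3\beta H+\beta' G/m$, since the $j\neq i$ terms and the $i$-th term together contribute only $\beta H$. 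Second, your fallback of comparing leave-one-out hypotheses through $A_S$ would not work, because Assumption~\ref{ass:gen_lp} controls replacement rather than removal; the size-$(m-1)$ stability that you and the paper assume is what is genuinely needed.
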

%We extend Theorem \ref{thm:emp_loo} under further weaker conditions in Appendix \S \ref{se:extension}.
\begin{remark}
We briefly recall classical generalization bounds for uniformly stable algorithms. A sharp high-probability result is Corollary~8 of \citet{Bousquest_2020}, which assumes deterministic uniform stability with constant $\beta$ (i.e., Assumption~\ref{ass:gen_lp} with $H\equiv 1$) and almost sure bounded loss $0\le \ell\le L$. It states that, for any $\delta\in (0,1)$, with probability at least $1-\delta$,
    \begin{align*}
        |R-R_{emp}|\lesssim\beta \log m \log(1/\delta)+L\sqrt{\frac{\log(1/\delta)}{m}}.
    \end{align*}
This bound is informative only when $\beta=o(1)$; moreover, it is tight (up to logarithmic factors) in the regime $\beta\lesssim m^{-1/2}$, which yields the canonical $m^{-1/2}$ rate. In contrast, Theorem~\ref{thm:emp_loo} allows the stability increment to be random and data-dependent (through $H(z_i,z_i')$) and does not require bounded loss; it suffices that $H$ and $G$ have finite $p$-th moments for some $p\ge 2$. In particular, with probability at least $1-\delta$,
    \begin{align*}
       & |R-R_{emp}| \lesssim \big (\beta \|H\|_p m^{1/p} + \beta' \| G\|_p m^{-(1-1/p)} \big) \delta^{-1/p} \\
       &\qquad\quad + \left(\beta\|H\|_2 \sqrt{m}  + \beta'\frac{\|G\|_2}{\sqrt{m}} \right) \log\frac{1}{\delta} + \beta\|H\|_1.
    \end{align*}
For the bound to vanish, one needs $\|H\|_p=O(1)$, $\|G\|_p=O(1)$ and (up to logarithmic factors) $\beta\ll m^{-1/2}$, $\beta' \ll \sqrt{m}$; smaller $\delta$ further strengthens the required decay in $\beta, \beta'$ through the factor $\delta^{-1/p}$. This requirement is standard rather than a weakness of our proof: a closely related bound in \citet{Li_2024} also contains a term of order $\gamma\Delta_\alpha\sqrt{m\log(1/\delta)}$, and the paper explicitly notes that the stability coefficient is typically of order $m^{-1/2}$. In this sense, moving from bounded uniform stability to weaker $(L_p,\beta )$-Lipschitz stability trades assumptions for stronger Lipschitz-decay requirements. 

This condition is verifiable in standard examples. For instance, in ridge regression with squared loss, on the usual well-conditioned covariance event, the replace-one perturbation yields $\beta=O(\log m/m)$, and one may take $H(z,z')=\|xy-x'y'\|_2$. Thus the term $\beta\|H\|_2\sqrt m$ is controllable without imposing almost-sure boundedness on the covariates, responses, or losses. Appendix~\ref{se:nn} further provides an empirical illustration for a two-layer neural network trained with Adam, showing the same ratio-based tail behavior in a less structured learning algorithm.

%For the bound to vanish, one needs $\|H(z_i)\|_1=o(1)$ and (up to logarithmic factors) $\|H(z_i)\|_p\ll m^{-1/p}$ and $\|H(z_i)\|_2\ll m^{-1/2}$; smaller $\delta$ further strengthens the required decay in the $L_p$ term through the factor $\delta^{-1/p}$. In this sense, moving from bounded uniform stability to weaker $L_p$ uniform stability trades assumptions for stronger moment-decay requirements. We further extend Theorem \ref{thm:emp_loo} under weaker conditions in Appendix \S \ref{se:extension}.
\end{remark}

\subsection{Transductive Regression Algorithms} \label{subsec:Trans}

%In this section, we consider the transductive learning setting, where the algorithm is presented with a fixed finite population $\mathcal{X}$ of size $N = m + u$. Unlike the inductive setting where data is drawn from an infinite distribution, here the learning algorithm receives a training set $S$ consisting of $m$ labeled examples drawn uniformly at random without replacement from $\mathcal{X}$. The remaining $u$ points constitute the unlabeled test set, denoted by $T = \mathcal{X} \setminus S$. We denote this random partitioning of the population as $\mathcal{X} \vdash (S, T)$, which is the partition of $X$ into the training set $S$ and testing set $T$. 

%The objective of the learning algorithm is to leverage the information in $S$ to minimize the error on the specific test points in $T$. Different from inductive learners, which must construct a global function $f: \mathcal{X} \to \mathcal{Y}$ capable of generalizing to any future input, transductive algorithms are tasked only with predicting the labels of a fixed set of known test points. By having access to these unlabeled test examples during the training phase, the algorithm can leverage the manifold structure of the specific test set $T$ to regularize the learning process and improve predictive performance.

We consider the transductive learning setting \cite{Cortes_2008}, where the learner is given a fixed finite population $\mathcal{X}$ of size $N=m+u$. A training set $S$ of $m$ labeled samples is drawn uniformly at random \emph{without replacement} from $\mathcal{X}$; the remaining $u$ points form the unlabeled test set $T:=\mathcal{X}\setminus S$. We denote this random partition by $\mathcal{X}\vdash (S,T)$.

The goal is to predict the labels of the test points in $T$ using only the labeled data in $S$. Unlike inductive learning, which aims to learn a function that generalizes to arbitrary future inputs, transductive learning only targets a fixed, known set of test inputs. Access to the unlabeled set $T$ during training allows the algorithm to exploit the geometry or manifold structure of the test set to regularize learning and improve prediction.

Let $\ell(h,z)\ge 0$ be a nonnegative loss measuring the error of a hypothesis $h$ on sample $z=(x,y)$; for regression, a canonical choice is the squared loss $\ell(h,z)=(h(x)-y)^2$. Define the training and test (transductive) risks by
\begin{align}
\hat{R}(h)=\frac{1}{m}\sum_{z\in S} \ell(h,z),
\quad
R(h)=\frac{1}{u}\sum_{z\in T} \ell(h,z).
\end{align}
Our goal is to control the generalization gap $R(h)-\hat{R}(h)$ via stability properties of the algorithm. Classical analyses often assume uniform $\beta$-stability, which imposes bounded differences uniformly over all partitions. To accommodate heavy-tailed losses or unbounded domains, we instead adopt an $(L_p,\beta )$-Lipschitz stability notion.

\begin{assumption}[Transductive $(L_p,\beta )$-Lipschitz stability]\label{ass:trans_stab}
Let $A$ be a transductive learning algorithm. For a partition $\mathcal{X}\vdash(S,T)$, let $h$ be the hypothesis returned by $A$, and let $h'$ be the hypothesis returned for a modified partition $\mathcal{X}\vdash(S',T')$. We say $A$ is uniformly $L_p$-Lipschitz stable with respect to the cost function $\ell$ if there exist nonnegative measurable functions $H$ such that, whenever $(S',T')$ is obtained from $(S,T)$ by swapping exactly one point $x_i\in S$ with one point $x_{m+j}\in T$, then for all $x\in\mathcal{X}$,
\begin{align*}
|\ell(h,x)-\ell(h',x)|\le \beta H(x_i,x_{m+j}),
\end{align*}
and $\|H(x_i,x_{m+j})\|_p<\infty$ for some $p\ge 2$.
\end{assumption}

\begin{remark}
    As with Assumption~\ref{ass:gen_lp}, we do not require a deterministic control. To see this, view $\ell(h,x)=f(x_1,\dots,x_m,x)$ and $\ell(h',x)=f(x_1,\dots,x_{i-1},x_{m+j},x_{i+1},\dots,x_m,x)$; the assumption above is then equivalent to the function $f:\mathcal{X}^{m+1}\to\mathbb{R}$ being $(L_p,\beta)$-Lipschitz Stable. 
\end{remark}

\begin{assumption}[$L_p$-bounded hypothesis class]\label{ass :trans_bound}
A hypothesis class $\mathcal{H}$ is $L_p$-bounded with respect to $\ell$ if there exists a measurable function $G$ such that for all $h\in\mathcal{H}$ and all $x,x'\in\mathcal{X}$,
\begin{align*}
|\ell(h,x)-\ell(h,x')|\le \beta'G(x,x'),\quad \beta'> 0,
\end{align*}
where $\|G(x,x')\|_p<\infty$ for some $p\geq 2$, and the norm $\|\cdot\|_p$ is computed with respect to $x,x'\sim\mathcal{D}$.
\end{assumption}

A key technical challenge in transduction is that $S=\{x_1,\dots,x_m\}$ is sampled without replacement from the finite population $\mathcal X$, which induces dependencies among the training points and precludes applying concentration inequalities for independent variables directly. Prior work addresses this using McDiarmid-type inequalities tailored to sampling without replacement, typically under strict bounded-differences assumptions. In our $(L_p,\beta )$-Lipschitz stability regime, we develop corresponding extensions of Theorem~\ref{theorem_highq} that accommodate the transductive sampling dependence while requiring only finite $L_p$ moments.
    
\begin{theorem}
\label{cor:symmetric_func}
Let $X$ be a finite set with $|X|=N=m+u$. Let $x_1^m=(x_1,\dots,x_m)$ be sampled uniformly without replacement from $X$ and let $\phi:X^m\to\mathbb{R}$ be a symmetric measurable function. Assume that for each $i\in[m]$ there exists a measurable function $H:X\times X\to\mathbb{R}$ such that for all $(x_1,\dots,x_m)\in X^m$ and all $x_i'\in X$, where $x_i'$ is an
independent copy of $x_i$, defining $\varphi=\phi(x_1,\dots,x_{i-1},x_i,x_{i+1},\dots,x_m)$ and $\varphi_i=\phi(x_1,\dots,x_{i-1},x_i',x_{i+1},\dots,x_m)$, we have
\begin{align*}
|\varphi-\varphi_i | \leq H(x_i,x_i').
\end{align*}
Suppose that for some $p\geq 2$, $\|H(x,x')\|_p<\infty$ for all $i\in[m]$. Then for all $y>0$,
\begin{equation}
    \mathbb{P}(|\varphi-\mathbb{E}\varphi|>y)\leq c_1\frac{ V_p}{y^p}+2\exp\Big(-c_2\frac{y^2}{V_2}\Big),
\end{equation}
where
{\small
\begin{align*}
    V_p&=\frac{u^p}{p-1}\bigg(\frac{1}{(u-\frac12)^{p-1}}-\frac{1}{(m+u- \frac12)^{p-1}}\bigg)  \mathbb{E}|H(x,x')|^p, \\
    V_2&=\frac{mu}{(m+u-1/2)(1-1/(2\max\{m,u\}))}\mathbb{E}|H(x,x')|^2, 
\end{align*}
}
and $c_1=4(4\frac{p+2}{p})^p$ and $c_2=1/(2(p+2)^2e^p)$ are some constants depending only on $p$.
\end{theorem}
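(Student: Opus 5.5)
The plan is to follow the structure of the proof of Theorem~\ref{theorem_highq}, namely a martingale decomposition with truncation, but applied to the sampling-without-replacement filtration of \citet{Cortes_2008} (El-Yaniv--Pechyony type). Let $\mathcal F_k=\sigma(x_1,\dots,x_k)$ and write
\[
\varphi-\mathbb E\varphi=\sum_{k=1}^m D_k,\qquad D_k=\mathbb E[\varphi\mid\mathcal F_k]-\mathbb E[\varphi\mid\mathcal F_{k-1}].
\]
The first step is to represent each $D_k$ through the swap increments. Conditionally on $x_1,\dots,x_{k-1}$, the variable $x_k$ is uniform on the remaining $N-k+1$ points. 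Write $\mathbb E[\varphi\mid\mathcal F_{k-1}]$ as the average over a fresh draw $x_k'$ from the remaining points, and couple the two conditional laws of $(x_{k+1},\dots,x_m)$ by swapping $x_k$ and $x_k'$ whenever $x_k'$ appears later in the sample. By symmetry of $\phi$, $\varphi$ changes only when $x_k'$ is \emph{not} among $x_{k+1},\dots,x_m$, which happens with probability $u/(N-k)$. This gives
\[
|D_k|\le \frac{u}{N-k}\,\mathbb E\big[H(x_k,x_k')\mid \mathcal F_k\big].
\]
This step is the standard transductive argument and only uses the one-swap bound.

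The second step is to rerun the truncation and martingale argument of Theorem~\ref{theorem_highq} with effective weights $a_k=u/(N-k)$. That argument uses only conditional moment bounds of the increments. We therefore need
\[
\mathbb E|D_k|^p\le a_k^p\,\mathbb E|H|^p,\qquad \mathbb E[D_k^2\mid\mathcal F_{k-1}]\lesssim a_k^2\,\mathbb E|H|^2,
\]
both obtained by conditional Jensen applied to the display above. This reproduces the Gaussian-plus-polynomial bound with the same constants $c_1,c_2$ and with
\[
\sum_k a_k^p\,\mathbb E|H|^p,\qquad \sum_k a_k^2\,\mathbb E|H|^2
\]
in place of $\sum_i\mathbb E|H_i|^p$ and $\sum_i\mathbb E|H_i|^2$. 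If the earlier proof is written with deterministic bounds on $\sum_k\mathbb E[D_k^2\mid\mathcal F_{k-1}]$ and $\sum_k\mathbb E|D_k|^p$, it can be invoked essentially verbatim.

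The third step is the arithmetic. We have
\[
\sum_{k=1}^m \frac{u^p}{(N-k)^p}=u^p\sum_{j=u}^{N-1} j^{-p}\le u^p\int_{u-1/2}^{N-1/2}t^{-p}\,dt
\]
by convexity (midpoint rule). This evaluates exactly to $V_p$. For $p=2$ the same computation gives
\[
u^2\Big(\frac1{u-1/2}-\frac1{N-1/2}\Big)=\frac{mu^2}{(u-1/2)(N-1/2)},
\]
which equals $V_2$ when $u\ge m$. When $m>u$, the plan is to exploit the symmetry $\varphi\leftrightarrow$ a function of the complementary $u$-sample, reversing the roles of $m$ and $u$, and take the better bound. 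This explains the factor $1-1/(2\max\{m,u\})$.

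The main obstacle is step one in the $L_p$ setting. In the bounded case one simply bounds the average. Here we must make sure the coupling produces a bound by $H(x_k,x_k')$, where $x_k'$ is uniform on the remaining points, and that its moments are controlled by $\|H(x,x')\|_p$ as the hypothesis of Theorem~\ref{cor:symmetric_func} intends. We also need the conditional second-moment bound to hold uniformly in $\mathcal F_{k-1}$ so that the truncation argument goes through. I would first try to state the second-moment bound in terms of $\mathbb E[H^2\mid\mathcal F_{k-1}]$, averaged over remaining pairs, and check whether the earlier proof tolerates a random predictable-variance bound via a standard Freedman-type step.
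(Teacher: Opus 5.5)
Your route is the paper's: a Doob martingale along the draw order, the symmetry coupling under which $D_k$ registers a swap only when $x_k'$ is not drawn later (weight $a_k=u/(N-k)$), a rerun of the truncation--Chernoff proof of Theorem~\ref{theorem_highq}, and midpoint-rule sums. Your bound $|D_k|\le a_k\,\mathbb{E}[H(x_k,x_k')\mid\mathcal F_k]$ is the correct form of the paper's pointwise $|D_i|\le \frac{u}{N-i}H(x_i,x_i')$.

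Your worry about unconditional moments is harmless. For $a\neq b$, $\mathbb{P}(x_k=a,x_k'=b)=\frac{N-k}{N(N-1)(N-k+1)}\le N^{-2}$, and the diagonal contributes nothing to $D_k$. Hence $\mathbb{E}|D_k|^q\le a_k^q\,\mathbb{E}H(x,x')^q$ with $(x,x')$ i.i.d.\ uniform on $X$. This handles the large-increment term, and also the $\mu_k$ term once the paper's deterministic indicator is replaced by Markov's inequality.

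The genuine gap is the one you flag, and your step two does not close it. The Chernoff iteration needs $\mathbb{E}[U_k^2\mid\mathcal F_{k-1}]$ and $\mathbb{E}[|U_k|^p\mid\mathcal F_{k-1}]$ bounded by \emph{constants}. In Theorem~\ref{theorem_highq} this holds only because $(x_i,x_i')$ is independent of $\mathcal F_{i-1}$. Here, given $\mathcal F_{k-1}$, $x_k$ and $x_k'$ are i.i.d.\ uniform on the remaining pool $R_{k-1}=X\setminus\{x_1,\dots,x_{k-1}\}$. Conditional Jensen therefore yields only $a_k^q|R_{k-1}|^{-2}\sum_{a\neq b\in R_{k-1}}H(a,b)^q$. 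This is random and can exceed $a_k^q\,\mathbb{E}H(x,x')^q$ by the factor $(N/|R_{k-1}|)^2$. A Freedman-type step only moves the problem onto this random predictable variation.

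What does work is to use that worst-case factor deterministically and to run the martingale over the smaller of $S$ and $T$. Then $|R_{k-1}|>N/2$ and the inflation is at most $4$. The paper's ``by similar proof'' skips this point entirely.

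That makes the $m>u$ case unavoidable, and there your ``take the better bound'' step fails. The reversed martingale (weights $m/(N-k)$, $k\le u$) does give the stated $V_2$. But the same martingale must feed the polynomial term, where it gives $m^p\sum_{j=m}^{N-1}j^{-p}\,\mathbb{E}H(x,x')^p$ rather than the stated $V_p$. For $p=4$, $u=100$, $m=10^6$ this is about $100\,\mathbb{E}H^p$ against roughly $34\,\mathbb{E}H^p$.

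Conversely, the forward martingale gives the stated $V_p$, but its quadratic sum $u^2\sum_{j=u}^{N-1}j^{-2}$ exceeds the stated $V_2$. Already at $m=2$, $u=1$ one gets $5/4>16/15$, which also shows that the paper's displayed $p=2$ sum bound is false when $m>u$. Since both terms of the inequality come from one martingale, the two orderings cannot be mixed term by term.

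For $m\ge u$ the reversed $V_p$ is at most $3(p-1)$ times the stated one. So your plan, like the paper's, proves the inequality with the stated $V_p,V_2$ only after replacing $c_1,c_2$ by other $p$-dependent constants, not with the constants as stated.
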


We now apply Theorem~\ref{cor:symmetric_func} to study stability-based generalization for transductive regression. Our target is the generalization gap between the transductive risk and the training risk, which is defined as $\phi(S):=R(S)-\hat{R}(S).$
By controlling $|\mathbb{E}\phi(S)|$ and the one-swap increment $|\phi(S)-\phi(S')|$, where $S$ and $S'$ differ in exactly one point, we can invoke Theorem~\ref{cor:symmetric_func} to conclude the following result.
% To invoke Theorem~\ref{cor:symmetric_func}, we bound $|\mathbb{E}\phi(S)|$ and the one-swap increment $|\phi(S)-\phi(S')|$, where $S$ and $S'$ differ in exactly one point.

% \begin{lemma}
% \label{lem:diff_one_phi}
% Let $\mathcal{H}$ be an $L_p$-bounded hypothesis class and let $A$ be an $L_p$-stable algorithm. Let $h$ and $h'$ be the hypotheses returned by $A$ when trained on $S=\{x_1,\dots,x_i,\dots,x_m\}$ and $S'=\{x_1,\dots,x_i',\dots,x_m\}$, which differ in exactly one point, where $x_i'$ is an independent copy of $x_i$ for any $i\in[1,m]$. Define $\phi(S)=R(h)-\hat{R}(h)$. Then, for any $p\ge 2$,
%     \begin{align*}
%         \|\phi(S)-\phi(S')\|_p\leq 2\|H(x,x')\|_p + \Big(\frac{1}{m}+\frac{1}{u}\Big) M_p. 
%     \end{align*}
% \end{lemma}

% \begin{lemma}
% \label{lem:expectation_phi}
% Let h be the hypothesis returned by an $L_p$-stable algorithm $A$ trained on $S=\{x_1,\dots,x_m\}$. Then,
% \begin{align*}
% |\mathbb{E}_S[\phi(S)]|\leq \frac{1}{m}\sum_{i=1}^m \mathbb{E}H(x_i,x_i').
% \end{align*}
% \end{lemma}
\begin{theorem}
\label{coro:sym_main}
Let $\mathcal{H}$ be an $L_p$-bounded hypothesis class and let $A$ be a symmetric transductive $(L_p,\beta)$-Lipschitz stability algorithm satisfying Assumption~\ref{ass:trans_stab}. Let $h$ be the hypothesis returned by $A$ for a random partition $X\vdash(S,T)$. Then for any $y>0$, we have
%{\small
    \begin{align*}
        &\mathbb{P}\big( R(h)-\hat{R}(h)\geq y+  \beta \mathbb{E}H(x_i,x_i')\big) \\
        \leq& \frac{c_1V_p'}{y^p}+\exp\Big(- \frac{c_2y^2}{V_2'}\Big),
    \end{align*}
%}
    where 
    \begin{align*}
        V_p'=&\frac{u^p}{p-1}\bigg(\frac{1}{(u-1/2)^{p-1}}-\frac{1}{(m+u-1/2)^{p-1}}\bigg)\notag\\
        &\times \mathbb{E}|2\beta H(x,x')+|1/u-1/m | \beta'G(x,x')|^p, \\
        V_2'=&\frac{mu}{(m+u-1/2)(1-1/(2\max\{m,u\}))}\notag\\
        &\times \mathbb{E}|2\beta H(x,x')+|1/u-1/m | \beta'G(x,x')|^2,
    \end{align*}
    
    and the constants $c_1=4(4\frac{p+2}{p})^p$ and $c_2=1/(2(p+2)^2e^p)$ depend only on $p$.
\end{theorem}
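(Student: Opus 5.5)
We apply Theorem~\ref{cor:symmetric_func} to the symmetric function $\phi(S):=R(h)-\hat R(h)$. This reduces the claim to two ingredients: a one-swap increment bound of the form $|\phi(S)-\phi(S')|\le \widetilde H(x_i,x_{m+j})$, and a bias bound $|\mathbb{E}\phi(S)|\le \beta\,\mathbb{E}H$. Symmetry of $\phi$ in the training points holds because $A$ is symmetric, $\hat R$ is an average over $S$, and $T=\mathcal{X}\setminus S$ depends only on the set $S$. The probability statement then follows from
\[
\{\phi-\mathbb{E}\phi\ge y\}\supseteq\{\phi\ge y+\beta\mathbb{E}H\},
\]
using only the one-sided half of the two-sided tail. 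This also explains why the exponential term carries constant $1$ rather than $2$: in the martingale/Nagaev argument behind Theorem~\ref{cor:symmetric_func}, each side contributes one exponential term.

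\textbf{Increment.} Let $S'$ be obtained by swapping $x_i\in S$ with $x_{m+j}\in T$, and let $h'$ be the corresponding output. We split
\[
\phi(S)-\phi(S')=\big[R(h)-R(h')\big]-\big[\hat R(h)-\hat R(h')\big]+\text{(set-change terms)}.
\]
\begin{itemize}
\item \emph{Hypothesis change on fixed points.} On the common points, Assumption~\ref{ass:trans_stab} bounds each loss difference by $\beta H(x_i,x_{m+j})$. Averaging over $T$ and over $S$ gives a contribution at most $2\beta H$.
\item \emph{Set change with $h'$ fixed.} The training average loses $\ell(h',x_i)/m$ and gains $\ell(h',x_{m+j})/m$, while the test average does the reverse with weight $1/u$. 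These combine to
\[
\Big(\frac1u-\frac1m\Big)\big(\ell(h',x_i)-\ell(h',x_{m+j})\big),
\]
up to sign.
\end{itemize}
By Assumption~\ref{ass :trans_bound}, the set-change term is at most $|1/u-1/m|\,\beta' G(x_i,x_{m+j})$. This yields the envelope
\[
\widetilde H = 2\beta H+|1/u-1/m|\,\beta' G,
\]
which is exactly the quantity whose $p$-th and second moments appear in $V_p'$ and $V_2'$.

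\textbf{Bias.} By exchangeability of the uniform random partition, $\mathbb{E}\,\ell(h,x_{m+j})$ for a test point equals the expectation of the loss at $x_i$ of the hypothesis trained on the swapped sample, that is, $\mathbb{E}\,\ell(h^{(i)},x_i)$ where $x_i$ is now outside the training set. Hence
\[
\mathbb{E}\phi=\frac1m\sum_i \mathbb{E}\big[\ell(h^{(i)},x_i)-\ell(h,x_i)\big],
\]
and the stability assumption bounds this by $\beta\,\mathbb{E}H(x_i,x_i')$.

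\textbf{Main obstacle.} The expected difficulty is bookkeeping rather than technique: being careful about which $h$ or $h'$ is evaluated in the set-change term so that only one stability application is used and the coefficient is exactly $2\beta$. A second point to check is that the random pair $(x_i,x_{m+j})$ has the distribution under which the moments $\mathbb{E}|\widetilde H|^p$ in Theorem~\ref{cor:symmetric_func} are computed.
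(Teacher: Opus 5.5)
Your architecture matches the paper's: Theorem~\ref{cor:symmetric_func} applied to $\phi(S)=R(h)-\hat R(h)$, together with an increment bound and a bias bound (Lemmas~\ref{lem:diff_one_phi} and~\ref{lem:expectation_phi}). Your exchangeability argument for the bias and the $2\beta H$ part of the increment are correct.

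The step that fails is the set-change term. Pass from $(S,T)$ to $(S',T')$ with $h'$ held fixed. The training average changes by $\frac1m\big(\ell(h',x_{m+j})-\ell(h',x_i)\big)$, and the test average changes by $\frac1u\big(\ell(h',x_i)-\ell(h',x_{m+j})\big)$. Because $\phi$ is test \emph{minus} train, these opposite-signed changes add rather than cancel:
\[
\phi(S)-\phi(S') = (\text{terms bounded by } 2\beta H) + \Big(\frac1u+\frac1m\Big)\big(\ell(h',x_{m+j})-\ell(h',x_i)\big).
\]
Your argument therefore yields the envelope $2\beta H+(1/u+1/m)\beta' G$, not $2\beta H+|1/u-1/m|\beta' G$.

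No cleverer decomposition can repair this, because the statement as written is false. Take $u=m$ and an algorithm that always returns the same $h_0$; it is symmetric and stable with $H\equiv 1$, $\beta=\varepsilon$. Use a population of $2m$ points, with $\ell(h_0,\cdot)=1$ on half of them and $0$ on the other half, and set $G(x,x')=|\ell(h_0,x)-\ell(h_0,x')|\le 1$, $\beta'=1$. Then $|1/u-1/m|=0$, so for fixed $m$ we get $V_p'=O(\varepsilon^p)$ and $V_2'=O(m\varepsilon^2)$. Hence the right-hand side at $y=1/2$ tends to $0$ as $\varepsilon\to 0$. Yet $R(h_0)-\hat R(h_0)=1$ whenever $S$ is exactly the zero-loss half, which has probability $\binom{2m}{m}^{-1}$.

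The paper's proof of Lemma~\ref{lem:diff_one_phi} contains the same slip. Write $a=\ell(h,x_{m+j})$, $b=\ell(h',x_i)$, $c=\ell(h,x_i)$, $d=\ell(h',x_{m+j})$. The boundary terms are $\frac1u(a-b)-\frac1m(c-d)$. For $u\le m$, the piece $\frac1m(a-b-c+d)=\frac1m\big[(a-c)+(d-b)\big]$ consists of same-hypothesis, two-point differences. These cannot be bounded by the stability differences $|a-d|+|c-b|$, as is done there.

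If you replace $|1/u-1/m|$ by $1/u+1/m$ in $V_p'$ and $V_2'$, your proof goes through. This corrected form is consistent with the $(m+u)/(mu)$ factor in \citet{Cortes_2008}.

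A minor further point: Theorem~\ref{cor:symmetric_func} as stated is two-sided, with a factor $2$ on the exponential. Dropping that factor requires re-running the one-sided Chernoff step rather than citing the theorem as a black box.
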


\begin{remark}
In Appendix \S~\ref{sec:trans_proof}, we extend Theorems~\ref{cor:symmetric_func} and \ref{coro:sym_main} to allow coordinate-dependent control through functions $H_i(x_i,x_i')$ as in Assumption~\ref{ass:trans_stab}.
Theorem~\ref{coro:sym_main} shows that, with probability at least $1-\delta$,
\begin{align*}
&R(h)-\hat{R}(h) \nonumber\\ & \lesssim \sqrt{um} \Big(\beta \|H\|_2+\frac{|u-m|}{um}  \beta'\|G\|_2\Big)\sqrt{\log\frac{1}{\delta}} \\
&+(um)^{\frac1p} \Big(\beta\|H\|_p+\frac{|u-m|}{um} \beta'\|G\|_p\Big) \delta^{-\frac{1}{p}} +\beta \|H\|_1 .
\end{align*}

For comparison, under uniform stability with constant $\beta$ and almost surely bounded loss $0\le \ell\le L$, \citet{Cortes_2008} show that with probability at least $1-\delta$,
\begin{align*}
R(h)-\hat{R}(h) &\lesssim  \beta + \Big(\beta + \frac{L^2(m+u)}{mu}\Big)\sqrt{m}\sqrt{\log\frac{1}{\delta}}.
\end{align*}
To achieve vanishing bounds, their result, with significantly stronger assumptions, requires $\beta\ll m^{-1/2}$, whereas ours requires faster decay $\beta \ll (um)^{-1/2}$ with $\|H\|_p=O(1)$.
%    \nbqq{Plan to add a remark or corollary where $H_i(x_i,x_i')\equiv H(x_i,x_i')$ and algorithm $A$ can be symmetric, we can derive analogues to all the theorem in this section. and then compare to the result in \citet{Cortes_2008}. Or only add comparison? }
\end{remark}

\subsection{Meta-Learning} \label{subsec:meta}
Consider a (possibly randomized) meta-learning algorithm $A$ acting on a meta-sample $\IS=\{\Scal_1,\ldots,\Scal_m\}$, where each task dataset $\Scal_j=\{z_j^1,\ldots,z_j^n\}$ is drawn independently. Specifically, task distributions $\Dcal_1, \ldots, \Dcal_m$ are sampled i.i.d. from an unknown meta-distribution $\mu$ over a measurable space $\Zcal$, and for each $j$, the samples $z_j^1, \ldots, z_j^n$ are drawn i.i.d. from $\Dcal_j$.
Given $\IS$, the meta-learner outputs a task-level learning algorithm $A(\IS)$, which, when trained on a new task dataset $\Scal\sim \Dcal^n$ with $\Dcal\sim\mu$, produces a model $A(\IS)(\Scal)\in \Pcal$, where $\Pcal$ denotes the model space. We evaluate a model $P\in\Pcal$ on a test point $z\in\Zcal$ via a loss function $\ell:\Pcal\times \Zcal\to\R_+$.

In the context of meta-learning, the empirical meta-risk of a meta-learning algorithm $A$, evaluated on the meta-sample $\IS$, is given by
\begin{align}\label{eq:emp-meta-risk}
R(A(\IS), \IS)= \frac{1}{mn}\sum_{j=1}^m \sum_{i=1}^n \ell(\AS(\Scal_j), z_j^i),
\end{align}
and the population meta-risk is
\begin{align}\label{eq:popn-meta-risk}
R(A(\IS), \mu)= \IE_{\mathcal{D}\sim \mu}\IE_{(\Scal, z)\sim \mathcal{D}^{n+1} }\ell(\AS(\Scal), z).
\end{align}
In order to relate the empirical meta-risk \eqref{eq:emp-meta-risk} and the population meta-risk \eqref{eq:popn-meta-risk}, we impose meta-stability assumptions. We first introduce neighboring datasets. For a task dataset $\Scal=\{z^1,\ldots,z^n\}\sim \Dcal^n$ and index $i\in[n]$, let $\Scal^{(i)}=\{ z^1,\ldots,z^{i-1}, z^{i'}, z^{i+1},\ldots,z^n \}$, where $z^{i'}\sim\Dcal$ is an independent copy of $z^i$. For a meta-sample $\IS$ and indices $j\in[m], i\in[n]$, define the neighboring meta-sample $\IS^{(j,i)}=\{\Scal_1,\ldots,\Scal_{j-1}, \Scal_j^{(i)}, \Scal_{j+1},\ldots,\Scal_m\}$, so that $\IS$ and $\IS^{(j,i)}$ differ only by replacing $z_j^i$ with an i.i.d. copy $z_j^{i'}\sim \Dcal_j$. We write $\Scal^{\setminus i}=\Scal\setminus\{z^i\}$ and $\IS^{\setminus j}=\IS\setminus \Scal_j$.

\begin{assumption} \label{ass:stability}
 The meta-learning algorithm $A$ satisfies the following three stability conditions.

(i) \textit{Meta-stability across training tasks.} There exists a measurable function $H:\Zcal\times\Zcal\to\R_+$ with $\IE_{\Dcal \sim \mu }\IE_{ z, z'\ \overset{i.i.d.}{\sim}\Dcal }|H(z, z')|^p  < \infty$ for some $p\ge 2$ such that for all $j\in[m]$ and $i\in[n]$, 
        \begin{align}\label{eq:stab-task}
            &|\ell(\AS(\Scal), z)- \ell(A(\IS^{(j,i)})(\Scal), z)| \leq \beta H(z_j^i, z_j^{i'}),
        \end{align}
        %where $\IE_{\substack{z, z'\ \overset{i.i.d.}{\sim}\Dcal \\ \Dcal \sim \mu } }[H(z, z')]^p  < \infty$ for some $p\ge 2$. 
        
(ii) \textit{Within-task stability.} There exists a measurable function $G:\Zcal\times\Zcal\to\R_+$ with $\IE_{\Dcal \sim \mu }\IE_{z, z'\ \overset{i.i.d.}{\sim}\Dcal   }|G(z, z')|^{p'}  < \infty$ for some $p'>2$ such that for all $i\in[n]$,
        \begin{align} \label{eq:stab-samp}
            &|\ell(\AS(\Scal), z)- \ell(A(\IS)(\Scal^{(i)}), z)| \leq \beta' G(z^i, z^{i'}),
        \end{align}

(iii) \textit{Test-sample stability.} There exists a measurable function $\Mcal:\Zcal\times\Zcal\to\R_+$ with $\IE_{\Dcal \sim \mu }\IE_{z, z'\ \overset{i.i.d.}{\sim}\Dcal   }|\Mcal(z, z')|^{p''}  < \infty$ for some $p''>2$ such that for all $i\in[n]$,
        \begin{align} \label{eq:stab-test}
            &|\ell(\AS(\Scal), z)- \ell(\AS(\Scal), z')| \leq \beta'' \Mcal(z, z'),
        \end{align}
        %where $\IE_{\substack{z, z'\ \overset{i.i.d.}{\sim}\Dcal \\ \Dcal \sim \mu } }[G(z, z')]^{p'}  < \infty$ for some $p'>2$. 
\end{assumption}
Without loss of generality, we take $p'=p''=p$; otherwise, one can work with $p\wedge p'\wedge p''$.

\begin{remark}
We contrast Assumption \ref{ass:stability} with the closely related stability assumptions of \citet{Maurer_2005}. First, Assumption \ref{ass:stability} replaces almost-sure bounded stability gaps by a finite \(L_p\)-moment requirement. More importantly, part (i) is strictly weaker: \citet{Maurer_2005} perturbs the meta-sample by replacing an entire task with an i.i.d.\ copy, whereas we replace only a single within-task sample among the \(m\) tasks in \(\IS\). This smaller, more realistic perturbation yields a more verifiable stability condition, yet remains sufficient for sharp high-probability bounds.

% It is important to juxtapose Assumption \ref{ass:stability} from the conspicuously similar and ubiquitous stability assumptions appearing in \citet{Maurer_2005}. First, Assumption \ref{ass:stability} does not require the performance change of the meta-algorithm on neighboring samples to be bounded almost surely; instead, it only assumes finite $L_p$ moments. More importantly, part (i) is substantially weaker than its analogue in \citet{Maurer_2005}, where stability is defined by replacing an \emph{entire task} in the meta-sample by an i.i.d. copy. Here, we measure meta-stability by replacing only a \emph{single within-task sample} among the $m$ tasks in $\IS$. Such a perturbation is typically far smaller and thus yields a more realistic and verifiable stability requirement, while still being sufficient for sharp high-probability bounds. 
\end{remark}
\begin{theorem}
\label{thm:lp_sample_level_meta}
Let $\mu$ denote the task distribution. Given a meta-sample $\IS$ and a meta-algorithm $A$, recall the empirical and population meta-risks $ R(A(\IS), \IS)$ and $ R(A(\IS), \mu)$ from \eqref{eq:emp-meta-risk} and \eqref{eq:popn-meta-risk} respectively. Suppose $A$ satisfies Assumption \ref{ass:stability}. Then, for all $y>0$ it follows that

{\small
\vspace*{-0.5cm}
\begin{align}
       & \mathbb{P}\bigg( R(\AS, \IS) - R(\AS, \mu) \geq y + E[\beta H+ \beta'G] \bigg) \nonumber\\ \le &  c_{1} \frac{mn E|\beta H|^p + nm^{-(p-1)} E|\beta' G|^p + {(mn)^{-(p-1)}}E|\beta'' \mathcal{M}|^p}{y^p} \nonumber \\ 
       +&   2\exp\bigg(- \frac{c_{2}y^2}{mn E|\beta H|^2 + nm^{-1} E|\beta'G|^2+ (mn)^{-1} E|\beta''\mathcal{M}|^2}\bigg)\nonumber
   \end{align}
   }
   
   where $\IE$ is taken with respect to $z, z' \overset{i.i.d.}{\sim}\Dcal, \ \Dcal\sim \mu$, and $c_{1}, c_{2}$ are constants solely depending on $p$.
\end{theorem}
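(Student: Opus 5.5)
The proof of Theorem~\ref{thm:lp_sample_level_meta} has two parts. First I control the mean of $\Phi(\IS):=R(\AS,\IS)-R(\AS,\mu)$. Then I apply Theorem~\ref{theorem_highq} to $\Phi$, viewed as a function of the independent blocks of the meta-sample. The natural independent coordinates are the $mn$ points $z_j^i$, with the task distributions $\Dcal_j$ included. Concretely, I generate task $j$ as $(\Dcal_j, z_j^1,\dots,z_j^n)$ and condition on $\Dcal_1,\dots,\Dcal_m$. Conditionally on these, the $z_j^i$ are independent and Theorem~\ref{theorem_highq} applies. Afterwards I integrate over the $\Dcal_j$'s; the moments $\IE|H|^p$ are then taken under $\Dcal\sim\mu$. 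Strictly, the conditional centering is $\IE[\Phi\mid \Dcal_{1:m}]$, so the bias step must handle this conditional mean. Stability gives this uniformly in $\Dcal_{1:m}$, up to a term that is itself a function of the independent $\Dcal_j$'s, and that term is absorbed in the same way.

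\textbf{Step 1 (bias).} I show $\IE\Phi\le \IE[\beta H+\beta' G]$. I use the standard renaming trick. Write $R(\AS,\mu)=\IE\,\ell(\AS(\Scal),z)$ for a fresh task $(\Scal,z)$. By exchangeability, swap the fresh task with task $j$ and the fresh test point $z$ with $z_j^i$. Then $\IE\,\ell(\AS(\Scal_j),z_j^i)-\IE\,\ell(\AS(\Scal),z)$ becomes a difference of two quantities:
\begin{itemize}
\item the loss of the meta-learner trained on $\IS$ versus on $\IS$ with one within-task point $z_j^i$ replaced, which is controlled by \eqref{eq:stab-task} and costs $\beta\IE H$;
\item the inner training set containing versus not containing the test point, which is controlled by \eqref{eq:stab-samp} and costs $\beta'\IE G$.
\end{itemize}
Averaging over $i,j$ gives the shift $E[\beta H+\beta'G]$ in the event.

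\textbf{Step 2 (increments).} Replacing $z_j^i$ by $z_j^{i'}$ changes $\Phi$ through three channels:
\begin{itemize}
\item the meta-output $\AS$ changes, which affects every term of both risks by at most $\beta H(z_j^i,z_j^{i'})$ via \eqref{eq:stab-task}, giving a total of $2\beta H$;
\item the inner training set $\Scal_j$ of the $j$-th term changes, which affects the $n$ terms of task $j$, each weighted $1/(mn)$, by $\beta' G$ via \eqref{eq:stab-samp}, giving $\beta' G/m$;
\item the test point $z_j^i$ itself changes in one term, giving $\beta''\Mcal/(mn)$ via \eqref{eq:stab-test}.
\end{itemize}
So $|\Phi-\Phi^{(j,i)}|\le 2\beta H+\beta' G/m+\beta''\Mcal/(mn)=:H_{ji}$. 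Summing over the $mn$ coordinates with $(a+b+c)^p\le 3^{p-1}(a^p+b^p+c^p)$ gives
\[
\sum_{j,i}\IE H_{ji}^p\lesssim mn\,\IE|\beta H|^p+n m^{1-p}\IE|\beta'G|^p+(mn)^{1-p}\IE|\beta''\Mcal|^p ,
\]
and similarly for $p=2$. These are exactly the numerators in the statement. Plugging them into Theorem~\ref{theorem_highq} and absorbing $3^{p-1}$ and the factor $2$ into $c_1,c_2$ yields the bound.

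\textbf{Main obstacle.} The delicate point is Step 1 together with the conditioning. The population risk is an expectation over a fresh task, while the stability conditions are stated for replacements within existing tasks. The renaming argument must therefore be arranged so that each comparison replaces only a single point, which stays within \eqref{eq:stab-task} rather than a whole-task perturbation. Each comparison must also incur only $\beta H$ or $\beta' G$ once. I would try to handle this with the leave-one-point ghost sample, in the style of \citet{Bousquet_2002}, applied to task $j$ drawn from $\Dcal_j$.
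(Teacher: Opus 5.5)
Your Step~2 is correct and is exactly the paper's increment bound $|g-g_{k,l}|\le 2\beta H+\beta' G/m+\beta''\Mcal/(mn)$. Step~1, however, has a genuine gap, and it is the one you flag as the main obstacle.

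After you exchange the fresh task $(\Dcal,\Scal,z)$ with task $j$, the meta-learner on the population side is trained on $\IS^{(j)}$, not on $\IS^{(j,i)}$. In $\IS^{(j)}$ the \emph{entire} task $\Scal_j$ has been replaced: all $n$ points, drawn from a different task distribution. Condition \eqref{eq:stab-task} only controls replacing a single $z_j^i$ by a copy from the same $\Dcal_j$, so it says nothing about a whole-task swap. Even chaining $n$ single-point swaps, if the inequality held for arbitrary replacement values, would cost $\beta\sum_{i}H(z_j^i,\tilde z_j^i)$ with $\tilde z_j^i$ from another task. That is of order $n\beta\,\IE H$, under a different coupling.

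No ghost-sample arrangement can fix this, because the bias bound $\IE\Phi\le\IE[\beta H+\beta'G]$, and indeed the theorem itself, fails under Assumption~\ref{ass:stability}. A counterexample:
\begin{itemize}
\item Take $\Dcal_\theta=N(\theta,1)$ with $\theta\sim U[0,1]$, and write $\hat\theta_k$, $\hat\theta_{\Scal}$ for task sample means.
\item Let $\AS(\Scal)=Cm\min_{k}|\hat\theta_{\Scal}-\hat\theta_k|$ and $\ell(P,z)=1-\min(1,P)$.
\item Moving one point moves one sample mean by $|z-z'|/n$, and the loss ignores $z$. Hence \eqref{eq:stab-task}--\eqref{eq:stab-test} hold with $H=G=|z-z'|$, $\beta=\beta'=Cm/n$ and $\Mcal\equiv0$.
\item Every training task gives loss $1$, so $R(\AS,\IS)=1$.
\item A union bound, using that the density of $\hat\theta_\Scal$ is at most $1$, gives $R(\AS,\mu)\le 2/C$ for every $\IS$.
\item With $C=8$, $y=1/4$ and $n=m^4$, the event in the theorem has probability one. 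The right-hand side is $O(m^{-1})+2\exp(-cm)$.
\end{itemize}

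The paper's proof takes the same route as yours and has the same defect. In bounding the mean, it replaces task $j$ by an independent copy to form $\IS^{(j)}$. It then bounds $\ell(\AS(\Scal_j),z_j^i)-\ell(A(\IS^{(j)})(\Scal_j),z_j^i)$ by the single-point quantity $H(z_j^i,z_j^{i'})$. So your proposal reproduces the paper's argument, including its unjustified step. What is actually needed is an additional task-level (whole-task replacement) stability condition as in \citet{Maurer_2005}; with it, your ghost-task computation goes through with the task-level constant in place of $\beta\,\IE H$.

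The same missing ingredient also undermines your conditioning step. You are right that the $mn$ points are not independent unconditionally, since points in one task share $\Dcal_j$; the paper ignores this. But given $\Dcal_{1:m}$, Theorem~\ref{theorem_highq} only concentrates $\Phi$ around $\IE[\Phi\mid\Dcal_{1:m}]$. Controlling how this conditional mean varies with the $\Dcal_j$'s is again a whole-task sensitivity that Assumption~\ref{ass:stability} does not provide, so ``absorbed in the same way'' has nothing to run on.
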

% Define
% \[
% V_p := \sum_{j=1}^m\sum_{i=1}^n \mathbb{E}\!\left[(2H_{j,i})^p\right],
% \qquad
% V_2 := \sum_{j=1}^m\sum_{i=1}^n \mathbb{E}\!\left[(2H_{j,i})^2\right].
% \]
% Then for all $\delta\in(0,1)$, with probability at least $1-\delta$,
% \[
% \begin{aligned}
% L(A(S),\mu)
% \;\le\;
% &L(A(S),S)
% \;+\; \frac{1}{mn}\sum_{j=1}^m\sum_{i=1}^n \mathbb{E}[H_{j,i}]
% \\
% &\;+\;
% \Big(\frac{c_1 V_p}{\delta}\Big)^{1/p}
% \;+\;
% \sqrt{\frac{V_2}{c_2}\log\frac{1}{\delta}},
% \end{aligned}
% \]
% where $c_1,c_2>0$ depend only on $p$.

\begin{remark}
Theorem~\ref{thm:lp_sample_level_meta} admits an equivalent high-probability form: for any $\delta\in(0,1)$, with probability at least $1-\delta$, 
%{\small %\vspace*{-0.5cm}
    \begin{align*}
        & R(\AS, \IS) -R(\AS, \mu) \lesssim  \IE[\beta H+\beta' G]\\ 
        &   +\delta^{-\frac1p} \big( (mn)^{\frac1p}\beta\|H\|_p + n^{\frac1p} m^{\frac1p-1}\beta'\|G\|_p \big)   \\ 
        & + \delta^{-\frac1p} \big(  (mn)^{\frac1p-1} \beta''\|\mathcal{M}\|_p \big)  + \sqrt{\log(1/\delta)} \big(\sqrt{mn} \beta\|H\|_2 \big) \\
        & + \sqrt{\log(1/\delta)} \big(  n^{\frac12}m^{-\frac12}\beta'\|G\|_2 + (nm)^{-\frac12} \beta''\|\mathcal{M}\|_2 \big).
    \end{align*}
%}

For comparison, under uniform meta-stability with constant $\beta$, and almost surely bounded loss $0\le \ell\le L$, \citet{Wang_2024} show that with probability at least $1-\delta$, 
    \begin{align*}
        &R(\AS, \IS -R(\AS, \mu) \\
        \lesssim&  \beta\log(mn)\log(1/\delta)+ L\sqrt{\log(1/\delta)/(mn)} .
    \end{align*}
To achieve vanishing bounds, their result requires $\beta\ll (\log (mn) )^{-1}$, whereas ours requires faster decay $\|H\|_q=O(1),\|G\|_q=O(1)$, $\|\Mcal\|_q=O(1)$, $\beta\ll (mn)^{-1/2}$, $\beta' \ll n^{-1/q} m^{1-1/q}$, and $\beta'' \ll (mn)^{1-1/q}$ for $q=2,p$. 
\end{remark}

% To compare with \citet{Wang_2024}, we assume $\|H\|_2 = O(1/(mn))$, and $\|G\|_2 = O(1/n)$, we obtain the following high-probability statement from  Theorem~\ref{thm:lp_sample_level_meta} : for any $\delta\in(0,1)$, with probability at least $1-\delta$, 
%     \begin{align*}
%         &R(\AS, \mu) -R(\AS, \IS) \notag\\ 
%         \lesssim&  \IE[H+G]  +\frac{1}{(mn)^{1-1/p}\delta^{1/p}}  + \sqrt{\log (1/\delta)/(mn)}. %\label{eq:compare-with-wang}
%     \end{align*}
% We recover verbatim the third term in Theorem 3 of \citet{Wang_2024}. Moreover, we substantially improve the second term above relative to \cite{Wang_2024} in its dependence on $m$ and $n$ under the weaker $L_p$-meta stability assumptions (we obtain $(mn)^{1-1/p}$ instead of $\log(mn)$), at the cost of a slightly weaker $\delta$-dependence ($\delta^{-1/p}$ instead of $\log(1/\delta)$). Nonetheless, the qualitative message matches the supervised setting: replacing bounded uniform stability by $L_p$ stability allows heavier tails, but requires faster decay of the relevant moments (in $m,n$, and $\delta$) for the generalization bound to vanish. 
\section{Numerical experiments} \label{se:simu}
In this section, we provide some empirical studies highlighting the tightness of our theoretical bounds. Consider i.i.d. observations $S:=\{z_i=(x_i, y_i)\}_{i=1}^m \in \R^{d}\times \R$ from the linear model $Y= X\beta + \varepsilon$, where the errors satisfy $\varepsilon \overset{d}{=} U_1^{-1/\nu} - U_2^{-1/\nu}$ with $U_1, U_2 \overset{i.i.d.}{\sim} U[0,1]$. Note that here $p= \nu/2$. We set $d=5$, $\beta=(1,1,\ldots, 1)^\top$, and vary $m\in \{500, 1000\}$, and $\nu=\{2.2, 4.4\}$. If the bound in Theorem \ref{thm:emp_loo} is sharp, the ratio $p(y):=\frac{\IP(|R- R_{emp}|>y)}{\IP(|R-R_{emp}|> C_0 y)}$ should stabilize near $C_0^{\nu/2}$ for large $y$. We set $C_0=1.5$. To incorporate stability in our analysis, we use ridge regression for empirical risk minimization with regularization parameter $\lambda=1.0$. Tail probabilities are empirically estimated via $50,000$ Monte Carlo draws.  
\begin{figure}[ht]
  \centering
  \begin{minipage}[t]{0.48\columnwidth}
    \centering
    \includegraphics[width=\linewidth]{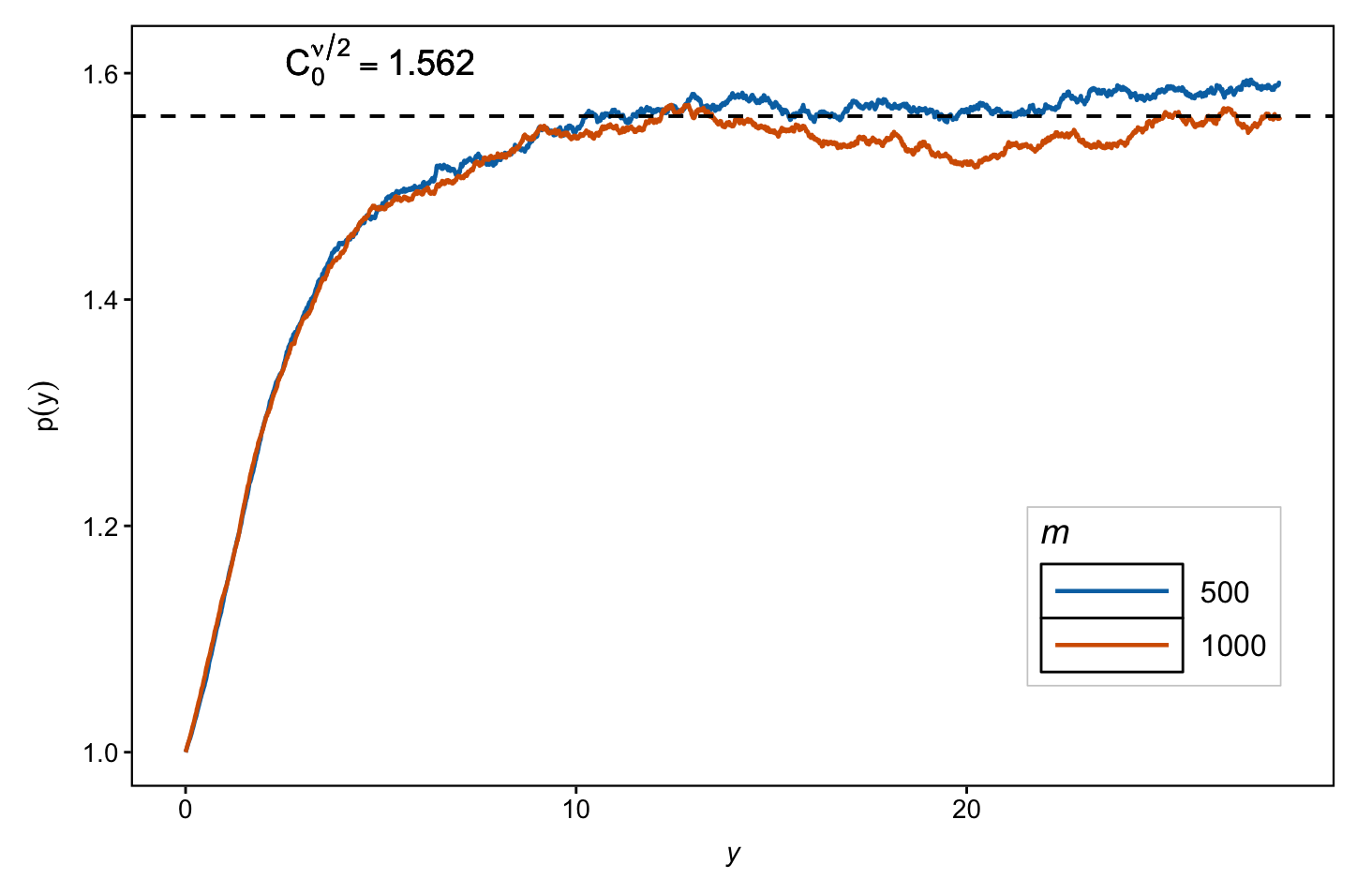}
    \caption*{(a) $\nu=2.2$}
  \end{minipage}\hfill
  \begin{minipage}[t]{0.48\columnwidth}
    \centering
    \includegraphics[width=\linewidth]{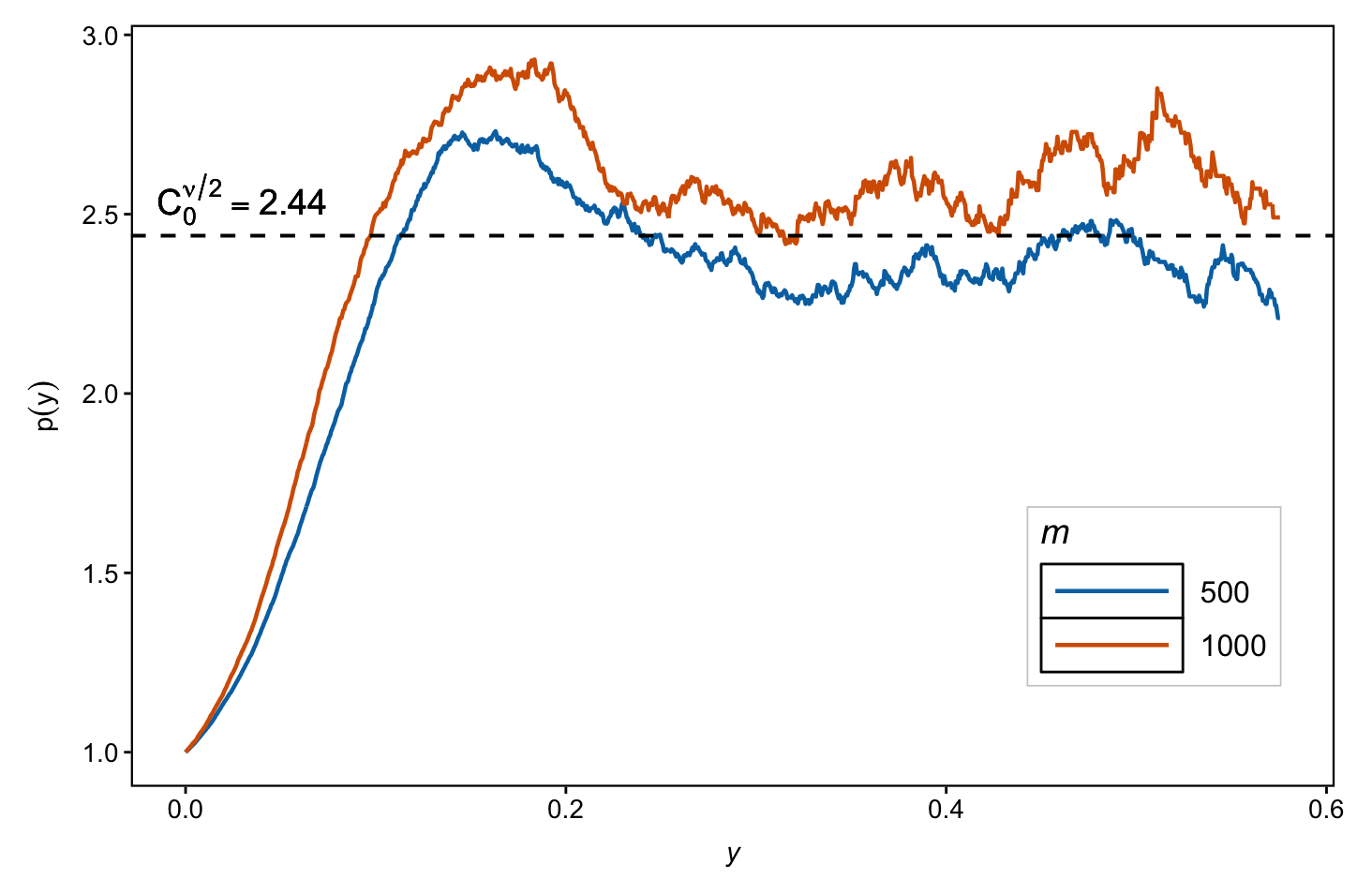}
    \caption*{(b) $\nu=4.4$}
  \end{minipage}
  \caption{Plot of $p(y)$ versus $y$; both curves stabilize around $C_0^{\nu/2}$ for large $y$.}
  \label{fig:one-images}
\end{figure}
Figure~\ref{fig:one-images} shows that $p(y)$ exhibits initial exponential growth before slowing down and stabilizing near $C_0^{\nu/2}$, further vindicating the importance of the polynomial-in-$y$ term in Theorem \ref{thm:emp_loo}. For larger $\nu$ (e.g., $\nu = 4.4$), the Gaussian tail dominates the polynomial tail more strongly at smaller values of $y$. Consequently, $p(y)$ may exceed the threshold $C_0^{\nu/2}$ initially, before stabilizing in the large-$y$ regime.  This behavior confirms the tightness of our results. 

Moreover, as baby steps towards more practically relevant stability analysis, we perform similar experiments on two layer neural-network, trained via ADAM \cite{kingma2015adam}, for a regression problem. The corresponding plot of $p(y)$ versus $y$ appears in Figure \ref{fig:nn_main}. In this result, even for a modern training algorithm, the heavy-tailed nature of the data shines through in distinct characterizations of Gaussian and polynomial tails.
Additional details and experiments for transductive learning appear in Appendix \S~\ref{se:simu-app}.  \begin{figure}[h]
    \centering
    \includegraphics[width=0.75\linewidth]{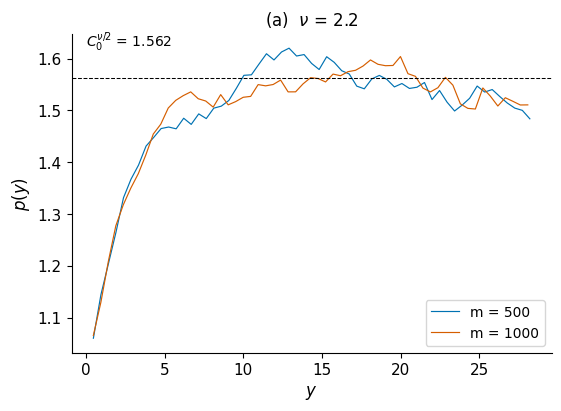}
    \caption{Plot of $p(y)$ versus $y$ for the neural network regression experiment in Section \ref{se:nn}.}
    \label{fig:nn_main}
\end{figure}

%Here we note that usually $\|H\|_2 \asymp \|H\|_p   \lesssim (mn)^{-1}$, $\|G\|_2 \asymp \|G\|_p \lesssim n^{-1}$ and $\|\mathcal{M}\|_2 \asymp \|\mathcal{M}\|_p \lesssim 1$. Also compare with \citet{Maurer_2005} by choosing $p=2$. 

\section{Conclusion and Future Works}
In this work, we provide a systematic treatment of stability analysis under weakened $L_p$ assumptions by establishing a new large-deviation inequality, which may be of independent interest. Theorems~\ref{theorem_highq} and~\ref{thm:nagaev-p<2} broaden the scope of stability-based generalization by accommodating settings where the stability increment admits either higher-order finite moments ($p\ge 2$) or only lower-order moments ($p\in(1,2)$). We develop three representative applications in Section~\ref{se:appl}, and provide supporting empirical evidence for the sharpness of our bounds in Section~\ref{se:simu}.  

Our results also suggest several natural directions for future research: \textbf{(i)} extending the analysis to dependent data (e.g., Markovian sampling) \cite{yang2021simple, wang2022stability}; \textbf{(ii)} extending the framework to unsupervised learning problems \cite{cheng2021algorithmic}; and \textbf{(iii)} establishing matching lower bounds in specific settings to formally quantify when the polynomial correction term is unavoidable under finite-moment assumptions.

\newpage

\section*{Impact Statement}
This paper develops theoretical tools for stability-based generalization under unbounded losses, replacing classical boundedness or sub-Weibull assumptions with finite $L_p$ moment conditions on one-sample perturbations. By enabling high-probability generalization guarantees in settings where exponential-tail assumptions are implausible, the results may improve the theoretical foundations of learning with heavy-tailed data (e.g., robust regression, learning under contamination, and other non-ideal data regimes). This can support more reliable model assessment and encourage principled handling of outliers and unbounded losses.

The work is primarily theoretical and does not introduce new learning systems, datasets, or deployment-facing algorithms. As with most generalization theory, the results could be misinterpreted as a blanket endorsement of a model’s safety or suitability for high-stakes deployment. Our bounds certify statistical generalization under specific stability and moment conditions; they do not address fairness, privacy, security, or downstream harms. Improperly claiming ``theory-backed reliability'' without validating the assumptions (e.g., finite moments or sufficient stability decay) could create false confidence.

We do not foresee direct societal harms arising from this theoretical contribution. We encourage practitioners who use these insights to (i) empirically validate the relevant stability/moment conditions when possible, and (ii) complement generalization guarantees with domain-appropriate safety, robustness, and ethical evaluations.

% In the unusual situation where you want a paper to appear in the
% references without citing it in the main text, use \nocite
%\nocite{langley00}

\bibliography{stability}
\bibliographystyle{icml2026}

%%%%%%%%%%%%%%%%%%%%%%%%%%%%%%%%%%%%%%%%%%%%%%%%%%%%%%%%%%%%%%%%%%%%%%%%%%%%%%%
%%%%%%%%%%%%%%%%%%%%%%%%%%%%%%%%%%%%%%%%%%%%%%%%%%%%%%%%%%%%%%%%%%%%%%%%%%%%%%%
% APPENDIX
%%%%%%%%%%%%%%%%%%%%%%%%%%%%%%%%%%%%%%%%%%%%%%%%%%%%%%%%%%%%%%%%%%%%%%%%%%%%%%%
%%%%%%%%%%%%%%%%%%%%%%%%%%%%%%%%%%%%%%%%%%%%%%%%%%%%%%%%%%%%%%%%%%%%%%%%%%%%%%%
\newpage
\appendix
\onecolumn
\section*{\Large Appendices}

In this appendix, we provide additional discussion of the technical novelty, the proofs of our technical results, and additional experimental evidence behind the tightness of our theoretical results. In particular, \S\ref{app:se-lit-rev} connects our work to two closely related papers in the literature. \S\ref{se:proof:B} collects the proofs of our main technical results Theorem \ref{theorem_highq} and \ref{thm:nagaev-p<2}. \S\ref{se:proof-erm}, \S\ref{sec:trans_proof} and \S\ref{se:eta-proof} contains the proofs of results in \S\ref{subsec:ERM}, \S\ref{subsec:Trans} and \S\ref{subsec:meta} respectively. Finally, we provide some additional simulation results in \S\ref{se:simu-app}.

\section{Technical novelty compared to existing literature} \label{app:se-lit-rev}

In this section, we provide additional discussion connecting this manuscript to \citet{celisse2016stability} and \citet{Li_2024_JMLR}.

The main contribution of \citet{celisse2016stability} is to introduce $L_q$-stability as a higher-moment stability notion for the leave-one-out (LOO) estimator and to show, for concrete models such as ridge regression, how this condition can be verified and then combined with moment inequalities to obtain PAC-exponential LOO bounds. Our contribution differs in emphasis: starting from a replace-one perturbation bound dominated by a random envelope $H$ with only a fixed finite $L_q$ moment, we derive a high-probability Nagaev-type concentration and generalization bound. Thus, \citet{celisse2016stability} primarily studies when higher-moment stability holds, whereas our paper studies what high-probability guarantees follow from a finite-moment replace-one stability condition. Their work is best viewed as closely related model-side evidence that higher-moment stability conditions are meaningful, complementing the motivation for ours.

The work of \citet{Li_2024_JMLR} is structurally closer to our framework, particularly Theorems~4 and~5 and the subsequent Remark~6. Theorem~5 therein provides a symmetrization-like argument that may be leveraged to deduce exponential tails even under polynomial moment assumptions. In contrast, our proof strategy hinges more on a clever truncation-trick, borrowing on ideas from \citet{nagaev1979}. On the other hand, \citet{Li_2024_JMLR}, in their Corollary 29, also attempts to derive a similar Gaussian-plus-polynomial deviation bound by leveraging Theorems 4 and 5 therein. Unfortunately, it can be seen that the proof contains a fatal gap that cannot be trivially rescued. In the polynomial-moment regime, Corollary 28 provides only a fixed-$p$ moment bound, since the assumption is merely that there exists some $p\geq 2$ such that $\mathbb E |f_i(X)(x)|^p\leq b_i$. The proof of Corollary 29 then applies Markov’s inequality to obtain a $t^{-p}$ tail, and in the variance-dominated case chooses a special threshold $t\asymp \sqrt{p}(\sum_i\sigma_i^2)^{1/2}$ so that this polynomial bound equals $e^{-p}$. This only gives a bound at that particular threshold. The subsequent step, where this is rewritten as a sub-Gaussian bound for arbitrary $t>0$, is not justified unless one can vary $p$ with $t$, i.e. unless a whole family of moment bounds is available. That is not the stated assumption of Corollary 28. Hence, while Corollary 29 has essentially the same formal statement as our Theorem~\ref{theorem_highq}, its displayed proof does not justify the full Gaussian-plus-polynomial inequality from the stated polynomial-moment assumption. In principle, a large deviation bound similar to our Theorem~\ref{theorem_highq} may also be developed using clever chaining arguments on top of Theorem 5 of \citet{Li_2024_JMLR}, but it is not an obvious extension. In contrast, our Theorem~\ref{theorem_highq} goes through a different route to provide a direct and self-contained derivation of this Nagaev-type bound in the stability setting, settling the problem for once and all.

\section{Proofs of Theorems in Section~\ref{se:general-thm}} \label{se:proof:B}
\subsection{Technical lemmas}

\begin{lemma}
\label{Lemma_mgf_bound_p>2}
    Let $X\in\mathcal{X}$ be a random variable and a function $g:\mathcal{X}\to\mathbb{R}$ satisfying $\mathbb{P}(g(X)>b)=0, b>0$, and for any $p\geq 2$
    \begin{equation}
        \lVert g(X)\rVert_p<\infty.
    \end{equation}
    Then for any positive $\phi$,
    \begin{equation}
    \label{eq_mgf_bound_p>2}
        \log \mathbb{E}\exp\{\phi g(X)\}\leq  \phi \mathbb{E}g(X)+\frac{e^p\phi^2}{2}\mathbb{E}[g(X)^2]+\frac{e^{\phi b}-1-\phi b}{b^p}\mathbb{E}[|g(X)|^p]\times\mathbb{I}\{\phi>p/b\}.
    \end{equation}
\end{lemma}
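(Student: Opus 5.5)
The plan is to reduce the log-MGF bound to a pointwise inequality for $\psi(u):=e^u-1-u\ge 0$. Write $Y=g(X)$, so that $Y\le b$ almost surely and $\mathbb{E}|Y|^p<\infty$. The inequality $\log x\le x-1$ gives
\begin{equation*}
\log\mathbb{E}e^{\phi Y}\le \mathbb{E}e^{\phi Y}-1=\phi\,\mathbb{E}Y+\mathbb{E}\,\psi(\phi Y).
\end{equation*}
It therefore suffices to bound $\mathbb{E}\,\psi(\phi Y)$ by the last two terms of \eqref{eq_mgf_bound_p>2}. All the remaining work is pointwise.

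\textbf{Step 1: the basic quadratic bound.} From the Taylor remainder, $\psi(u)=\tfrac{u^2}{2}e^{\xi}$ for some $\xi$ between $0$ and $u$. Hence
\begin{equation*}
\psi(u)\le \tfrac{u^2}{2}e^{\max(u,0)}.
\end{equation*}
Two consequences follow.
\begin{itemize}
\item If $u\le 0$, then $\psi(u)\le u^2/2$, so negative values of $Y$ are always harmless.
\item If $u\le p$, then $\psi(u)\le e^p u^2/2$.
\end{itemize}

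\textbf{Step 2: the case $\phi\le p/b$.} Since $Y\le b$, we have $\phi Y\le \phi b\le p$ almost surely. Step 1 then gives $\psi(\phi Y)\le \tfrac{e^p\phi^2}{2}Y^2$ pointwise. Taking expectations yields the claim, with the indicator term equal to zero.

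\textbf{Step 3: the case $\phi>p/b$.} Split according to whether $Y\le p/\phi$ or $p/\phi<Y\le b$.
\begin{itemize}
\item On $\{Y\le p/\phi\}$, again $\phi Y\le p$, so $\psi(\phi Y)\le \tfrac{e^p\phi^2}{2}Y^2$.
\item On $\{p/\phi<Y\le b\}$, the argument $u=\phi Y$ lies in $[p,\phi b]$. Here I use that $u\mapsto \psi(u)/u^p$ is nondecreasing on $[p,\infty)$. This gives
\begin{equation*}
\psi(\phi Y)\le \psi(\phi b)\Big(\frac{\phi Y}{\phi b}\Big)^p=\frac{e^{\phi b}-1-\phi b}{b^p}\,|Y|^p.
\end{equation*}
\end{itemize}
Summing the two pieces and bounding each restricted expectation by the full expectations $\mathbb{E}Y^2$ and $\mathbb{E}|Y|^p$ gives \eqref{eq_mgf_bound_p>2}. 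Both bounds are nonnegative, so enlarging the domains of integration is allowed.

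\textbf{Main obstacle: the monotonicity claim.} This is the only non-routine step. Differentiating, the sign of $\frac{d}{du}\big(\psi(u)/u^p\big)$ is the sign of
\begin{equation*}
u\psi'(u)-p\psi(u)=u(e^u-1)-p(e^u-1-u)=(u-p)(e^u-1)+pu .
\end{equation*}
For $u\ge p>0$ both terms on the right are nonnegative, so the ratio is nondecreasing on $[p,\infty)$. This is exactly why the threshold $p/\phi$, and hence the indicator $\mathbb{I}\{\phi>p/b\}$, appears in the statement.
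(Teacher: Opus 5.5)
Your proposal is correct and follows the paper's proof essentially step for step. Both use $\log x\le x-1$, then the quadratic Taylor bound $e^u-1-u\le e^p u^2/2$ for $u\le p$ on $\{\phi g(X)\le p\}$, and then the monotonicity of $(e^u-1-u)/u^p$ on $[p,\infty)$ to handle $\{p/\phi<g(X)\le b\}$. The only addition is your derivative computation $(u-p)(e^u-1)+pu\ge 0$, which justifies the monotonicity step that the paper asserts without proof.
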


\begin{proof}[\bf Proof of Lemma~\ref{Lemma_mgf_bound_p>2}]
    By Taylor expansion,
    \begin{equation}
        \mathbb{E}\exp\{\phi g(X)\}= 1 + \phi \mathbb{E}g(X)+\int_{g(X)\leq b}(e^{\phi g(X)} - 1 - \phi g(X))\mathrm{d}F(g(X)) .
    \end{equation}
    Consider 2nd-order Taylor formula with integral remainder,
    \begin{equation}
        e^x=1+x+x^2\int_0^1(1-s)e^{sx}\mathrm{d}s .
    \end{equation}
    Then for any $x\leq t$, when $s\in[0,1]$, $sx\leq x\leq t$, we have
    \begin{equation}
        e^x-1-x=x^2\int_0^1(1-s)e^{sx}\mathrm{d}s\leq x^2\int_0^1(1-s)e^t\mathrm{d}s=\frac{e^tx^2}{2}.
    \end{equation}
    Therefore, suppose that $\phi\leq p/b$, then
    \begin{equation}
    \label{eq_mgf_bound_leq}
        \int_{g(X)\leq b}(e^{\phi g(X)} - 1 - \phi g(X))\mathrm{d}F(g(X))\leq \int_{g(X)\leq p/\phi}(e^{\phi g(X)} - 1 - \phi g(X))\mathrm{d}F(g(X))\leq\frac{e^p \phi^2}{2}\mathbb{E}[g(X)^2] .
    \end{equation}
    Then, for $\phi>p/b$,
    \begin{align}
    \label{eq_mgf_bound_geq}
        &\int_{g(X)\leq b}(e^{\phi g(X)} - 1 - \phi g(X))\mathrm{d}F(g(X))\notag\\
        \leq &\int_{g(X)\leq p/\phi}(e^{\phi g(X)} - 1 - \phi g(X))\mathrm{d}F(g(X))+\int_{g(X)> p/\phi}\frac{e^{\phi g(X)} - 1 - \phi g(X)}{g(X)^p}g(X)^p\mathrm{d}F(g(X)).
    \end{align}
    Since $\frac{e^{\phi g(X)} - 1 - \phi g(X)}{g(X)^p}$ increase w.r.t. $g(X)$ for $g(X)>p/\phi$, on the region \(p/\phi<g(X)\le b\) we have \(g(X)>0\).
    thus
    \begin{align}
        &\int_{g(X)> p/\phi}\frac{e^{\phi g(X)} - 1 - \phi g(X)}{g(X)^p}g(X)^p\mathrm{d}F(g(X))\notag\\
        \leq& \frac{e^{\phi b}-1-\phi b}{b^p}\int_{g(X)>p/\phi}|g(X)|^p\mathrm{d}F(g(X))
        \leq \frac{e^{\phi b}-1-\phi b}{b^p}\mathbb{E}[|g(X)|^p].
    \end{align}
    From Equation~\eqref{eq_mgf_bound_leq} and Equation~\eqref{eq_mgf_bound_geq}, with $\log x\leq x-1$, we obtain the bound in Equation~\eqref{eq_mgf_bound_p>2}.
\end{proof}

\subsection{Proof of Theorem~\ref{theorem_highq}} 

\begin{proof}[\bf Proof of Theorem~\ref{theorem_highq}]

Let $\mathcal{F}_k=(\dots,X_{k-1},X_k)$ and define the projection operator $\mathcal{P}_i(\cdot)=\mathbb{E}(\cdot|\mathcal{F}_i)-\mathbb{E}(\cdot|\mathcal{F}_{i-1})$. For simplicity of notation, denote $\Delta_n=f(X_1,X_2,\dots,X_n)-\mathbb{E}f(X_1,X_2,\dots,X_n)$. Then, we can obtain the following decomposition, 
    \begin{align}
        |\Delta_n|&=|\sum_{i\leq n}\mathcal{P}_i(g)|=|\sum_{i\leq n}\mathbb{E}(g-g_i|\mathcal{F}_{i})|,
    \end{align}
    where $g=f(X_1,X_2,\dots,X_{i-1},X_i,X_{i+1},\dots,X_n)$ and $g_i=f(X_1,X_2,\dots,X_{i-1},X_i',X_{i+1},\dots,X_n)$ is the coupled version of $g$ obtained by replacing $X_i$ by an i.i.d. copy $X_i'$.
    
    For any $y=pz/4(p+2)$ and $i \in [n]$, define the truncated version of $\tilde{\mathcal{P}_i}(g)=\mathcal{P}_i(g)\mathbf{1}\{|\mathcal{P}_i(g)|\leq y\}$. Note that if every increment satisfies $|\mathcal{P}_i(\Delta_n)|<y$, then $\mathcal{P}_i(\Delta_n)=\tilde{\mathcal{P}}_i(\Delta_n)$ for all i, so
    \begin{equation}
        \Delta_n=\sum_{i=1}^n\mathcal{P}_i(\Delta_n)=\sum_{i=1}^n\tilde{\mathcal{P}_i}(\Delta_n).
    \end{equation} 
    Hence, on the event $\{|\Delta_n|>z\}$, we have either the truncated sum itself exceeds $z$, or at least one increment satisfies $|\mathcal{P}_i(\Delta_n)\geq y|$. Consequently, we can obtain the following decomposition
    \begin{equation}
        \mathbb{P}(|\Delta_n|>z)\leq\mathbb{P}(|\sum_{i\leq n}\tilde{\mathcal{P}_i}(\Delta_n)|\geq z) + \sum_{i=1}^n\mathbb{P}(|\mathcal{P}_i(\Delta_n)|\geq y)=:I_1+I_2. \label{eq:I1-I2}
    \end{equation}
    For $I_2$ in \eqref{eq:I1-I2}, by Markov inequality, one derives
    \begin{align}
        I_2 &\leq \frac{\sum_{i=1}^n\mathbb{E}|\mathcal{P}_i(g)|^p}{y^p}\leq \frac{\sum_{i=1}^n\mathbb{E}|H_i(X_i,X_i')|^p}{y^p}, \label{eq:I2}
    \end{align}
    where the second inequality comes from
    \begin{equation}
 \mathbb{E}|\mathcal{P}_i(g)|^p=\mathbb{E}|\mathbb{E}(g-g_i|\mathcal{F}_{i})|^p\leq \mathbb{E}[\mathbb{E}(|g-g_i|^p|\mathcal{F}_{i})]=\mathbb{E}|g-g_i|^p\leq \mathbb{E}|H_i(X_i,X_i')|^p.
    \end{equation}

    Then, we bound the term $I_1$. Define
    \begin{equation}
        \mu_i:=\mathbb E[\tilde{\mathcal{P}}_i(g)|\mathcal{F}_{i-1}], \qquad U_i:=\tilde{\mathcal{P}}_i(g)-\mu_i,
    \end{equation}
    where
    \begin{align}
        |\mu_i|=&|\mathbb{E}[\tilde{\mathcal{P}}_i|\mathcal{F}_{i-1}]|=|\mathbb{E}[\mathcal{P}_i(g)\mathbf{1}\{|\mathcal{P}_i(g)|\leq y\}|\mathcal{F}_{i-1}]|\\
        =&|-\mathbb{E}[\mathcal{P}_i(g)\mathbf{1}\{|\mathcal{P}_i(g)|>y\}|\mathcal{F}_{i-1}]|\\
        \leq & \mathbb E[|\mathcal{P}_i(g)|\mathbf{1}\{|\mathcal{P}_i(g)|>y\}|\mathcal{F}_{i-1}]\\
        \leq & y^{1-p}\mathbb E[|\mathcal{P}_i(g)|^p|\mathcal{F}_{i-1}]\\
        \leq & y^{1-p}\mathbb E|H_i(x_i,x_i')|^p.
    \end{align}
    Thus we have,
    \begin{equation}
        \mathbb E[U_i|\mathcal{F}_{i-1}]=0,\qquad |U_i|\leq 2y
    \end{equation}
    and term $I_1$ satisfies,
    \begin{equation}
        I_1\leq \mathbb P(|\sum_{i=1}^nU_i|\geq z/2)+ \mathbf{1}\bigg\{\sum_{i=1}^n|\mu_i|>z/2\bigg\}\leq P(|\sum_{i=1}^nU_i|\geq z/2)+\frac{2\sum_{i=1}^n\mathbb{E}|H_i(x_i,x_i')|^p}{zy^{p-1}}
    \end{equation}

    For the term $\mathbb P(|\sum_{i=1}^nU_i|\geq z/2)$
    %$I_1$
    , we need to control the following moment generating function 
    %\begin{equation}
        %M(t)=\mathbb{E}\exp\left\{t\sum_{h=1}^n\tilde{\mathcal{P}_i}(g)\right\},\ t>0.
    %\end{equation}

    \begin{equation}
        M(t)=\mathbb{E}\exp\left\{t\sum_{i=1}^nU_i\right\},\ t>0.
    \end{equation}
    To that end, observe that for each $h\in[n]$, $|U_i|\leq 2y$
    %$|\widetilde{\mathcal{P}}_i(g)|\leq y$
    , by Lemma~\ref{Lemma_mgf_bound_p>2}, we have
    %\begin{align}
        %\mathbb{E}\{\exp(t\widetilde{\mathcal{P}}_i(g))|\mathcal{F}_{i-1}\}&\leq 1+\frac{e^pt^2}{2}\mathbb{E}[|\widetilde{\mathcal{P}}_i(g)|^2|\mathcal{F}_{i-1}]+\frac{\exp(ty)-1-ty}{y^p}\mathbb{E}[|\widetilde{\mathcal{P}}_i(g)|^p|\mathcal{F}_{i-1}]\times \mathrm{1}\{t>p/y\}\notag\\
        %&\leq 1+\frac{e^pt^2}{2}\mathbb{E}|H_i(X_i,X_i')|^2+\frac{\exp(ty)-1-ty}{y^p}\mathbb{E}|H_i(X_i,X_i')|^p\times\mathrm{1}\{t>p/y\},
    %\end{align}

    \begin{align}
        \mathbb{E}\{\exp(tU_i)|\mathcal{F}_{i-1}\}&\leq 1+\frac{e^pt^2}{2}\mathbb{E}[|U_i|^2|\mathcal{F}_{i-1}]+\frac{\exp(2ty)-1-2ty}{(2y)^p}\mathbb{E}[|U_i|^p|\mathcal{F}_{i-1}]\times \mathrm{1}\{t>p/2y\}\notag\\
        &\leq 1+\frac{e^pt^2}{2}\mathbb{E}|H_i(X_i,X_i')|^2+\frac{\exp( 2ty)-1-2ty}{y^p}\mathbb{E}|H_i(X_i,X_i')|^p\times\mathrm{1}\{t>p/2y\},
    \end{align}

    where the upper bound above is independent of $\mathcal{F}_{i-1}$. Therefore, iterated from 1 to $n$, we can obtain that
    \begin{align}
        M(t)&\leq \prod_{i=1}^n \bigg( 1+\frac{e^pt^2}{2}\mathbb{E}|H_i(X_i,X_i')|^2+\frac{\exp(2ty)-1-2ty}{y^p}\mathbb{E}|H_i(X_i,X_i')|^p\times\mathrm{1}\{t>p/2y\}\bigg)\notag\\
        &\leq \exp\bigg(\frac{1}{2}e^pt^2\sum_{i=1}^n\mathbb{E}|H_i(X_i,X_i')|^2+\frac{\exp(2ty)-1-2ty}{y^p}\sum_{i=1}^n\mathbb{E}|H_i(X_i,X_i')|^p\times\mathrm{1}\{t>p/2y\}\bigg).
    \end{align}

    By Chernoff's bound, it is obvious that

    \begin{align}
    \label{eq_mgf_t_undecided}
        \mathbb{P}(|\Delta_n|>z)&\leq \frac{\sum_{i=1}^n\mathbb{E}|H_i(X_i,X_i')|^p}{y^p}+\frac{2\sum_{i=1}^n\mathbb{E}|H_i(X_i,X_i')|^p}{zy^{p-1}}\notag\\
        &+2\exp\bigg(-t z/2+\frac{1}{2}e^pt^2\sum_{i=1}^n\mathbb{E}|H_i(X_i,X_i')|^2+\frac{\exp( 2ty)-1- 2ty}{y^p}\sum_{i=1}^n\mathbb{E}|H_i(X_i,X_i')|^p\times\mathrm{1}\{t>p/2y\}\bigg)\notag\\
        &:=\frac{\sum_{i=1}^n\mathbb{E}|H_i(X_i,X_i')|^p}{y^p}+\frac{2\sum_{i=1}^n\mathbb{E}|H_i(X_i,X_i')|^p}{zy^{p-1}}+2\mathcal{I}_3.
    \end{align}

    Then, for simplicity of representation, denote $M_2=\sum_{i=1}^n\mathbb{E}|H_i(X_i,X_i')|^2, M_p=\sum_{i=1}^n\mathbb{E}|H_i(X_i,X_i')|^p$. Define
    \begin{align}
        &h(t)=\frac{e^pt^2}{2}M_2-t \frac{z}{2},\\
        &h_1(t)=\frac{e^pt^2}{2}M_2-\alpha t \frac{z}{2}, \quad 0<\alpha<1,\\
        &h_2(t)=\frac{\exp( 2ty)-1- 2ty}{y^p}M_p-\gamma t \frac{z}{2}, \quad \gamma=1-\alpha.
    \end{align}

    By taking derivative of $h_1(t)$ and $h_2(t)$ w.r.t. $t$, then we can get
    \begin{equation}
        t_1=\alpha z/(2e^pM_2), \quad t_2=\frac{1}{2y}\log(\gamma zy^{p-1}/4M_p + 1),
    \end{equation}
    where $h_1'(t_1)=0$ and $h_2'(t_2)=0$, which implies that $h_1(t)$ and $h_2(t)$ reaches their minimum at $t_1$ and $t_2$ separately.

    First we examine the case when $t_1\leq p/2y$, we can conclude that
    \begin{equation}
        \mathcal{I}_3\leq\exp\bigg\{-\frac{\alpha^2z^2}{8e^pM_2}\bigg\}.
    \end{equation}

    Then, for $t_2>t_1\geq p/2y$, plug-in $t=t_1$, $\mathcal{I}_3$ becomes
    \begin{equation}
        \mathcal{I}_3\leq\exp\{h_1(t_1)+h_2(t_1)\}\leq\exp\{h_1(t_1)\}=\exp\bigg\{-\frac{\alpha^2z^2}{8e^pM_2}\bigg\},
    \end{equation}

    where the second inequality comes from $h_2(t_2)<h_2(t_1)<h_2(0)<0$ since $h_2$ is convex and $t_2$ is the minimizer of $h_2$.
    
    Consider when $t_1>t_2>p/2y$
    \begin{align}
        h_1(t_2) + h_2(t_2)
        &< t_2\left(\frac{e^{p} M_2t_1}{2} - \frac{z}{2}\right)
        + \frac{(e^{2t_2 y}-1)M_p}{y^{p}}\notag\\
        &= \frac{\gamma z}{4y} - t_2(1-\alpha/2)\frac{z}{2}\notag\\
        &= \frac{\gamma z}{4y} - \frac{\alpha z t_2}{4} - \gamma t_2\frac{z}{2} \notag\\
        &< \left(\gamma - \frac{p\alpha}{2}\right)\frac{z}{4y}
        - \gamma t_2\frac{z}{2},
    \end{align}

    which lead to
    \begin{equation}
        \mathcal{I}_3\leq\exp\bigg\{\left(\gamma - \frac{p\alpha}{2}\right)\frac{z}{4y}
        - \gamma \frac{z}{4y}\log\left(\frac{\gamma zy^{p-1}}{4M_p}+1\right)\bigg\}.
    \end{equation}

    It now remains to examine when $t_1>p/2y\geq t_2$, we only need to consider the case when plug in $t=p/2y$
    \begin{align}
        h(p/2y)&<\frac{e^pM_2}{2}\frac{p^2}{(2y)^2}-\frac{p}{2y}\frac{z}{2}\notag\\
        &<\frac{p}{2y}\left(\frac{e^pM_2}{2}t_1-\frac{z}{2}\right)\notag\\
        &=\frac{p}{2y}\left(\frac{\alpha z}{4}-\frac{z}{2}\right)\notag\\
        &=-\gamma \frac{z}{2}\frac{p}{2y}-\frac{\alpha zp}{8y}\notag\\
        &<-\gamma \frac{z}{2} t_2-\frac{\alpha zp}{8y},
    \end{align}
    thus we get
    \begin{equation}
        \mathcal{I}_3\leq\exp\bigg\{\left(\gamma - \frac{p\alpha}{2}\right)\frac{z}{4y}
        - \gamma \frac{z}{4y}\log\left(\frac{\gamma zy^{p-1}}{4M_p}+1\right)\bigg\}.
    \end{equation}

    Combining all the result above, since either
    \begin{equation}
        \mathcal{I}_3\leq\exp\bigg\{-\frac{\alpha^2z^2}{8e^pM_2}\bigg\},
    \end{equation}
    or
    \begin{equation}
        \mathcal{I}_3\leq\exp\bigg\{\left(\gamma - \frac{p\alpha}{2}\right)\frac{z}{4y}
        - \gamma \frac{z}{4y}\log\left(\frac{\gamma zy^{p-1}}{4M_p}+1\right)\bigg\},
    \end{equation}
    holds for all $z>0$, we have

    \begin{align}
        \mathbb{P}(|\Delta_n|>z)=&\frac{\sum_{i=1}^n\mathbb{E}|H_i(X_i,X_i')|^p}{y^p}+\frac{2\sum_{i=1}^n\mathbb{E}|H_i(X_i,X_i')|^p}{zy^{p-1}}+2\mathcal{I}_3 \notag\\
        \leq& \frac{\sum_{i=1}^n\mathbb{E}|H_i(X_i,X_i')|^p}{y^p}+\frac{2\sum_{i=1}^n\mathbb{E}|H_i(X_i,X_i')|^p}{zy^{p-1}}\\
        &+ 2\exp\bigg\{-\frac{\alpha^2z^2}{ 8e^pM_2}\bigg\} + 2\exp\bigg\{\left(\gamma - \frac{p\alpha}{2}\right)\frac{z}{4y}
        - \gamma \frac{z}{4y}\log\left(\frac{\gamma zy^{p-1}}{4M_p}+1\right)\bigg\}.
    \end{align}

    By taking $\gamma=\frac{p\alpha}{2}$, above equation can be generalized to
    \begin{align}
        \mathbb{P}(|\Delta_n|>z)
        \leq& \frac{\sum_{i=1}^n\mathbb{E}|H_i(X_i,X_i')|^p}{y^p}+\frac{2\sum_{i=1}^n\mathbb{E}|H_i(X_i,X_i')|^p}{zy^{p-1}}\\
        &+ 2\exp\bigg\{-\frac{\alpha^2z^2}{8e^pM_2}\bigg\} + 2\exp\bigg\{\left(\gamma - \frac{p\alpha}{2}\right)\frac{z}{4y}
        - \gamma \frac{z}{4y}\log\left(\frac{\gamma zy^{p-1}}{4M_p}+1\right)\bigg\}\notag\notag\\
        \leq& \frac{\sum_{i=1}^n\mathbb{E}|H_i(X_i,X_i')|^p}{y^p}+\frac{2\sum_{i=1}^n\mathbb{E}|H_i(X_i,X_i')|^p}{zy^{p-1}}\\
        &+ 2\exp\bigg\{-\frac{\alpha^2z^2}{8e^pM_2}\bigg\} + 2\left(\frac{\gamma zy^{p-1}}{4M_p}\right)^{-\gamma z/4y}.
    \end{align}

    Furthermore, taking $y=\gamma z/{4}$, the proof is completed. 
\end{proof}

\subsection{Proof of Theorem~\ref{thm:nagaev-p<2}}

\begin{proof}[\bf Proof of Theorem~\ref{thm:nagaev-p<2}]
We follow the same martingale–difference decomposition as in the proof
of Theorem~\ref{theorem_highq}.  Let
\begin{equation}
  \mathcal{F}_k = \sigma(X_1,\dots,X_k), \qquad
  \mathcal{P}_i (g) = \mathbb{E}(g\mid\mathcal{F}_i)-\mathbb{E}(g\mid\mathcal{F}_{i-1}),
\end{equation}
and write
\begin{equation}
  \Delta_n := f(X_1,\dots,X_n)-\mathbb{E} f(X_1,\dots,X_n)
  = \sum_{i=1}^n \mathcal{P}_i (g).
\end{equation}
For a threshold $y>0$ (to be chosen in terms of $z$ and $p$ below) set
\begin{equation}
  \widetilde{\mathcal{P}}_i(g):= \mathcal{P}_i(g)\mathbf{1}\{|\mathcal{P}_i(g)|\le y\}.
\end{equation}
Define
\begin{equation}
    \mu_i=\mathbb E[\mathcal{P}_i(g)\mathbf{1}\{|\mathcal{P}_i(g)|\le y\}|\mathcal{F}_{i-1}],\qquad U_i=\mathcal{P}_i(g)\mathbf{1}\{|\mathcal{P}_i(g)|\le y\}-\mu_i,
\end{equation}
thus we have,
\begin{equation}
    \mathbb E[U_i|\mathcal{F}_{i-1}]=0,\qquad |U_i|\leq 2y,
\end{equation}
similarly as what we have done in the proof of Theorem~\ref{theorem_highq},
\begin{equation}
    |\mu_i|\leq y^{1-p}\mathbb E|H_i(x_i,x_i')|^p
\end{equation}
Then, we have the following decomposition

\begin{equation}
\mathbb{P}(|\Delta_n|>z)
 \leq\mathbb{P}\Bigl(|\sum_{i=1}^nU_i|\ge z/2\Bigr) + \mathbf{1}\bigg\{\sum_{i=1}^n |\mu_i|>z/2\bigg\}
       + \sum_{i=1}^n \mathbb{P}\bigl(|\mathcal{P}_i(g)|>y\bigr)
 =: I_1 + I_2 + I_3.
\end{equation}

By Lipschitz property, we have 
\begin{equation}
  \mathbb{P}\bigl(|\mathcal{P}_i(g)|>y\bigr)
  \leq \mathbb{P}(|H_i(X_i,X_i')|>y).
\end{equation}

Write
\begin{equation}
  S_n := \sum_{i=1}^n U_i.
\end{equation}
Each $U_i$ is $\mathcal{F}_i$–measurable, centered
($\mathbb{E}[U_i\mid\mathcal{F}_{i-1}]=0$) and bounded by $2y$. Then, defining $u:=\widetilde{\mathcal{P}}_i(g)$ ,we can derive the following

\begin{align}
    \mathbb{E}[\exp(tu)|\mathcal{F}_{{i-1}}]&\leq 1 +\int_{|u|\leq 2y}\frac{e^{tu}-1-tu}{u^2}u^2\mathrm{d}F(u|\mathcal{F}_{i-1})\notag\\
    &\leq 1 + \frac{e^{2ty}-1-2ty}{(2y)^2}\int_{|u|\leq 2y}u^2\mathrm{d}F(u|\mathcal{F}_{i-1})\notag\\
    &\leq 1 + \frac{e^{2ty}-1-2ty}{(2y)^p}\int_{|u|\leq 2y}|u|^p\mathrm{d}F(u|\mathcal{F}_{i-1}),
\end{align}

where the second inequality comes from the monotonicity of $\frac{e^{tu}-1-tu}{u^2}$ w.r.t. $u\leq 2y$ and the third inequality comes from monotonicity of $\frac{|u|^p}{y^p}$ w.r.t. $p$ for $u>0$.

By $\log x\leq x-1$, above result can be expressed as

\begin{equation}
    \log \mathbb{E}\exp(U_i)\leq \frac{e^{2ty}-1-2ty}{(2y)^p}\mathbb{E}[|\widetilde{\mathcal{P}}_i(g)|^p|\mathcal{F}_{i-1}]\leq\frac{e^{2ty}-1-2ty}{y^p}\mathbb{E}|H_i(x_i,x_i')|^p
\end{equation}

Taking the sum over $i$, we can further obtain

\begin{equation}
    e^{-tz/2}\mathbb{E}\exp(\sum_{i=1}^nU_i)\leq \exp \bigg\{\frac{e^{2ty}-1-2ty}{y^p}\sum_{i=1}^n\mathbb{E}|H_i(x_i,x_i')|^p-t\frac{z}{2}\bigg\}.
\end{equation}

Setting 

\begin{equation}
    t=\frac{1}{2y}\log\bigg\{\frac{zy^{p-1}}{4\sum_{i=1}^n\mathbb{E}|H_i(x_i,x_i')|^p}+1\bigg\},
\end{equation}

the right hand side becomes

\begin{align}
    e^{-tz}\mathbb{E}\exp(U_i)&\leq \exp\bigg\{\frac{z}{4y}-\bigg(\frac{z}{4y}+\frac{\sum_{i=1}^n\mathbb{E}|H_i(x_i,x_i')|^p}{y^p}\bigg)\log\bigg(\frac{zy^{p-1}}{4\sum_{i=1}^n\mathbb{E}|H_i(x_i,x_i')|^p}+1\bigg)\bigg\}\notag\\
    &\leq\exp\bigg\{\frac{z}{4y}\left(1-\log\left(\frac{zy^{p-1}}{4\sum_{i=1}^n\mathbb{E}|H_i(x_i,x_i')|^p}\right)\right)\bigg\}\notag\\
    &=\bigg(\frac{4e\sum_{i=1}^n\mathbb{E}|H_i(x_i,x_i')|^p}{zy^{p-1}}\bigg)^{z/4y}.
\end{align}

Combining the bounds for $I_1$ and $I_2$ and plug in $y=\frac{z}{4Q}$ for any $Q>1$, the proof is completed.

For the term indicator term $\mathbf{1}\bigg\{\sum_{i=1}^n |\mu_i|>z/2\bigg\}$, consider two case separately.
Let $M_p=\sum_{i=1}^n\mathbb E|H_i(x_i,x_i')|^p$, define $B_y:=\sum_{i=1}^n|\mu_i|\leq M_p/y^{p-1}$. If $B_y\geq z/2$, we have $M_p>\frac{zy^{p-1}}{2}=z^p/(2(4Q)^{p-1})$, thus $A=e4^pQ^{p-1}M_p/z^p>2e$. Indicator term can be adsorbed by $I_1$. Otherwise, Indicator term is 0.
\end{proof}

\newpage

\section{Proofs of Theorems in Section~\ref{se:appl}} \label{se:proof-C}

\subsection{Proof of Results in Section~\ref{subsec:ERM}: Empirical Risk Minimization} \label{se:proof-erm}

\begin{proof}[\bf Proof of Theorem~\ref{thm:emp_loo}]
 Clearly, \begin{equation}
    |R-R^{i}|\leq | \mathbb{E}_z[\ell(A_S,z)-\ell(A_{S^{i}},z)] | \leq \beta H(z_i, z_i'). \label{eq:new-1}
\end{equation}
On the other hand, 
\begin{align}
    |R_{loo}-R_{loo}^{ i}|\leq \frac{1}{m}\sum_{j\neq i}|\ell(A_{S^{\setminus j}},z_j)-\ell(A_{S^{i,\setminus j}},z_j)|+\frac{1}{m}|\ell(A_{S^{\setminus i}},z_i)-\ell(A_{S^{\setminus i}},z_i')|
    \leq &\frac{m-1}{m}\beta H(z_i, z_i')+\frac{1}{m} \beta' G(z_i, z_i'). \label{eq:new-2}
\end{align}
Therefore, with $\phi=R-R_{loo}$ and $\phi^i=R^i-R_{loo}^i$, \eqref{eq:new-1} and \eqref{eq:new-2} yields
\begin{equation*} 
    |\phi-\phi^i|\leq \frac{2m-1}{m}\beta H(z_i, z_i')+\frac{1}{m} \beta' G(z_i, z_i').
\end{equation*}
Moreover, 
\begin{align*}
    \IE_S[R- R_{loo}] =  R_m - \frac{1}{m}\sum_{j=1}^m\IE_S[\ell(A_{S^{\setminus j}}, z_j)] =  R_m - R_{m-1}.
\end{align*}
Applying Theorem~\ref{theorem_highq}, we can get
\begin{equation}
    \mathbb{P}(|R-R_{loo}-(R_m-R_{m-1})|>y)\leq c_1\frac{m\mathbb{E}|\frac{2m-1}{m}\beta H(z, z')+\beta'\frac{G(z, z')}{m}|^p}{y^p}+2\exp\{-c_2\frac{y^2}{m\mathbb{E}|\frac{2m-1}{m}\beta H(z, z')+\beta'\frac{G(z,z')}{m}|^2}\}.
\end{equation}

Note that by H\"{o}lder's inequality, we can have for $q\in\{2,p\}$ 

\begin{align}
    \mathbb{E} \left|\frac{2m-1}{m}\beta H(z_i)+\frac{G}{m} \right|^q\leq&2^q\mathbb{E}\bigg[\Big(\frac{2m-1}{m}\beta \Big)^{q}|H(z_i)|^q+\frac{1}{m^q}|\beta'G|^q\bigg]\notag\\
    \leq & 2^{q}\bigg((2\beta )^q\mathbb{E}|H(z_i)|^q+\frac{(\beta')^q}{m^q}\mathbb{E}|G|^q\bigg),\notag
\end{align}
thus taking sum over $1$ to $m$, we can obtain Equation~\eqref{eq:gen_emp_prob}.

For the $R_{emp}$, we proceed similarly. We have
\begin{align}
    |R_{emp}-R_{emp}^i|\leq & \frac{1}{m}\sum_{j\neq i}|\ell(A_S,z_j)-\ell(A_{S^i},z_j)|+\frac{1}{m}|\ell(A_S,z_i)-\ell(A_{S^i},z_i)|+ \frac{1}{m}|\ell(A_{S^i},z_i)-\ell(A_{S^i},z_i')|\notag\\
    \leq &\beta H(z_i, z_i')+\beta'\frac{G(z_i, z_i')}{m}
\end{align}
and
\begin{align}
    \mathbb{E}_S[R-R_{emp}]=&\mathbb{E}_{S,z_i'}[\ell(A_S,z_i')]-\frac{1}{m}\sum_{j=1}^m\mathbb{E}_S[\ell(A_S,z_j)]\notag\\
    \leq & \frac{1}{m}\sum_{j=1}^m (\mathbb{E}_{S,z_i'}[\ell(A_{S^j},z_j)]-\mathbb{E}_{S,z_i'}[\ell(A_S,z_j)] ) \ \ \  \text{(where $S^j=(z_1, \ldots, z_{j-1}, z_i', z_{j+1,\ldots, z_n})$)}\notag\\
    \leq & \beta\IE[H(z,z')]. \label{eq:68}
\end{align}

So the Theorem~\ref{theorem_highq} can again be applied and the proof is completed.
    
\end{proof}

\subsection{Proof of Results in Section~\ref{subsec:Trans}: Transductive Regression Algorithms} \label{sec:trans_proof}

\begin{proof}[\bf Proof of Theorem~\ref{cor:symmetric_func}]
    Let $\mathcal{F}_i:=\sigma(x_1,\dots,x_i)$ and define the Doob martingale
\begin{equation}
    M_i:=\mathbb{E}[\varphi\mid \mathcal{F}_i],\qquad i=0,1,\dots,m,
\end{equation}
so that $M_0=\mathbb{E}\varphi$ and $M_m=\varphi$. Let the martingale differences be
\begin{equation}
    D_i:=M_i-M_{i-1},\qquad i=1,\dots,m,
\end{equation}
so $\varphi-\mathbb{E}\varphi=\sum_{i=1}^m D_i$ and $\mathbb{E}[D_i\mid \mathcal{F}_{i-1}]=0$.

For fixed $i$, consider $x_i\in X$,
\begin{equation}
    G_i(x_i):=\mathbb{E}[\varphi\mid \mathcal{F}_{i-1},x_i].
\end{equation}
Then $M_i=G_i(x_i)$ and $M_{i-1}=\mathbb{E}[G_i(x_i)\mid \mathcal{F}_{i-1}]$, hence
\begin{equation}
\label{eq:Di_Gi}
D_i = G_i(x_i)-\mathbb{E}[G_i(x_i)\mid \mathcal{F}_{i-1}].
\end{equation}

    Then, Consider the sequence of $\mathrm{x}^m_i$ , since the function $\phi$ is symmetric, any permutations containing same elements may lead to same value. Thus, we only need to consider for the case that $x_i$ is not included in sequence $\mathrm{x}^m_{i+1}$, and the number of those cases are $(m-i)!\binom{N-i-1}{m-i}$

    Thus, we have that for all $x_i,x_i'\in R_{i-1}$,
    \begin{equation}
        |D_i|\leq\frac{u!}{(N-i)!}(m-i)!\binom{N-i-1}{m-i}|\varphi(\mathrm{x_1^{i-1},x_i,\mathrm{x}_{i+1}^m})-\varphi(\mathrm{x_1^{i-1},x_i',\mathrm{x}_{i+1}^m})|\leq\frac{u}{N-i}H(x_i,x_i')
    \end{equation}

    Then, by similar proof as Theorem~\ref{theorem_highq}, we have
    \begin{equation}
        \mathbb{P}\bigg(\sum_{i=1}^mD_i>z\bigg)\leq 4 \left(4\frac{p+2}{p} \right)^p\frac{S_p}{z^p}+\exp\bigg(-\frac{1}{2(p+2)^2e^p}\frac{z^2}{S_2}\bigg),
    \end{equation}

    where
    \begin{equation}
        S_p=\mathbb{E}|H(x_i,x_i')|_p^p\sum_{i=1}^m\bigg(\frac{u}{N-i}\bigg)^p.
    \end{equation}
    
    Then, consider for $p=2$,
    \begin{equation}
        \sum_{i=1}^m\bigg(\frac{u}{N-i}\bigg)^2\leq\frac{mu}{(N-1/2)(1-1/2\max\{m,u\})},
    \end{equation}
    for $p>2$,
    \begin{equation}
        \sum_{i=1}^m\bigg(\frac{u}{m+n-i}\bigg)^p\leq \frac{u^p}{p-1}\bigg(\frac{1}{(u-1/2)^{p-1}}-\frac{1}{(m+u-1/2)^{p-1}}\bigg)
    \end{equation}

    The proof is complete.
    
\end{proof}

Before concluding the proof of Theorem \ref{coro:sym_main}, we bound $|\mathbb{E}\phi(S)|$ and the one-swap increment $|\phi(S)-\phi(S')|$, where $S$ and $S'$ differ in exactly one point.

\begin{lemma}
\label{lem:diff_one_phi}
Let $\mathcal{H}$ be an $L_p$-bounded hypothesis class and let $A$ be an $L_p$-stable algorithm. Let $h$ and $h'$ be the hypotheses returned by $A$ when trained on $S=\{x_1,\dots,x_i,\dots,x_m\}$ and $S'=\{x_1,\dots,x_i',\dots,x_m\}$, which differ in exactly one point, where $x_i'$ is an independent copy of $x_i$ for any $i\in[1,m]$. Define $\phi(S)=R(h)-\hat{R}(h)$. Then, for any $p\ge 2$,
    \begin{align*}
        |\phi(S)-\phi(S')|\leq 2\beta H(x,x') + \Big|\frac{1}{u}-\frac{1}{m}\Big|\beta' G(x,x'). 
    \end{align*}
\end{lemma}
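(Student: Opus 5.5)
We compare the two partitions directly. Under the one-swap perturbation in Assumption~\ref{ass:trans_stab}, the point $x_i\in S$ is exchanged with $x':=x_{m+j}\in T$. Thus $S'=S\setminus\{x_i\}\cup\{x'\}$ and $T'=T\setminus\{x'\}\cup\{x_i\}$. Write $x:=x_i$ and
\[
\phi(S)-\phi(S')=\big(R(h)-R'(h')\big)-\big(\hat R(h)-\hat R'(h')\big),
\]
where $R'$ and $\hat R'$ are the test and training risks for $(S',T')$.

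The plan is to split each risk difference into an algorithmic-stability part and a sample-change part.

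For the training risk, we write
\[
\hat R(h)-\hat R'(h')=\frac1m\sum_{z\in S}\big(\ell(h,z)-\ell(h',z)\big)+\frac1m\big(\ell(h',x)-\ell(h',x')\big).
\]
Here we used that $S$ and $S'$ share $m-1$ points, and that $S$ contains $x$ while $S'$ contains $x'$. By Assumption~\ref{ass:trans_stab}, the first sum has each term bounded by $\beta H(x,x')$, so it is at most $\beta H(x,x')$ in absolute value.

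The same manipulation on $T,T'$ gives
\[
R(h)-R'(h')=\frac1u\sum_{z\in T}\big(\ell(h,z)-\ell(h',z)\big)+\frac1u\big(\ell(h',x')-\ell(h',x)\big).
\]
Again the first sum is at most $\beta H(x,x')$ in absolute value. Note that the boundary term has the opposite sign to the one in the training risk.

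Subtracting the two decompositions, the two stability sums together contribute at most $2\beta H(x,x')$. The two boundary terms combine into a single term:
\[
\Big(\frac1u+\frac1m\Big)\big(\ell(h',x')-\ell(h',x)\big).
\]

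This coefficient is where the main obstacle lies, since the statement claims $|1/u-1/m|$ rather than $1/u+1/m$. To obtain it, we would instead pair the boundary terms differently. Route the test-risk comparison through $h$ rather than $h'$, i.e.\ write the test boundary term as $\frac1u(\ell(h,x')-\ell(h,x))$. Keep the training boundary term with $h'$, or symmetrically with $h$ as well.

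Once both boundary terms are evaluated at the same hypothesis $h$, their signs align as $\big(\frac1u-\frac1m\big)\big(\ell(h,x')-\ell(h,x)\big)$. The cost of switching hypotheses inside the stability sums is absorbed by Assumption~\ref{ass:trans_stab}, since it holds for all $x\in\mathcal X$, together with careful bookkeeping of which points are compared under $h$ versus $h'$.

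The boundary term is then bounded by $|1/u-1/m|\,\beta' G(x,x')$ using Assumption~\ref{ass :trans_bound}, applied to the single hypothesis $h\in\mathcal H$. Adding this to the $2\beta H(x,x')$ from the stability sums yields the stated inequality.

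I expect the bookkeeping that makes the $1/m$ and $1/u$ contributions cancel, rather than add, to be the delicate step. If it fails, the fallback is the cruder bound with $(1/u+1/m)$, which changes only constants in Theorem~\ref{coro:sym_main}.
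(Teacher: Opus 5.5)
The step meant to produce $|1/u-1/m|$ fails, and no bookkeeping can rescue it. Whichever common hypothesis $g$ you evaluate the boundary terms at, the two sides have the same sign. The test side contributes $\frac1u(\ell(g,x')-\ell(g,x))$, because $x'$ leaves $T$ and $x$ enters $T'$. The training side contributes $-\frac1m(\ell(g,x)-\ell(g,x'))$, because $x$ leaves $S$ and $x'$ enters $S'$. These always add to $(\frac1u+\frac1m)(\ell(g,x')-\ell(g,x))$. Switching between $h$ and $h'$ only moves stability differences around; it never flips a sign.

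In fact the stated inequality is false. Let $A$ always return a fixed $h_0$, so $h=h'$ and $H\equiv 0$ is admissible in Assumption~\ref{ass:trans_stab}. Take $G(x,x')=|\ell(h_0,x)-\ell(h_0,x')|/\beta'$ and $m=u$. The lemma then asserts $\phi(S)=\phi(S')$. But $\phi(S)-\phi(S')=\frac2m\big(\ell(h_0,x_{m+j})-\ell(h_0,x_i)\big)$, which is nonzero whenever $\ell(h_0,\cdot)$ is non-constant on the population.

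The paper's own proof reaches $|1/u-1/m|$ only through a sign slip. It splits $\frac1u\big(\ell(h,x_{m+j})-\ell(h',x_i)\big)-\frac1m\big(\ell(h,x_i)-\ell(h',x_{m+j})\big)$ into two parts. One is a $\min\{\frac1u,\frac1m\}$ part bounded by $|\ell(h,x_{m+j})-\ell(h',x_{m+j})|+|\ell(h,x_i)-\ell(h',x_i)|$; the other is a $|\frac1u-\frac1m|$ part. That split is legitimate only if the two brackets entered with the same sign: their sum telescopes into stability differences. Their difference, which is what actually appears, is $[\ell(h,x_{m+j})-\ell(h,x_i)]+[\ell(h',x_{m+j})-\ell(h',x_i)]$, i.e.\ two $G$-type terms.

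So your first computation, $|\phi(S)-\phi(S')|\le 2\beta H(x,x')+(\frac1u+\frac1m)\beta' G(x,x')$, is the correct result. Present it as the result rather than as a fallback. It also agrees with the $\frac{m+u}{mu}$ factor in the bound of \citet{Cortes_2008} quoted after Theorem~\ref{coro:sym_main}.

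It is not a change of constants only, though. $|1/u-1/m|$ vanishes at $m=u$ while $1/u+1/m$ does not. Hence $V_p'$ and $V_2'$ in Theorem~\ref{coro:sym_main} must use $\frac1u+\frac1m$ in place of $|\frac1u-\frac1m|$. Likewise the factor $\frac{|u-m|}{um}$ in the subsequent remark must become $\frac{u+m}{um}$.
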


\begin{lemma}
\label{lem:expectation_phi}
Let h be the hypothesis returned by an $L_p$-stable algorithm $A$ trained on $S=\{x_1,\dots,x_m\}$. Then,
\begin{align*}
|\mathbb{E}_S[\phi(S)]|\leq \frac{\beta}{m}\sum_{i=1}^m \mathbb{E}H(x_i,x_i').
\end{align*}
\end{lemma}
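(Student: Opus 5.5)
The plan is to prove Lemma~\ref{lem:expectation_phi} by a symmetry (exchangeability) argument that rewrites the expected test risk as an expected training loss on a swapped partition. Then Assumption~\ref{ass:trans_stab} controls the discrepancy pointwise.

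Since $S$ is drawn uniformly without replacement, the partition $\mathcal{X}\vdash(S,T)$ is exchangeable: for any fixed $i\in[m]$ and $j\in[u]$, the partition $(S',T')$ obtained by swapping $x_i\in S$ with $x_{m+j}\in T$ has the same law as $(S,T)$. Let $h'$ denote the hypothesis returned on $(S',T')$. Under this swap, the pair $(h,x_{m+j})$ has the same joint law as $(h',x_i)$, because in the swapped partition $x_{m+j}$ plays the role of the $i$-th training point.

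The argument then proceeds in three steps.

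\textbf{Step 1.} Write
\[
\mathbb{E}R(h)=\frac1u\sum_{j}\mathbb{E}\,\ell(h,x_{m+j}),
\qquad
\mathbb{E}\hat R(h)=\frac1m\sum_i \mathbb{E}\,\ell(h,x_i).
\]
Averaging over $i$ and $j$ gives
\[
\mathbb{E}\phi(S)=\frac{1}{mu}\sum_{i,j}\mathbb{E}\big[\ell(h,x_{m+j})-\ell(h,x_i)\big].
\]

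\textbf{Step 2.} For each pair $(i,j)$, replace $\mathbb{E}\,\ell(h,x_{m+j})$ by $\mathbb{E}\,\ell(h'',x_{m+j})$, where $h''$ is trained on the partition in which $x_{m+j}$ sits in position $i$ of the training set. By exchangeability,
\[
\mathbb{E}\,\ell(h'',x_{m+j})=\mathbb{E}\,\ell(h,x_i).
\]
Equivalently, $\mathbb{E}\,\ell(h,x_{m+j})=\mathbb{E}\,\ell(h',x_i)$, with $h'$ the hypothesis on the swapped partition. Hence each summand equals $\mathbb{E}[\ell(h',x_i)-\ell(h,x_i)]$.

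\textbf{Step 3.} Apply Assumption~\ref{ass:trans_stab} at the point $x=x_i$. Since the two partitions differ by exactly one swap,
\[
|\ell(h',x_i)-\ell(h,x_i)|\le \beta H(x_i,x_{m+j}).
\]
Since $x_{m+j}$ plays the role of the replacement $x_i'$, the average over $j$ gives $\beta\,\mathbb{E}H(x_i,x_i')$. Averaging over $i$ then yields
\[
|\mathbb{E}\phi(S)|\le \frac{\beta}{m}\sum_{i=1}^m\mathbb{E}H(x_i,x_i').
\]

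The main obstacle is Step~2: justifying the exchangeability identity carefully. The algorithm is symmetric in $S$, so relabeling within $S$ is harmless. However, the transductive algorithm also sees $T$ as a set, so the swap must be checked to produce a partition whose law equals that of $(S,T)$, jointly with the identification of the evaluation point. I would verify this by conditioning on the unordered set $S\cup\{x_{m+j}\}\setminus\{x_i\}$ and using the uniformity of sampling without replacement.

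A minor point is that $x_{m+j}$ is not literally an independent copy of $x_i$ but a without-replacement draw. I would interpret $\mathbb{E}H(x_i,x_i')$ in the sense used in Theorem~\ref{cor:symmetric_func}, with $x_i'$ distributed as a uniformly chosen test point, which is consistent with that theorem.
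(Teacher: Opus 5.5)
Your proposal is correct and follows essentially the same route as the paper: exchangeability of the uniformly random partition turns the expected test loss $\mathbb{E}\,\ell(h,x_{m+j})$ into the expected training-point loss $\mathbb{E}\,\ell(h',x_i)$ under the one-swap partition, and Assumption~\ref{ass:trans_stab} then bounds $|\ell(h',x_i)-\ell(h,x_i)|\le\beta H(x_i,x_{m+j})$. Your averaging over all pairs $(i,j)$ is slightly cleaner than the paper's step of first collapsing each sum to a single term, since it avoids needing all summands to be equal, and it yields the absolute value directly, whereas the paper's chain only displays the upper bound.
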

\begin{proof}[\bf Proof of Lemma~\ref{lem:diff_one_phi}]
    WLOG, we assume $x_i'=x_{m+j}$ for $j\in[1,u]$. By definition and assumption, we have
    \begin{align}
        |\phi(S)-\phi(S')|=&\Big|\frac{1}{u}\sum_{k\in[1,u],k\neq j} \Big(\ell(h(S),x_{m+k})-\ell(h(S'),x_{m+k})\Big)-\frac{1}{m}\sum_{\ell\in[1,m],\ell\neq i}\Big(\ell(h(S),x_\ell)-\ell(h(S'),x_\ell)\Big)\notag\\
        &+\frac{1}{u}\Big(\ell(h(S),x_{m+j})-\ell(h(S'),x_{i})\Big)-\frac{1}{m}\Big(\ell(h(S),x_{i})-\ell(h(S'),x_{m+j})\Big) \Big|\notag\\
        \leq&\frac{u-1}{u}H(x,x')+\frac{m-1}{m}H(x,x')\notag\\
        &+\min\{\frac{1}{u},\frac{1}{m}\}\Big(\Big|\ell(h(S),x_{m+j})-\ell(h(S'),x_{m+j})\Big|+\Big|\ell(h(S),x_i)-\ell(h(S'),x)\Big|\Big) \notag\\
        &+\Big|\frac{1}{u}-\frac{1}{m}\Big|\times\Big(\Big|\ell(h(S),x_{m+j})-\ell(h(S),x_{i})\Big|+\Big|\ell(h(S),x_{i})-\ell(h(S'),x_{i})\Big|\Big)\notag\\
        \leq&2\beta H(x,x')+\Big|\frac{1}{u}-\frac{1}{m} \Big| \beta'G(x,x')
    \end{align}
\end{proof}
\begin{proof}[\bf Proof of Lemma~\ref{lem:expectation_phi}]
    By definition of $\mathbb{E}[\phi(S)]$,
    \begin{align}
        \mathbb{E}[\phi(S)]
        &= \mathbb{E}\left[\frac{1}{u}\sum_{j=1}^u \ell(h_S,x_{m+j})\right]-\mathbb{E}\left[\frac{1}{m}\sum_{i=1}^m \ell(h_S,x_i)\right] \notag\\
        &= \frac{1}{u}\sum_{j=1}^u \mathbb{E}[\ell(h_S,x_{m+j})]- \frac{1}{m}\sum_{i=1}^m \mathbb{E}[\ell(h_S,x_i)] \notag\\
        &=\mathbb{E}[\ell(h_S,x_{m+j})]- \mathbb{E}[\ell(h_S,x_i)] \notag\\
        &=\mathbb{E}\Big[\frac{1}{m}\sum_{j=1}^m\ell(h_{S^{(j)}},x_{i}) \Big]- \mathbb{E}\Big[\frac{1}{m}\sum_{j=1}^m\ell(h_S,x_i) \Big]\notag\\
        &=\frac{1}{m}\sum_{j=1}^m\mathbb{E}[\ell(h_{S^{(j)}},x_{i})-\ell(h_S,x_i)]\notag\\
        &\leq \frac{\beta}{m}\sum_{j=1}^m\mathbb{E}H(x_i,x_i')
\end{align}
\end{proof}
%\subsection{Proof of Theorem \ref{thm:trans_risk_bound}}
\begin{proof}[\bf Proof of Theorem~\ref{coro:sym_main}]
    The result follows directly from Theorem~\ref{cor:symmetric_func} and Lemmas~\ref{lem:diff_one_phi} and \ref{lem:expectation_phi}.
\end{proof}

Here, we give a more general result of Transductive regression algorithm. Instead of assuming a single universal bound $H(x_i,x_i')$ as in Assumption~\ref{ass:trans_stab}, we allow a weaker, potentially coordinate-dependent control through functions $H_i(x_i,x_i')$.

\begin{assumption}[Transductive $L_p$-Lipschitz stability]\label{ass:trans_hi}
Let $A$ be a transductive learning algorithm. For a partition $\mathcal{X}\vdash(S,T)$, let $h$ be the hypothesis returned by $A$, and let $h'$ be the hypothesis returned for a modified partition $\mathcal{X}\vdash(S',T')$. We say $A$ is uniformly $L_p$-stable with respect to the cost function $\ell$ if there exist nonnegative measurable functions $H_i$ such that, whenever $(S',T')$ is obtained from $(S,T)$ by swapping exactly one point $x_i\in S$ with one point $x_{m+j}\in T$, then for all $x\in\mathcal{X}$,
\begin{align*}
|\ell(h,x)-\ell(h',x)|\le H_i(x_i,x_{m+j}),
\end{align*}
and $\|H_i(x_i,x_{m+j})\|_p<\infty$ for some $p\ge 2$.
\end{assumption}

Under this transductive $L_p$ stability notion, Theorems~\ref{cor:symmetric_func} and \ref{coro:sym_main} extend to Theorems~\ref{thm:sample_w/o_replacement_nageav} and \ref{thm:trans_risk_bound}.

\begin{theorem}\label{thm:sample_w/o_replacement_nageav}
Let $X$ be a finite set with $|X|=N=m+u$. Let $x_1^m=(x_1,\dots,x_m)$ be sampled uniformly without replacement from $X$ and let $\phi:X^m\to\mathbb{R}$ be a measurable function. Assume that for each $i\in[m]$ there exists a measurable function $H_i:X\times X\to\mathbb{R}$ such that for all $(x_1,\dots,x_m)\in X^m$ and all $x_i'\in X$, where $x_i'$ is an
independent copy of $x_i$, defining $\varphi=\phi(x_1,\dots,x_{i-1},x_i,x_{i+1},\dots,x_m)$ and $\varphi_i=\phi(x_1,\dots,x_{i-1},x_i',x_{i+1},\dots,x_m)$, we have
\begin{align*}
|\varphi-\varphi_i | \leq H_i(x_i,x_i')
\end{align*}
Suppose that for some $p\ge 2$, $\|H_i(x_i,x_i')\|_p<\infty$ for all $i\in[m]$.
Then for all $y>0$,
\begin{align}
\label{eq:tail_T2_Lp_pairwise}
\mathbb{P}\Big(|\varphi-\mathbb{E}\varphi|>y\,\Big)
\leq
c_1 \frac{V_p}{y^p}
+
2\exp \Big(-c_2\,\frac{y^2}{V_2}\Big),
\end{align}
where for $q\in\{2,p\}$,
\begin{align}
%\label{eq:Vp_V2_def}
V_q:= \bigg(1+\frac{m}{N} \bigg)^{q-1}\bigg(1+\log\bigg(\frac{N}{u}\bigg)\bigg)\sum_{i=1}^m  \| H_i(x_i,x_i')\|_q^q,
\end{align}
and $c_1=4(4\frac{p+2}{p})^p$ and $c_2=1/(2(p+2)^2e^p)$ are some constants depending only on $p$.
\end{theorem}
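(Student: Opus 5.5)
The plan is to reduce Theorem~\ref{thm:sample_w/o_replacement_nageav} to the truncation-plus-martingale argument already used for Theorem~\ref{theorem_highq}, exactly as in the proof of Theorem~\ref{cor:symmetric_func}. The only new ingredient is a bound on the Doob martingale differences that no longer relies on symmetry of $\phi$ or on a common envelope $H$.

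\textbf{Step 1: martingale decomposition.} Let $\mathcal F_i=\sigma(x_1,\dots,x_i)$ and $D_i=\mathbb E[\varphi\mid\mathcal F_i]-\mathbb E[\varphi\mid\mathcal F_{i-1}]$, so that $\varphi-\mathbb E\varphi=\sum_{i=1}^m D_i$.

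Given $\mathcal F_{i-1}$, the variable $x_i$ is uniform on the remaining $N-i+1$ points. Hence
\[
D_i=\mathbb E\bigl[\,G_i(x_i)-G_i(x_i')\mid \mathcal F_{i-1},x_i\bigr],
\qquad G_i(a)=\mathbb E[\varphi\mid\mathcal F_{i-1},x_i=a],
\]
where $x_i'$ is an independent uniform draw from the remaining points.

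\textbf{Step 2: a coupling bound on $D_i$.} To compare $G_i(a)$ and $G_i(b)$, couple the two conditional laws of $(x_{i+1},\dots,x_m)$. Take a uniform future sequence given $x_i=a$. If $b$ does not appear in it, keep it unchanged. If $b$ appears at some position $k>i$, replace that entry by $a$. This coupling is measure preserving.
\begin{itemize}
\item When $b$ is absent, the two realizations differ only in coordinate $i$, so the difference is at most $H_i(a,b)$.
\item When $b$ sits at position $k$, the two realizations differ in coordinates $i$ and $k$. Going through the intermediate point, the difference is at most $H_i(a,b)+H_k(b,a)$.
\end{itemize}
The probability that $b$ lands at a given position $k$ is $1/(N-i)$. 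This gives
\[
|D_i|\le \mathbb E\Bigl[H_i(x_i,x_i')+\sum_{k>i}\tfrac{1}{N-i}\,H_k(x_i',x_i)\Bigm| \mathcal F_i\Bigr].
\]

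\textbf{Step 3: moment bounds.} Apply conditional Jensen and then $(a+b)^q\le 2^{q-1}(a^q+b^q)$. For $q\in\{2,p\}$, summing over $i$ gives
\[
\sum_i\|D_i\|_q^q\lesssim \sum_i\|H_i\|_q^q+\sum_k\|H_k\|_q^q\Bigl(\sum_{i<k}\tfrac{1}{N-i}\Bigr)^{q}\cdot(\text{correction}).
\]
Use $\sum_{i\le m}\frac{1}{N-i}\le \log\frac{N}{u}$, and use H\"older over the averaging weights so that the power is placed on $H$ rather than on the sum. The target form is
\[
\sum_i\|D_i\|_q^q\le \bigl(1+\tfrac mN\bigr)^{q-1}\bigl(1+\log\tfrac Nu\bigr)\sum_i\|H_i\|_q^q = V_q.
\]
The factor $(1+m/N)^{q-1}$ should come from H\"older with weights $1$ and $\sum_{k>i}\frac{1}{N-i}\le\frac{m}{N}$. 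The log factor comes from exchanging the order of summation in $i$ and $k$.

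\textbf{Step 4: rerun the Nagaev argument.} Repeat the proof of Theorem~\ref{theorem_highq} verbatim with $\mathcal P_i(g)$ replaced by $D_i$. The steps are:
\begin{itemize}
\item truncate at level $y=pz/(4(p+2))$;
\item bound the large-jump term by Markov, giving $\sum_i\mathbb E|D_i|^p/y^p$;
\item bound the recentring terms $\mu_i$ by $y^{1-p}\,\mathbb E[|D_i|^p\mid\mathcal F_{i-1}]$;
\item control the MGF of the bounded martingale part through Lemma~\ref{Lemma_mgf_bound_p>2}.
\end{itemize}

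One caution: Lemma~\ref{Lemma_mgf_bound_p>2} needs conditional moments bounded uniformly over $\mathcal F_{i-1}$, and the earlier proof used the unconditional $\mathbb E|H_i|^p$ as that bound. Here $x_i,x_i'$ are conditionally uniform on the remaining points, not on all of $X$. The plan is to interpret $\|H_i\|_q$ in the same loose sense as in Theorem~\ref{cor:symmetric_func}. Alternatively, one can use $\frac{N}{N-i+1}$-type domination of the conditional law; this factor can also be absorbed into $V_q$.

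\textbf{Main obstacle.} The hardest part is Steps 2–3: getting the coupling bound and the exact constant $(1+m/N)^{q-1}(1+\log(N/u))$ without symmetry. The bookkeeping of which $H_k$ is evaluated at which pair, and bounding the harmonic sum, is where most care is needed. Once Step 3 is done, the tail bound follows with the same constants $c_1,c_2$ as in Theorem~\ref{theorem_highq}.
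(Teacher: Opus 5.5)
Your plan matches the paper's proof step for step: the same Doob martingale and identity $D_i=\mathbb{E}[G_i(x_i)-G_i(x_i')\mid\mathcal{F}_{i-1},x_i]$, the same swap bijection yielding $|D_i|\le H_i+\frac{1}{N-i}\sum_{k>i}H_k$, weighted H\"older with weights $1,\frac{1}{N-i},\dots,\frac{1}{N-i}$ for the factor $(1+m/N)^{q-1}$, and an exchange of summation for the logarithm, followed by a rerun of the truncation argument of Theorem~\ref{theorem_highq}. In the exchange-of-summation step only $i\le m-1$ enters, which is what gives $\sum_{i<k}\frac{1}{N-i}\le\log\frac{N}{u}$; your stated bound over $i\le m$ is false, since $\sum_{i=1}^{m}\frac{1}{N-i}>\log\frac{N}{u}$. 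The caveat you raise in Step~4 is genuine and the paper passes over it too (it only says ``by similar proof as Theorem~\ref{theorem_highq}''); note that the $N/(N-i+1)$-domination fix inflates the conditional moments by a factor that can be of order $(N/u)^2$, so it is not absorbed into the stated $V_q$ unless $u$ is comparable to $N$.
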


\begin{theorem}
\label{thm:trans_risk_bound}
Let $\mathcal{H}$ be a $L_p$-bounded hypothesis class and let $A$ be a transductive $L_p$ stability algorithm satisfying Assumption~\ref{ass:trans_hi}. Let $h$ be the hypothesis returned by $A$ for a random partition $X\vdash(S,T)$. Then for any $y>0$, we have
    \begin{align}
        &\mathbb{P}\bigg( R(h)-\hat{R}(h)\geq y+\frac{1}{m}\sum_{i=1}^m\mathbb{E}H_i(x_i,x_i')\bigg) \notag\\
        \leq& c_1\frac{V_p}{y^p}+\exp\bigg(-c_2 \frac{y^2}{V_2}\bigg),
    \end{align}
    where for $q\in\{2,p\}$,
    \begin{align}
        V_q=\bigg(1+\frac{m}{N} &\bigg)^{q-1}\bigg(1+\log\bigg(\frac{N}{u}\bigg)\bigg)\notag\\
        &\times\sum_{i=1}^m \mathbb{E}\Big|2H_i(x_i,x_i')+ \Big|\frac{1}{m}-\frac{1}{u} \Big|\beta'G(x_i,x_i')\Big|^q, 
    \end{align}
    and the constants $c_1=4(4\frac{p+2}{p})^p$ and $c_2=1/(2(p+2)^2e^p)$ depend only on $p$.
\end{theorem}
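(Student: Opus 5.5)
The plan mirrors the proof of Theorem~\ref{coro:sym_main}. We apply the coordinate-dependent concentration inequality for sampling without replacement, Theorem~\ref{thm:sample_w/o_replacement_nageav}, to the generalization gap $\phi(S):=R(h)-\hat R(h)$, viewed as a function of the training sample $x_1^m$ drawn without replacement from $X$. Two ingredients are needed: a per-coordinate one-swap envelope $\tilde H_i$ for $\phi$, and a bound on the bias $\mathbb{E}\phi(S)$. Once both are in hand, we work on the event $\{\phi-\mathbb{E}\phi\le y\}$ and use $\phi\le y+\mathbb{E}\phi$. The one-sided probability is then at most the two-sided tail of Theorem~\ref{thm:sample_w/o_replacement_nageav}, with $V_q$ built from $\sum_i\|\tilde H_i\|_q^q$.

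\textbf{Step 1 (increment).} Repeat the decomposition in the proof of Lemma~\ref{lem:diff_one_phi}, now under Assumption~\ref{ass:trans_hi}. Swap $x_i\in S$ with $x_{m+j}\in T$. The $u-1$ unchanged test terms contribute at most $\frac{u-1}{u}H_i(x_i,x_{m+j})$, and the $m-1$ unchanged training terms contribute at most $\frac{m-1}{m}H_i$. For the two exchanged points, add and subtract $\ell(h,\cdot)$ evaluated at the swapped point. This produces one further stability term, at most $\min\{1/u,1/m\}H_i$, and one class term with coefficient $|1/u-1/m|$ controlled by Assumption~\ref{ass :trans_bound}, giving $|1/u-1/m|\beta'G(x_i,x_{m+j})$. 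Collecting,
\[
|\phi(S)-\phi(S')|\le 2H_i(x_i,x_i')+\Big|\frac1m-\frac1u\Big|\beta'G(x_i,x_i')=:\tilde H_i(x_i,x_i').
\]
The finiteness $\|\tilde H_i\|_p<\infty$ follows from Minkowski's inequality. Substituting $\tilde H_i$ into $V_q$ of Theorem~\ref{thm:sample_w/o_replacement_nageav} gives exactly the stated $V_q$.

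\textbf{Step 2 (bias).} Follow Lemma~\ref{lem:expectation_phi}. By exchangeability of sampling without replacement, $\mathbb{E}\ell(h_S,x_{m+j})$ does not depend on $j$, and $\mathbb{E}\ell(h_S,x_i)$ does not depend on $i$. Hence $\mathbb{E}\phi=\mathbb{E}\ell(h_S,x_{m+1})-\mathbb{E}\ell(h_S,x_i)$. Next, rename the test point as a training point by swapping it with coordinate $i$; the pair $(S^{(i)},x_i)$ has the same law as $(S,x_{m+1})$. This yields $\mathbb{E}\phi=\frac1m\sum_i\mathbb{E}[\ell(h_{S^{(i)}},x_i)-\ell(h_S,x_i)]$, and Assumption~\ref{ass:trans_hi} bounds each summand by $\mathbb{E}H_i(x_i,x_i')$. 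This is the shift appearing in the statement.

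\textbf{Main obstacle.} The main technical point is justifying the exchangeability step in Step 2 when $A$ is not assumed symmetric and the $H_i$ are coordinate-dependent. Since the swap uses coordinate $i$ specifically, the averaging must be over $i$ rather than over arbitrary positions. I would handle this by writing the average over training positions explicitly before swapping, so that no symmetry of $A$ is needed. A smaller issue is that the exponential term appears without the factor $2$; this is harmless because the one-sided tail only needs half of the two-sided bound's contribution, or one simply keeps the factor $2$.
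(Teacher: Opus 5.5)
Your plan has the same structure as the paper's proof: a one-swap envelope for $\phi$, a bias bound, then Theorem~\ref{thm:sample_w/o_replacement_nageav}. However, Step~1 fails, and it cannot be repaired while keeping the stated $V_q$.

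Write $s=x_i$ for the point leaving $S$ and $t=x_{m+j}$ for the point entering it. The two exchanged points contribute to $\phi(S)-\phi(S')$ the quantity
\begin{align*}
E&=\frac1u\bigl[\ell(h,t)-\ell(h',s)\bigr]-\frac1m\bigl[\ell(h,s)-\ell(h',t)\bigr]\\
&=\Bigl(\frac1u+\frac1m\Bigr)\bigl[\ell(h,t)-\ell(h,s)\bigr]+\frac1u\bigl[\ell(h,s)-\ell(h',s)\bigr]-\frac1m\bigl[\ell(h,t)-\ell(h',t)\bigr].
\end{align*}
So the class term carries the coefficient $\frac1u+\frac1m$, not $|\frac1u-\frac1m|$. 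No add-and-subtract can do better. If $A$ ignores the data, then $h=h'$, $H_i\equiv0$ is admissible in Assumption~\ref{ass:trans_hi}, and $E=(\frac1u+\frac1m)[\ell(h,t)-\ell(h,s)]$ exactly. With $u=m$ your $\tilde H_i$ would then vanish identically, even though $\phi$ is not constant. The paper's proof of Lemma~\ref{lem:diff_one_phi}, which you are mirroring, makes the same sign slip when it groups the exchanged terms into stability terms.

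In fact the statement as written is false. Take $N=2$, $m=u=1$, and a constant hypothesis whose losses on the two points are $0$ and $1$. Set $\beta'=1$, $G(x,x')=|\ell(h,x)-\ell(h,x')|$, and $H_1\equiv\varepsilon$. Then $R(h)-\hat R(h)=\pm1$ with probability $\tfrac12$ each. At $y=\tfrac12$ the left-hand side equals $\tfrac12$ whenever $\varepsilon\le\tfrac12$. Meanwhile $V_q=(3/2)^{q-1}(1+\log 2)(2\varepsilon)^q\to0$, which drives the right-hand side to $0$.

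The correct envelope is $\tilde H_i=2H_i+(\frac1m+\frac1u)\beta'G$. With $|\frac1m-\frac1u|$ replaced by $\frac1m+\frac1u$ in $V_q$, the rest of your plan goes through. Step~2 works without symmetry of $A$ once you average over $i$ as you propose. Strictly, the swapped-in point there is a uniformly chosen test point distinct from $x_i$, so $x_i'$ in the shift must be read that way rather than as an independent copy.
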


\begin{proof}[\bf Proof of Theorem~\ref{thm:sample_w/o_replacement_nageav}]
We do the similar decomposition as in Proof of Theorem~\ref{thm:sample_w/o_replacement_nageav}
    \begin{equation}
        G_i(x_i):=\mathbb{E}[\varphi\mid \mathcal{F}_{i-1},x_i],
    \end{equation}
    and
    \begin{equation}
        D_i = G_i(x_i)-\mathbb{E}[G_i(x_i)\mid \mathcal{F}_{i-1}].
    \end{equation}

Let $R_{i-1}:=X\setminus\{x_1,\dots,x_{i-1}\}$ denote the remaining pool after the first $i-1$ draws,
so $|R_{i-1}|=N-i+1$. Conditional on $\mathcal{F}_{i-1}$, $x_i$ is uniform on $R_{i-1}$. Denote $\mathrm{x}_{i=k}^\ell=\{x_k,x_{k+1},\dots,x_\ell\}$ and $S_1^m$ a sequence of random variables $S_1,\dots,s_m$. We write $S_i^m=\mathrm{x}_i^m$ as a shorthand for the m equalities $S_i=x_i,i=1,\dots,m$ and $\mathbb{P}(\mathrm{x}_{i+1}^m|\mathrm{x}_1^{i-1},x_i)=\mathbb{P}(S_{i+1}^m=\mathrm{x}_{i+1}^m|S_1^{i-1}=\mathrm{x}_1^{i-1},S_i=x_i)$.
Let $x_i'$ be an independent copy of $x_i$ conditional on $\mathcal{F}_{i-1}$, which means conditional on $\mathcal{F}_{i-1}$,
$x_i'$ is also uniform on $R_{i-1}$ and independent of $x_i$.
Then $\mathbb{E}[G_i(x_i)\mid \mathcal{F}_{i-1}]=\mathbb{E}[G_i(x_i')\mid \mathcal{F}_{i-1}]$.
Conditioning further on $(\mathcal{F}_{i-1},x_i)$ gives the key identity
\begin{equation}
\label{eq:Di_pairwise_identity}
D_i
=
\mathbb{E}\left[G_i(x_i)-G_i(x_i')\mid \mathcal{F}_{i-1},x_i\right].
\end{equation}

Then, consider fix $\mathcal{F}_{i-1}$ and fix $x_i,x_i'\in R_{i-1}$. Under $\mathcal{F}_{i-1}$ and $S_i=x_i$, the remaining coordinates $(S_{i+1},\dots,S_m)$ are sampled
uniformly without replacement from $R_{i-1}\setminus\{x_i\}$; similarly under $S_i=x_i'$ they are sampled
from $R_{i-1}\setminus\{x_i'\}$. By definition of conditional expectation, we have
\begin{equation}
    D_i=\sum_{\mathrm{x}_{i+1}^{m}}\phi(\mathrm{x}_1^{i-1},x_i,\mathrm{x}_{i+1}^m)\mathbb{P}(\mathrm{x}_{i+1}^m|\mathrm{x}_1^{i-1},x_i)-\sum_{\mathrm{x'}_{i+1}^{m}}\phi(\mathrm{x}_1^{i-1},x_i',\mathrm{x'}_{i+1}^m)\mathbb{P}(\mathrm{x'}_{i+1}^m|\mathrm{x}_1^{i-1},x_i').
\end{equation}

Note that
\begin{equation}
    \mathbb{P}(\mathrm{x}_{i+1}^m|\mathrm{x}_1^{i-1},x_i)=\prod_{k=i}^{m-1}\frac{1}{N-i}=\frac{u!}{(N-i)!}=\mathbb{P}(\mathrm{x'}_{i+1}^m|\mathrm{x}_1^{i-1},x_i'),
\end{equation}
we can obtain
\begin{equation}
    D_i=\frac{u!}{(N-i)!}\bigg(\sum_{\mathrm{x}_{i+1}^{m}}\phi(\mathrm{x}_1^{i-1},x_i,\mathrm{x}_{i+1}^m)-\sum_{\mathrm{x'}_{i+1}^{m}}\phi(\mathrm{x}_1^{i-1},x_i',\mathrm{x}_{i+1}^m)\bigg).
\end{equation}

Consider the sequence of $\mathrm{x}_{i+1}^m$ and $\mathrm{x'}_{i+1}^m$, We can find bijection relationship among the two sequence: i) $\mathrm{x}_j=x_i',\mathrm{x'}_j=x_i$ and $\mathrm{x}_{i+1}^m\setminus \mathrm{x}_j=\mathrm{x'}_{i+1}^m\setminus \mathrm{x'}_j$; ii) $x_i'\notin\mathrm{x}_{i+1}^m,x_i\notin\mathrm{x'}_{i+1}^m$ and $\mathrm{x}_{i+1}^m=\mathrm{x'}_{i+1}^m$.

For the first case, for any fixed $j\in[i+1,m]$, the number of the permutations are $(m-i-1)!\binom{N-i-1}{m-i-1}$, thus we have
\begin{align}
    &\bigg|\sum_{j=i+1}^m\Big(\sum_{\mathrm{x}_{i+1}^m\setminus \mathrm{x}_j=\mathrm{x'}_{i+1}^m\setminus \mathrm{x'}_j}\phi(\mathrm{x}_1^{i-1},x_i,\mathrm{x}_{i+1}^{j-1},x_i',\mathrm{x}_{j+1}^m)-\sum_{\mathrm{x}_{i+1}^m\setminus \mathrm{x}_j=\mathrm{x'}_{i+1}^m\setminus \mathrm{x'}_j}\phi(\mathrm{x}_1^{i-1},x_i',\mathrm{x'}_{i+1}^{j-1},x_i,\mathrm{x'}_{j+1}^m)\Big)\bigg|\notag\\
    \leq & \sum_{j=i+1}^m(m-i-1)!\binom{N-i-1}{m-i-1}\bigg|\phi(\mathrm{x}_1^{i-1},x_i,\mathrm{x}_{i+1}^{j-1},x_1',\mathrm{x}_{j+1}^m)-\phi(\mathrm{x}_1^{i-1},x_i',\mathrm{x'}_{i+1}^{j-1},x_i,\mathrm{x'}_{j+1}^m)\Big)\bigg|\notag\\
    \leq & \sum_{j=i+1}^m(m-i-1)!\binom{N-i-1}{m-i-1}\bigg(H_i(x_i,x_i')+H_j(x_i,x_i')\bigg).
\end{align}

For the second case, the number of those permutations are $(m-i)!\binom{N-i-1}{m-i}$, thus we can claim that,
\begin{align}
    &\sum_{\mathrm{x}_{i+1}^m:x_i'\notin\mathrm{x}_{i+1}^m}\phi(\mathrm{x}_1^{i-1},x_i,\mathrm{x}_{i+1}^m)-\sum_{\mathrm{x'}_{i+1}^{m}:x_i\notin\mathrm{x'}_{i+1}^{m}}\phi(\mathrm{x}_1^{i-1},x_i',\mathrm{x}_{i+1}^m)\notag\\
    =&(m-i)!\binom{N-i-1}{m-i}|\phi(\mathrm{x}_1^{i-1},x_i,\mathrm{x}_{i+1}^m)-\phi(\mathrm{x}_1^{i-1},x_i',\mathrm{x}_{i+1}^m)|\notag\\
    \leq&(m-i)!\binom{N-i-1}{m-i}H_i(x_i,x_i').
\end{align}

Combining above result, we can derive
\begin{align}
\label{eq:Gi_pairwise_bound}
|D_i|\leq& \frac{u!}{(N-i)!}\bigg[(m-i)!\binom{N-i-1}{m-i}H_i(x_i,x_i')+\sum_{j=i+1}^m(m-i-1)!\binom{N-i-1}{m-i-1}\bigg(H_i(x_i,x_i')+H_j(x_i,x_i')\bigg)\bigg]\notag\\
\leq &\frac{u}{N-i}H_i(x_i,x_i')+\sum_{j=i+1}^m\frac{1}{N-i}\bigg(H_i(x_i,x_i')+H_j(x_i,x_i')\bigg)\notag\\
=&H_i(x_i,x_i')+\sum_{j=i+1}^m\frac{1}{N-i}H_j(x_i,x_i')
\end{align}

Then, by similar proof as Theorem~\ref{theorem_highq}, we have
\begin{equation}
    \mathbb{P}\bigg(\sum_{i=1}^mD_i>z\bigg)\leq 3 \left(1+\frac{2}{p} \right)^p\frac{S_p}{z^p}+\exp\bigg(-\frac{2}{(p+2)^2e^p}\frac{z^2}{S_2}\bigg)
\end{equation}

where
\begin{equation}
    S_p=\sum_{i=1}^m\mathbb{E}
    \left|H_i(x_i,x_i')+\sum_{j=i+1}^m\frac{1}{N-i} H_j(x_i,x_i') \right|^p.
\end{equation}

By H\"{o}lder's inequality, we can obtain
\begin{align}
    \mathbb{E} \left|H_i(x_i,x_i')+\sum_{j=i+1}^m\frac{1}{N-i} H_j(x_i,x_i') \right|^p\leq & \left(1+\sum_{j=i+1}^m\frac{1}{N-i} \right)^{p-1}\bigg[\mathbb{E}|H_i(x_i,x_i')|^p+\bigg(\frac{1}{N-i}\bigg)\sum_{j=i+1}^m \mathbb{E}|H_j(x_i,x_i')|^p\bigg]\notag\\
    \leq & \left(1+\frac{m}{N} \right)^{p-1}\bigg[\mathbb{E}|H_i(x_i,x_i')|^p+\bigg(\frac{1}{N-i}\bigg)\sum_{j=i+1}^m \mathbb{E}|H_j(x_i,x_i')|^p\bigg].
\end{align}

By taking sum over $1$ to $m$,
\begin{align}
    S_p\leq &\sum_{i=1}^m \bigg(1+\frac{m}{N}\bigg)^{p-1}\bigg[\mathbb{E}|H_i(x_i,x_i')|^p+\bigg(\frac{1}{N-i}\bigg)\sum_{j=i+1}^m \mathbb{E}|H_j(x_i,x_i')|^p\bigg]\notag\\
    \leq & \Big(1+\frac{m}{N}\Big)^{p-1} \sum_{i=1}^m\bigg[\mathbb{E}|H_i(x_i,x_i')|^p+\sum_{j=1}^m\mathbb{E}|H_j(x_j,x_j')|^p\bigg(\sum_{i=j-1}^m\frac{1}{N-i}\bigg)\bigg]\notag\\
    \leq & \Big(1+\frac{m}{N}\Big)^{p-1} \bigg[\sum_{i=1}^m\mathbb{E}|H_i(x_i,x_i')|^p+\sum_{j=1}^m\mathbb{E}|H_j(x_j,x_j')|^p\times\log\bigg(\frac{N}{u}\bigg)\bigg]\notag\\
    \leq &\Big(1+\frac{m}{N}\Big)^{p-1}\bigg(1+\log\bigg(\frac{N}{u}\bigg)\bigg)\sum_{i=1}^m \mathbb{E}|H_i(x_i,x_i')|^p
\end{align}

%\begin{align*}
    %\mathbb{E} \left|H_i(x_i,x_i')+\sum_{j=i+1}^m\frac{1}{N-i} H_j(x_i,x_i') \right|^p\leq & 2^{p-1}\mathbb{E}|H_i|^p + 2^{p-1} \mathbb{E} \left|\sum_{j=i+1}^m\frac{1}{N-i} H_j(x_i,x_i') \right|^p\\
    %\leq & 2^{p-1}\mathbb{E}|H_i|^p + 2^{p-1}\frac{(m-i)^{p - 1}}{(N-i)^p} \sum_{j=i+1}^m\mathbb{E} \left|H_j(x_i,x_i') \right|^p
%\end{align*}

%By taking sum over $1$ to $m$,

%\begin{align}
    %S_p&\leq2^{p-1}\sum_{i=1}^m\bigg[2^{p-1}\mathbb{E}|H_i|^p + 2^{p-1}\frac{(m-i)^{p - 1}}{(N-i)^p} \sum_{j=i+1}^m\mathbb{E} \left|H_j(x_i,x_i') \right|^p\bigg]
%\end{align}

The proof is completed.

\end{proof}

%\subsection{Proof of Lemma~\ref{lem:diff_one_phi}}
% Before concluding the proof of Theorem \ref{thm:trans_risk_bound}, we collect the proofs of Lemmas \ref{lem:diff_one_phi} and \ref{lem:expectation_phi}.

\begin{proof}[\bf Proof of Theorem~\ref{thm:trans_risk_bound}]
    The result follows directly from Theorem~\ref{thm:sample_w/o_replacement_nageav} and similar versions of Lemmas~\ref{lem:diff_one_phi} and \ref{lem:expectation_phi}.
\end{proof}

%\newpage

\subsection{Proof of Results in Section \ref{subsec:meta}: Meta-Learning} \label{se:eta-proof}
\begin{proof}[\bf Proof of Theorem \ref{thm:lp_sample_level_meta}]
   In the following, we suppress $\beta, \beta'$ and $\beta''$ inside the functions $H, G$ and $\Mcal$. We follow a similar, but slightly more convoluted strategy compared to that of Theorem \ref{thm:emp_loo}. Observe that, for $k\in [m], l\in [n]$,it holds almost surely that
   \begin{align}
       & \hspace{0.6cm} mn  \big| R(\AS, \IS) - R(A(\IS^{(k,l)}), \IS^{(k,l)}) \big| \nonumber\\
       &\leq \sum_{j\neq k}^m\sum_{i=1}^n |\ell(\AS(\Scal_j), z_j^i) - \ell(A(\IS^{(k,l)})(\Scal_j), z_j^i)| + \sum_{i\neq l}^n |\ell(\AS(\Scal_k), z_k^i) - \ell(A(\IS^{(k,l)})(\Scal_k^{(l)}), z_k^i)| \nonumber\\ & \hspace*{5cm} + |\ell(\AS(\Scal_k), z_k^l) - \ell(A(\IS^{(k,l)})(\Scal_k^{(l)}), z_k^{l'})| \nonumber\\
       &\overset{(a)}{\leq} (m-1)n \ H(z_k^l, z_k^{l'}) + \sum_{i\neq l}^n \bigg( |\ell(\AS(\Scal_k), z_k^i) - \ell(A(\IS)(\Scal_k^{(l)}), z_k^i)| +  |\ell(A(\IS)(\Scal_k^{(l)}), z_k^i) - \ell(A(\IS^{(k,l)})(\Scal_k^{(l)}), z_j^i)|  \bigg) \nonumber\\ & \hspace*{2cm} + |\ell(\AS(\Scal_k), z_k^l) - \ell(A(\IS^{(k,l)})(\Scal_k), z_k^{l})| + |\ell(A(\IS^{(k,l)})(\Scal_k), z_k^l) - \ell(A(\IS^{(k,l)})(\Scal_k^{(l)})), z_k^{l})|\nonumber\\ & \hspace*{2cm}+ |\ell(A(\IS^{(k,l)})(\Scal_k^{(l)}), z_k^l) - \ell(A(\IS^{(k,l)})(\Scal_k^{(l)})), z_k^{l'})| \nonumber\\
       & \overset{(b)}{\leq} mn \ H(z_k^l, z_k^{l'}) + n \ G(z_k^l, z_k^{l'}) + \mathcal{M}(z_k^l, z_k^{l'}), \label{eq:stab-bdd-diff-1}
   \end{align}
   where $(a)$ follows from \eqref{eq:stab-task}, and $(b)$ follows from \eqref{eq:stab-samp} and \eqref{eq:stab-test} . On the other hand, it is trivial to note that 
   \begin{align}
       |R(\AS, \mu)- R(A(\IS^{(k,l)}), \mu)|\leq H(z_k^l, z_k^{l'}) \ \text{almost surely}. \label{eq:stab-bdd-diff-2}
   \end{align}
   Let $g \equiv R(\AS, \IS) - R(\AS, \mu)$, and  denote by $g_{k,l} \equiv R(A(\IS^{(k,l)}), \IS^{(k,l)})- R(A(\IS^{(k,l)}), \mu)$. Combining \eqref{eq:stab-bdd-diff-1}-\eqref{eq:stab-bdd-diff-2} provides that  
   \begin{align}
       |g - g_{k,l}|\leq 2 \ H(z_k^l, z_k^{l'}) + \frac{1}{m} \ G(z_k^l, z_k^{l'}) + \frac{1}{mn} \mathcal{M}(z_k^l, z_k^{l'}).
   \end{align}
   To deliver the coup de grâce via an application of Theorem \ref{theorem_highq}, we are also required to control $\IE_{\IS}[g]$. To that end, we proceed as follows. Let $\IS'=\{\Scal_1', \ldots, \Scal_m'\}$ be an i.i.d. copy of $\IS$, and $\IS^{(j)}:=\{\Scal_1, \ldots, \Scal_{j-1}, \Scal'_j, \Scal_{j+1}, \ldots, \Scal_m\}$. For each $i,j$, let $\IS'':=\{z_j^{i''}\}$ be an i.i.d. copy of $\{z_j^i\}$, independent of $\IS$ and $\IS'$. Let $\Scal_j^{(i)}=\{z_j^1, \ldots, z_j^{i-1}, z_j^{i''}, z_j^{i+1}, \ldots, z_j^n\}$.  
   \begin{align}
       & \hspace{0.6cm} \IE_{\IS, \Scal, z}\big[\frac{1}{mn} \sum_{j=1}^m \sum_{i=1}^n \big( \ell(\AS(\Scal_j), z_j^i) -  \ell(\AS(\Scal), z)\big)\big] \nonumber\\
       &= \IE_{\IS, \IS', \IS'' }\big[\frac{1}{mn} \sum_{j=1}^m \sum_{i=1}^n \big( \ell(\AS(\Scal_j), z_j^i) -  \ell(A(\IS^{(j)})(\Scal_j^{(i)}), z_j^i)\big)\big] \nonumber\\
       &= \IE_{\IS, \IS', \IS''}\big[\frac{1}{mn} \sum_{j=1}^m \sum_{i=1}^n \big( \ell(\AS(\Scal_j), z_j^i) - \ell(A(\IS^{(j)})(\Scal_j), z_j^i) + \ell(A(\IS^{(j)})(\Scal_j), z_j^i) -  \ell(A(\IS^{(j)})(\Scal_j^{(i)}), z_j^i)\big)\big] \nonumber\\
       & \leq \IE_{\IS, \IS', \IS''}[\frac{1}{mn}\sum_{j=1}^m\sum_{i=1}^n H(z_j^i, z_j^{i'})  + \frac{1}{n} \sum_{i=1}^n G(z_j^i, z_j^{i''}) ] \nonumber\\
       &= \IE_{z, z'\sim \Dcal, \Dcal \sim \mu} [ H(z, z') +   G(z, z')].
   \end{align}
   Therefore, from Theorem \ref{theorem_highq}, it follows that
   \begin{align}
       \IP\bigg( R(\AS, \IS) - R(\AS, \mu) \geq y + \IE_{z, z\overset{i.i.d.}{\sim} \Dcal, \Dcal \sim \mu}[ H(z, z') +   G(z, z')] \bigg) \leq & c_1 \frac{L_{p1}}{y^p} + 2 \exp\big(-c_2 \frac{y^2}{L_{p2}}\big), \label{eq:thm-appl}
   \end{align}
   where 
   \begin{align}
       L_{p1}&:= \sum_{k=1}^m\sum_{l=1}^n \IE[\big(2  \ H(z_k^l, z_k^{l'}) + \frac{1}{m}\ G(z_k^l, z_k^{l'}) + \frac{1}{mn} \mathcal{M}(z_k^l, z_k^{l'})\big)^p] , \text{ and,}  \nonumber \\
       L_{p2}&:= \sum_{k=1}^m\sum_{l=1}^n \IE[\big(2  \ H(z_k^l, z_k^{l'}) + \frac{1}{m} \ G(z_k^l, z_k^{l'}) + \frac{1}{mn} \mathcal{M}(z_k^l, z_k^{l'})\big)^2]. \nonumber
   \end{align}
Evidently, using H\"{o}lder's inequality, it follows that
\begin{align}
    L_{p1} &\leq C_p (mn \IE[H^p] + \frac{n}{m^{p-1}} \IE[G^p]+ \frac{1}{(mn)^{p-1}}\IE[\mathcal{M}^p]) \nonumber\\
    L_{p2} &\leq C_2 (mn \IE[H^2] + \frac{n}{m} \IE[G^2]+ \frac{1}{mn}\IE[\mathcal{M}^2]) \nonumber,
\end{align}
which completes the proof in light of \eqref{eq:thm-appl}. 
\end{proof}

\section{Numerical Experiments} \label{se:simu-app}
 In this section, we provide empirical studies highlighting the tightness of our theoretical bounds in Sections \ref{subsec:ERM} and \ref{subsec:Trans}. It is imperative we describe the general recipe of our experiments before going into the details. Since the applications in Section \ref{se:appl} stem from the key theoretical result Theorem \ref{theorem_highq}, let us presume that the bounds in \eqref{eq:theorem_highq} are tight up to some constant. Therefore, when $z$ is large, the polynomial term $z^{-p}$ dominates in the decay, and from \eqref{eq:theorem_highq}, it follows that
 \begin{align}
     \frac{\mathbb{P}(|f(x_1,x_2,\dots,x_n)|-\mathbb{E}[f(x_1,x_2,\dots,x_n)]|>z)}{\mathbb{P}(|f(x_1,x_2,\dots,x_n)|-\mathbb{E}[f(x_1,x_2,\dots,x_n)]|> C_0z)} \approx C_0^p. \label{eq:poly-tail}
 \end{align} 
On the other hand, in the absence of the polynomial term $z^{-p}$ or when $z$ is small, the sub-gaussian term dominates the tail probability, and consequently we should expect 
\begin{align}
     \frac{\mathbb{P}(|f(x_1,x_2,\dots,x_n)|-\mathbb{E}[f(x_1,x_2,\dots,x_n)]|>z)}{\mathbb{P}(|f(x_1,x_2,\dots,x_n)|-\mathbb{E}[f(x_1,x_2,\dots,x_n)]|> C_0z)} \approx 2 \exp(C_1 z^2), \label{eq:exp-tail}
\end{align} 
where $C_1$ is some constant depending on $p$, $\IE[H_i]$ and $C_0$. The twin observations \eqref{eq:poly-tail} and \eqref{eq:exp-tail} informs our subsequent discussion, whereby any deviations from the expected behaviors would indicate our bounds are not sharp. In particular,  we analyze the setting corresponding to Sections \ref{subsec:ERM} and \ref{subsec:Trans} in the following Sections \ref{se:simu-ERM} and \ref{se:simu-Trans}, respectively.

\subsection{Simulations for empirical risk minimization}\label{se:simu-ERM}

Consider i.i.d. observations $S:=\{z_i=(x_i, y_i)\}_{i=1}^m \in \R^{d}\times \R$ from a linear model $Y= X\beta + \varepsilon$, where the errors $\varepsilon \overset{d}{=} U_1^{-1/\nu} - U_2^{-1/\nu}$, $U_1, U_2 \overset{i.i.d.}{\sim} U[0,1]$, is drawn from the distribution of the difference of two independent Pareto (type I) random variables. Note that here $p= \nu/2$. For our analysis, we take $d=5$, $\beta=(1,1,\ldots, 1)^\top$, and vary $m\in \{500, 1000\}$, and $\nu=\{2.2, 4.4\}$. For the scaling parameter $C_0$ in \eqref{eq:poly-tail}, we choose $C_0=1.5$.

To incorporate stability in our analysis, we resort to a ridge-regression with regularization parameter $\lambda=1.0$; formally, let 
\[ \hat{\beta}= (X^\top X + \lambda I_d)^{-1} X^\top Y, \ X=(x_1: \ldots : x_m)^\top, Y=(y_1, \ldots, y_m).  \]
Note that $\hat{\beta}$ plays the role of $A_S$ in Section \ref{subsec:ERM}. Here we consider $(x_i)_{i=1}^m \overset{i.i.d.}{\sim}N(0, \Sigma)$, $\Sigma_{ij}=0.3^{|i-j|}$.
We consider $\ell$ to be the squared error loss, so that 
\[ R(A, S)= \IE_{x, y}[(y- x^\top \hat{\beta} )^2], \quad R_{emp}(A,S)= \frac{1}{m}\sum_{i=1}^m (y_i - x_i^\top \hat{\beta})^2.  \]
 Elementary calculations show that Assumption \ref{ass:gen_lp} is satisfied with probability approaching $1$ for $H(z_i, z_i') =  |x_i y_i- x_i'y_i'|$ and $\beta=\frac{\log m}{m}$. Since $\beta$ is extremely small for large values of $m$, we may be excused for comparing $p(y):=\frac{\IP(|R- R_{emp}|>y)}{\IP(|R-R_{emp}|> C_0 y)}$ against $C_0^{\nu/2}$. The tail probabilities are empirically estimated via $50,000$ Monte Carlo draws.  
\begin{figure}[htbp]
  \centering
  \begin{minipage}[t]{0.48\textwidth}
    \centering
    \includegraphics[width=\linewidth]{ERM_2.2.png}
    \caption*{(a) $\nu=2.2$}
  \end{minipage}\hfill
  \begin{minipage}[t]{0.48\textwidth}
    \centering
    \includegraphics[width=\linewidth]{ERM_4.4.png}
    \caption*{(b) $\nu=4.4$}
  \end{minipage}
  \caption{Plot of $p(y)$ versus $y$; both curves stabilize around $C_0^{\nu/2}$ for large $y$.}
  \label{fig:two-images}
\end{figure}
In Figure \ref{fig:two-images}, the ratio $p(y)$ exhibits an initial exponential growth before slowing down and stabilizing near $C_0^{\nu/2}$, further vindicating the behavior typified in \eqref{eq:poly-tail} in light of Theorem \ref{thm:emp_loo}. For larger $\nu$ (e.g., $\nu = 4.4$), the Gaussian tail dominates the polynomial tail more strongly at smaller values of $y$. Consequently, $p(y)$ may exceed the threshold $C_0^{\nu/2}$ initially, before stabilizing in the large-$y$ regime. This exhibits the tightness of our results.

For $\nu=2.2$, we also fix $y=30$, and plot $p(y)$ for $m\in \{500,600, \ldots, 2000\}$. From Figure \ref{fig:ratio_vs_m}, it is evident that
$p(y)$ remains quite stable around $C_0^{\nu/2}$ across a wider range of $m$ at fixed $y$. This confirms that the ratio is independent of $m$ for large $m$, justifying our original focus on varying $y$ to identify the sub-Gaussian and polynomial tails.

\begin{figure}
    \centering
    \includegraphics[width=0.5\linewidth]{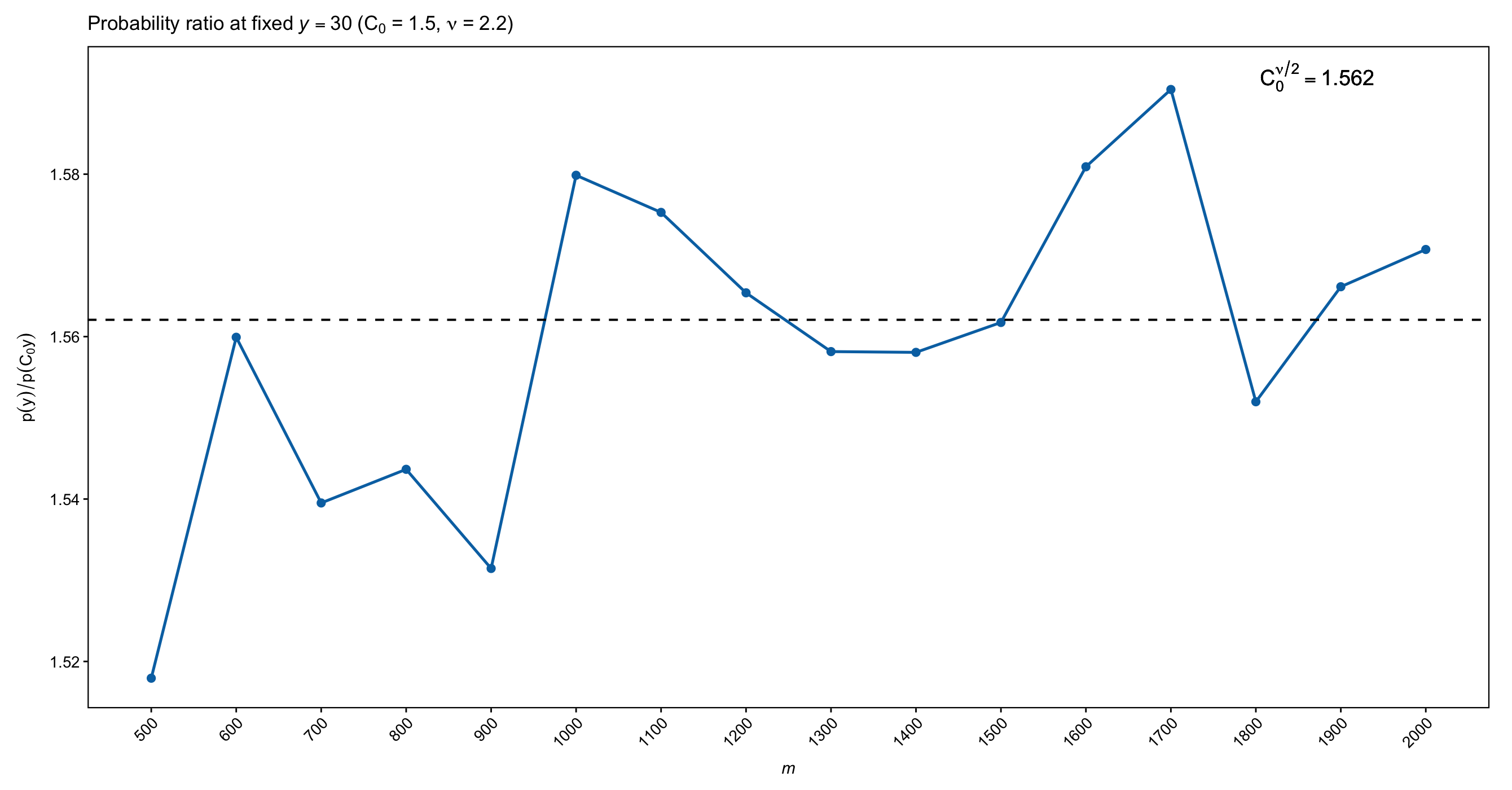}
     \caption{Plot of $p(y)$ versus $m$.}
    \label{fig:ratio_vs_m}
\end{figure}

\subsection{Simulations for transductive regression algorithms}\label{se:simu-Trans}
For the numerical studies involving transductive regression, we maintain the experimental set-up of ridge regression from Section \ref{se:simu-ERM}. Borrowing the notations of Section \ref{subsec:Trans}, let $S$ and $T$ denote the training and test set respectively, with $|S|=m,$ and $|T|=u$. For the purpose of this experiment, we take $m=u$, and vary $m\in \{500, 1000\}$. Similar to Theorem \ref{thm:trans_risk_bound}, we are concerned with 
\[\hat{R}(h)=\frac{1}{m}\sum_{i\in S} (y_i -  x_i^\top \hat{\beta})^2,
\qquad
R(h)=\frac{1}{m}\sum_{i\in T}(y_i -  x_i^\top \hat{\beta})^2. \]
Note that here $\hat{\beta}$ is trained via ridge regression with $\lambda=1.0$ \textit{only} on the training sample $S$. Similar to Section \ref{se:simu-ERM}, tail probabilities are empirically estimated via $50,000$ Monte Carlo draws. Figure \ref{fig:trans-two-images} showcases the corresponding line plots of $p(y)$ versus $y$, where $p(y):= \frac{\IP(|R- \hat{R}_{}|>y)}{\IP(|R-\hat{R}_{}|> C_0 y)}$. The stabilization of the curve $p(y)$ around the threshold  $C_0^{\nu/2}$ is evident for this example as well, exhibiting the sharpness of our bounds.
\begin{figure}[htbp]
  \centering
  \begin{minipage}[t]{0.48\textwidth}
    \centering
    \includegraphics[width=\linewidth]{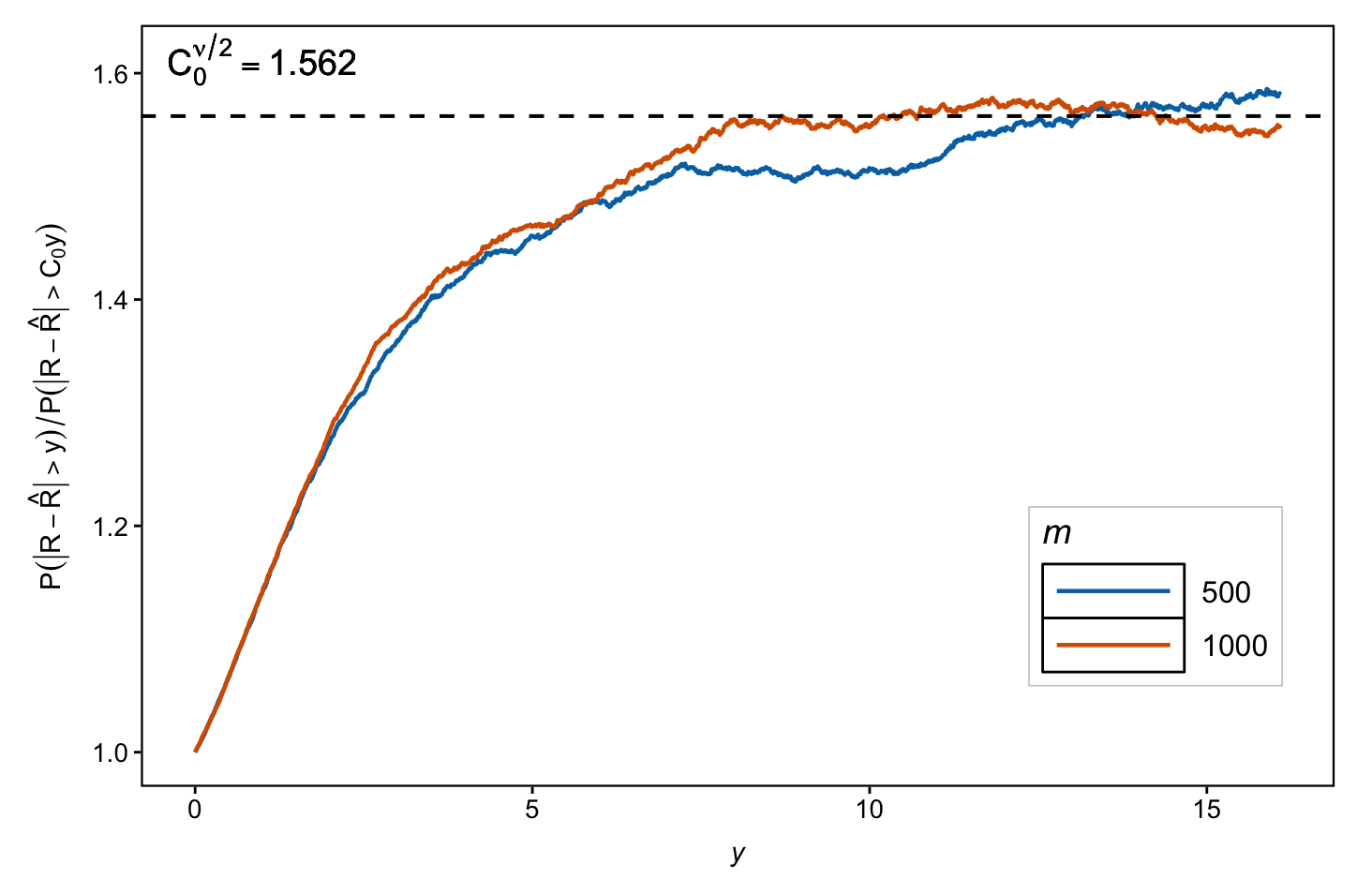}
    \caption*{(a) $\nu=2.2$}
  \end{minipage}\hfill
  \begin{minipage}[t]{0.48\textwidth}
    \centering
    \includegraphics[width=\linewidth]{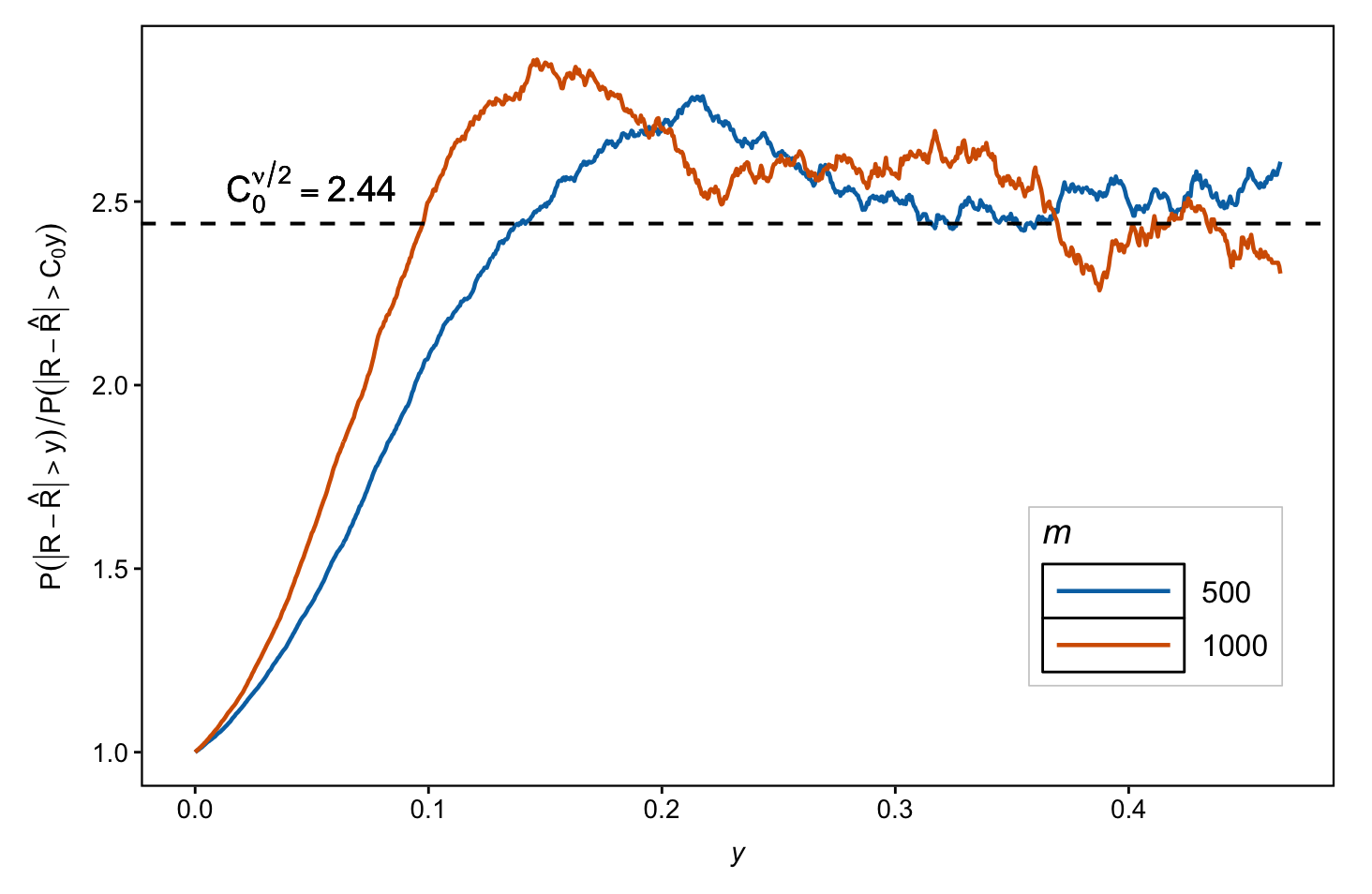}
    \caption*{(b) $\nu=4.4$}
  \end{minipage}
  \caption{Plot of $p(y)$ versus $y$ for the transductive regression problem in Section \ref{subsec:Trans}.}
  \label{fig:trans-two-images}
\end{figure}
%%%%%%%%%%%%%%%%%%%%%%%%%%%%%%%%%%%%%%%%%%%%%%%%%%%%%%%%%%%%%%%%%%%%%%%%%%%%%%%
%%%%%%%%%%%%%%%%%%%%%%%%%%%%%%%%%%%%%%%%%%%%%%%%%%%%%%%%%%%%%%%%%%%%%%%%%%%%%%%

\subsection{Additional experiments: a modern application.} \label{se:nn}

%Providing numerical experiments to validate large deviation probability results has been particularly non-trivial since these bounds do not explicitly track the problem-dependent constants, which are usually unknown anyway. In that context, to the best of our knowledge, our paper provides {\bf one of the first instances} of numerical results substantiating the theory. We achieve this by taking a ratio of the large deviation probabilities, evaluated at $y$ and $C_0y$, which cancels out all the other constants and leaves only a function of $C_0$. 
In principle, any data  with heavy tail would exhibit behavior similar to what we predict in our theory. Nevertheless, as baby steps towards more grounded stability theory, we perform a similar experiment on two layer neural-network for a regression problem. Specifically, we consider the set-up of Section~\ref{se:simu-ERM} in the current manuscript; consider i.i.d. observations $S:=\{z_i=(x_i, y_i)\}_{i=1}^m \in \R^{d}\times \R$ from a linear model $Y= X\beta + \varepsilon$, where the errors $\varepsilon \overset{d}{=} U_1^{-1/\nu} - U_2^{-1/\nu}$, $U_1, U_2 \overset{i.i.d.}{\sim} U[0,1]$, is drawn from the distribution of the difference of two independent Pareto (type I) random variables. Note that here $p= \nu/2$. For our analysis, we take $d=5$, $\beta=(1,1,\ldots, 1)^\top$, and vary $m\in \{500, 1000\}$, and $\nu=\{2.2, 4.4\}$. For the scaling parameter $C_0$, we choose $C_0=1.5$.

    To $S$, we fit a two-layer neural net
    \[ \mathcal{A}_S: x \mapsto W_2 \operatorname{ReLU}(W_1x + b_1) + b_2, \ W_1 \in \R^{32\times d}, W_2^\top \in \R^{32}, b_1 \in \R^{32}, b_2\in \R.  \]
    The neural net is fitted with an Adam optimizer with learning rate $10^{-3}$ and $200$ epochs. Thereafter, we evaluate the empirical loss and the true loss as in Section \ref{subsec:ERM}. The tail probabilities are estimated via $10,000$ Monte Carlo repetitions. Finally, similar to Section~\ref{se:simu-ERM}, we plot $p(y)$ versus $y$, where the desired threshold is $C_0^{\nu/2} \approx 1.562$.

\begin{figure}[h]
    \centering
    \includegraphics[width=0.5\linewidth]{nn_2.2.png}
    \caption{Plot of $p(y)$ versus $y$ for the neural network regression experiment in Section \ref{se:nn}.}
    \label{fig:nn}
\end{figure}

We see that even for a training algorithm such as a Neural net, optimized via Adam, the heavy-tailed nature of the data shines through in distinct characterizations of Gaussian and polynomial tails. As predicted by our theory and can also be seen for the simpler ridge-regression example, $p(y)$ stabilizes at around $C_0^{\nu/2} \approx 1.562$. This corroborates our theory on a prototype of modern ML algorithm. Note that here, the number of parameters being optimized using Adam is also comparable to the sample sizes $m$, which further indicates the applicability of our results in modern, large-dimensional scenarios.

\end{document}